\newtheorem{theorem}{Theorem}
\newtheorem{lemma}{Lemma}
\newtheorem{corollary}{Corollary}
\theoremstyle{definition}
\newtheorem{definition}{Definition}
\newtheorem{assumption}{Assumption}
\newtheorem{remark}{Remark}
\newtheorem{fact}{Fact}
\begin{document}
	
	\title{Discounted Thompson Sampling for Non-Stationary Bandit Problems}
	
	\author{\name  Han Qi \email qihan19@stu.xjtu.edu.cn 
		\AND
		\name Yue Wang \email wy980521@stu.xjtu.edu.cn
	\AND
	\name Li Zhu \email zhuli@xjtu.edu.cn
	\\
		\addr 
		Xi'an Jiaotong University,\\
		School of Software Engineering,\\
		Xianning West Road 28,\\
		Xi'an,  710049, China
	}
	
	\maketitle

	\begin{abstract}
		Non-stationary multi-armed bandit (NS-MAB) problems have recently received significant attention. NS-MAB are typically modelled in two scenarios:  abruptly changing, where reward distributions remain constant for a certain period and  change at unknown time steps,
		and smoothly changing, where reward distributions evolve smoothly based on unknown dynamics.  In this paper, we propose Discounted Thompson Sampling (DS-TS) with Gaussian priors to address both non-stationary settings.  Our algorithm passively adapts to changes by incorporating a discounted factor into Thompson Sampling.  DS-TS method has been experimentally validated, but analysis of the regret upper bound is currently lacking. Under mild assumptions, we show that DS-TS with Gaussian priors can achieve nearly optimal regret bound on the order of  $\tilde{O}(\sqrt{TB_T})$ for abruptly changing and $\tilde{O}(T^{\beta})$ for smoothly changing, where $T$ is the number of time steps, $B_T$ is the number of breakpoints, $\beta$ is associated with the smoothly changing environment and $\tilde{O}$ hides the parameters independent of $T$ as well as logarithmic terms.
		Furthermore, empirical comparisons between  DS-TS  and other non-stationary bandit algorithms demonstrate its competitive performance. Specifically, when prior knowledge of the maximum expected reward is available, DS-TS has the potential to outperform state-of-the-art algorithms.
	\end{abstract}
	
	\section{Introduction}
	\label{Introduction}
	
	
	The multi-armed bandit (MAB) problem is a well-known sequential decision problem. In each time step, the learner must choose an arm (referred to as an action) from a finite set of arms based on previous observations. The learner only receives the reward associated with the chosen action and does not observe the rewards of other unselected actions. The learner's goal is to maximize the expected cumulative reward over time or, alternatively, to minimize the regret incurred during the learning process. Regret is defined as the difference between the expected reward of the optimal arm (the arm with the highest expected reward) and the expected reward achieved by the MAB algorithm being used. Minimizing regret implies approaching the performance of the optimal arm as closely as possible.

	
	MAB has found practical use in various scenarios, with one of the earliest applications being the diagnosis and treatment experiments proposed by \citeA{robbins1952some}. In this experiment, each patient's treatment plan corresponds to an arm in the MAB problem, and the goal is to minimize the patient's health loss by making optimal treatment decisions. Recently, MAB has gained wide-ranging applicability. For example, MAB algorithms have been used in online recommendation systems to improve user experiences and increase engagement \cite{li2011unbiased,bouneffouf2012contextual,li2016collaborative}.  Similarly, MAB has been employed in online advertising campaigns to optimize the allocation of resources and maximize the effectiveness of ad placements \cite{schwartz2017customer}.
	While the standard MAB model assumes fixed reward distributions, real-world scenarios often involve changing distributions over time. For instance, in online recommendation systems, the collected data gradually becomes outdated, and user preferences are likely to evolve \cite{wu2018learning}. This dynamic nature necessitates the development of algorithms that can adapt to these changes, leading to the exploration of non-stationary MAB problems.

	In recent years, extensive research has been conducted on non-stationary multi-armed bandit (MAB) problems. These research efforts can be broadly categorized into two approaches. The first category involves using change-point detection algorithms to identify when the reward distribution changes \cite{liu2018change,cao2019nearly,auer2019adaptively,chen2019new,besson2022efficient}. The second category focuses on passively reducing the influence of past observations \cite{garivier2011upper,raj2017taming,trovo2020sliding,baudry2021limited}.
	The former approach relies on certain assumptions about the changes in the distribution of arms to ensure the effectiveness of the change-point detection algorithm. For example, methods proposed by \citeA{liu2018change} and \citeA{cao2019nearly} require a lower bound on the amplitude of change in the expected rewards for each arm. The latter approach requires fewer assumptions about the characteristics of the changes. These methods often employ techniques such as sliding windows or discount factors to forget past information and adapt to the changing distribution of arms. Frequentist algorithms in both categories provide theoretical guarantees for regret upper bounds.
	However, in the case of Bayesian methods, such as Thompson Sampling, there is a lack of theoretical analysis regarding regret in non-stationary MAB problems, despite these algorithms demonstrating superior or comparable performance to frequentist algorithms in most non-stationary scenarios. To the best of our knowledge, only sliding window Thompson Sampling \cite{trovo2020sliding} has provided regret upper bounds. \citeA{raj2017taming} have explored discounted Thompson Sampling with Bernoulli priors but only derived the probability of selecting a sub-optimal arm in the simple case of a two-armed bandit.      
	      
	
	In this paper, we propose Discounted Thompson Sampling (DS-TS) with Gaussian priors for both abruptly changing and smoothly changing settings. In the former,  the distributions of rewards remain constant during a period of rounds and change at unknown rounds, while in the latter,  the reward distribution evolves smoothly based on unknown dynamics. 
	We adopt a unified method to analyze the regret upper bound for both non-stationary settings. We show that the regret upper bound of DS-TS for abruptly changing settings is $\tilde{O}(\sqrt{TB_T})$, where $T$ is the number of time steps, $B_T$ is the number of breakpoints. This regret bound matches the $\Omega(\sqrt{T})$ lower bound proven by \citeA{garivier2011upper} in an order sense. For the smoothly changing settings, we derive the regret bound of order $\tilde{O}(T^{\beta})$, where $\beta$ measures the number of rounds that the arms' expected rewards are close enough. 	In additional, we compare DS-TS with other non-stationary bandits algorithms empirically. Specially, if we know the information of the maximum of the expected rewards, by tuning the parameter $\tau_{max}$, our algorithm can outperform the state-of-the-art algorithms.

	\section{Related Work}
	
	Non-stationary MAB settings have received attention in the last few years. These methods can be roughly divided into two categories: they detect when the reward distribution changes with change-point detection algorithms or  they passively reduce the effect of past observations. Most of these works can achieve the regret upper bound of $\tilde{O}(\sqrt{T})$.
	
	Many works are based on the idea of forgetting past observations. Discounted UCB (DS-UCB) \cite{kocsis2006discounted,garivier2011upper} uses a discounted factor to average the past rewards. In order to achieve the purpose of forgetting information, the weight of the early reward is smaller. \citeA{garivier2011upper} also propose the sliding-window UCB (SW-UCB) by only using a few recent  rewards to compute the UCB index. They calculate the regret upper bound for DS-UCB and SW-UCB as $\tilde{O}(\sqrt{T B_T})$. EXP3.S, as proposed in \cite{auer2002nonstochastic}, has been shown to achieve the regret upper bound by $ \tilde{O}(\sqrt{TB_T})$. Under the assumption that the total variation of the expected rewards over the time horizon is bounded by a budget $V_T$, \citeA{besbes2014stochastic} introduce REXP3 with regret $\tilde{O}(T^{2/3})$.
	\citeA{combes2014unimodal} propose the SW-OSUB algorithm, specifically for the case of smoothly changing with an upper bound of $\tilde{O}(\sigma^{1/4}T)$, where $\sigma$ is the Lipschitz constant of the evolve process.
	\citeA{raj2017taming} propose the discounted Thompson sampling for Bernoulli priors without providing the regret upper bound. They only  calculate  the probability of picking a sub-optimal arm for the simple case of a two-armed bandit.
	Recently, \citeA{trovo2020sliding} propose the sliding-window Thompson sampling algorithm with regret $\tilde{O}(T^{\frac{1+\alpha}{2}})$ for abruptly changing and $\tilde{O}(T^{\beta})$ for smoothly changing. \citeA{baudry2021limited} propose a novel algorithm named  Sliding Window Last Block Subsampling Duelling Algorithm (SW-LB-SDA) with regret $\tilde{O}(\sqrt{TB_T})$. They only assume that the reward distributions belong to the same one-parameter exponential family for all arms during each stationary phase.
	
	There are also many works that exploit techniques  from the field of  change detection  to deal with reward distributions varying over time.
	\citeA{mellor2013thompson} combine a Bayesian change point mechanism and Thompson sampling strategy to  tackle the non-stationary problem. Their algorithm can detect  global switching and per-arm switching. \citeA{liu2018change} propose a change-detection framework that combines UCB and a change-detection algorithm named CUSUM. They obtain an upper bound for the average detection delay and a lower bound for the average time between false alarms. \citeA{cao2019nearly} propose M-UCB, which is similar to CUSUM but use another  simpler change-detection  algorithm.  M-UCB and CUMSUM are nearly optimal, their regret bounds are $\tilde{O}(\sqrt{TB_T})$.

	Recently, there are also some works deriving regret bounds without knowing the number of changes. For example, \citeA{auer2019adaptively} propose an algorithm called ADSWITCH with optimal regret bound $\tilde{O}(\sqrt{B_TT})$. \citeA{suk2022tracking} improve the work \cite{auer2019adaptively} so that the obtained regret bound is smaller than $\tilde{O}(\sqrt{ST})$, where $S$ only counts the best arms switches.
	
	
	\section{Problem Formulation}
	
	Assume that the non-stationary MAB problem has $K$ arms $\mathcal{A}:=\{ 1,2,...,K\}$ with finite time horizon  $T$. At round $t$, the learner must select an arm $ i_t \in \mathcal{A}$ and  obtain the corresponding reward $X_t(i_t)$. The rewards  are generated  from  different distributions (unknown to the learner) with  bounded  support. Without loss of generality, suppose the support set is $[0,1]$.
	The expectation of $X_t(i)$ is denoted as  $\mu_t(i) =\mathbb{E}[X_t(i)]$. 
	A policy $\pi$ is a function  $\pi(h_t)=i_t$ that selects  arm $i_t$ to play at round $t$.  
	Let $\mu_t(*):=\max_{i\in \{ 1,...,K\}} \mu_t(i)$ denote the expected reward of the optimal arm $i_t^{*}$ at round $t$. 
	Unlike the stationary MAB settings, where an arm is optimal all of the time (i.e. $\forall t \in \{1,...,T\}, i_t^{*}=i^{*}$), while in the non-stationary settings, the optimal arms might change over time.
	The  performance of a policy $\pi$ is measured in terms of cumulative expected regret:
	\begin{equation}
		\label{regret}
		R_T^{\pi}=\mathbb{E}\left[\sum_{t=1}^{T} (\mu_t(*)-\mu_{t}(i_t)) \right],
	\end{equation}
	
	\noindent where $\mathbb{E}[\cdot]$ is the expectation with respect to randomness of $\pi$. 
	Let $\Delta_t(i):=\mu_t(*)-\mu_t(i) $ and 
	let \[k_T(i) :=\sum_{t=1}^{T} \mathbbm{1}\{ i_t=i,i \neq i_t^{*} \}\] denote  the number of plays of arm $i $ when it is not the best arm until time $T$,
	\[ R_T^{\pi} = \sum_{i=1}^{K}\sum_{t=1}^{T}\Delta_t(i) \mathbb{E}[\mathbbm{1}\{i_t=i \}] \leq \sum_{i=1}^{K} \mathbb{E}[k_T(i)].\]
	When we analyze the upper bound of $R_T^{\pi}$, we can directly analyze $\mathbb{E}[k_T(i)]$ to get the upper bound of each arm. 
	Next, we give detailed description of the two non-stationary scenarios.
	
	\noindent {\bfseries{Abruptly Changing.}}
	The abruptly changing settings is  introduced by \citeA{garivier2011upper} for the first time.  
	Suppose the set of \textit{breakpoints} is $\mathcal{B}=\{b_1,...,b_{B_T}\}$ ( we define $b_1=1)
	$.  At each breakpoint, the reward distribution changes for at least one arm. The rounds between two adjacent breakpoints are called  \textit{stationary phase}. In the stationary phase, the reward distribution of all arms does not change.  Different from previous studies  \cite{besbes2014stochastic,liu2018change,cao2019nearly}, which imposed constraints on the  variation of expected rewards, we do not impose constraints on this variation in our settings.  \citeA{trovo2020sliding} makes assumption about the number of breakpoints to facilitate more generalized analysis, while we  explicitly use $B_T$ to represent the number of breakpoints for analysis.

	\noindent {\bfseries{Smoothly Changing.}}
	The smoothly changing setting have been studied by \citeA{combes2014unimodal,trovo2020sliding}. At each time step, the expected reward for each arm varies by no more than $\sigma$ and the learner doesn't have any information on how the rewards evolve. These limitations can be described by the following Lipschitz assumption:
	\begin{assumption}
		\label{assumption_sigma}
		There exits $\sigma>0$, for all $t,t^{'} \geq 1,1\leq i \leq K$, it holds that $| \mu_t(i)-\mu_{t^{'}}(i) | \leq \sigma |t-t^{'}|$.
		
	\end{assumption}

	\section{Discounted Thompson Sampling}
	
	In this section, we propose the Discounted Thompson Sampling algorithm with Gaussian priors for the non-stationary stochastic MAB problems. As in \cite{agrawal2013further}, our algorithm uses an implicit assumption that the likelihood of reward $X_i(t)$ can be modeled by Gaussian distribution. While the actual rewards distribution can be any distribution with support in $[0,1]$. 
	We use a discount factor $\gamma$ ($0<\gamma <1$) to dynamically adjust the estimate  of each arm's distribution. The key to our algorithm is to decrease the sampling variance of the selected arm  while increasing the sampling variance of the unselected arms.
	
	Specifically, let $N_t(\gamma,i)  =\sum_{j=1}^{t} \gamma^{t-j}  \mathbbm{1}\{ i_j=i \}$ denotes the discounted number of plays of arm $i$ until time $t$.
	We use $\hat{\mu}_t(\gamma,i) =\frac{1}{N_t(\gamma,i)}\sum_{j=1}^{t}  \gamma^{t-j}  X_j(i) \mathbbm{1}\{ i_j=i \}$ called discounted empirical average to estimate the expected rewards  of arm $i$. The sampling variance for arm $i$ at round $t$ is denoted as $ \tau_t(i)^2$.   At round $t$, arm $i$ samples from the Gaussian distribution $\mathcal{N}(\hat{\mu}_t(\gamma,i), \tau_t(i)^2 )$.  Recall that the rewards are in range $[0,1]$, thus the variance cannot  be increased to infinity, which would move the sampling away from the expectation. We restrict the upper bound on the variance to $\tau_{max}^2$, then the sampling variance is $\tau_t(i)=  \min\{ \frac{1}{\sqrt{N_t(\gamma,i)}},\tau_{max} \}$.

	Let $\tilde{\mu}_t(\gamma,i)= \sum_{j=1}^{t}  \gamma^{t-j}  X_j(i) \mathbbm{1}\{ i_j=i \}$ as the discounted cumulative reward. If arm $i$ is selected at round $t$,  the posterior distribution is updated as follows:
	\[ 
	\hat{\mu}_{t+1}(\gamma,i)= \frac{\gamma \hat{\mu}_t(\gamma,i)N_t(\gamma,i) + X_t(i)}{\gamma N_t(\gamma,i)+1}	=\frac{\tilde{\mu}_{t+1}(\gamma,i)}{N_{t+1}(\gamma,i)}
	\] 
	
	\noindent If arm $i$ isn't selected at round $t$, the posterior distribution is updated as 
	\[ 
	\hat{\mu}_{t+1}(\gamma,i)= \frac{\tilde{\mu}_{t+1}(\gamma,i)}{N_{t+1}(\gamma,i)} = \frac{\gamma \tilde{\mu}_{t}(\gamma,i)}{\gamma N_{t}(\gamma,i)}=\hat{\mu}_{t}(\gamma,i)
	\] 
	i.e. the expectation of posterior distribution remains unchanged.
	
	Algorithm \ref{algorithm1} shows the pseudocode of  DS-TS.
	We initialize the  prior distributions with $\hat{\mu}_1(i)=0,\tau_1(i)=1.$    
	Line 5 is the Thompson sampling. For each arm, we draw a random sample $\theta_t(i)$ from $\mathcal{N}(\hat{\mu}_t(\gamma,i),\tau_t(i)^2)$. Then we select  arm $i_t $ with the maximum sample value to play and obtain the reward $X_{t}(i_t)$ (Line 7).  To avoid the time complexity  going to $O(T^2)$, we introduce $\tilde{\mu}_t(i)$ to calculate $\hat{\mu}_t(\gamma,i)$ using an iterative method(Line 9-11). 
	Finally, we update the posterior variance $\tau_i^2$ of each arm  (Line 12).

	\begin{algorithm}[!htbp]
		\label{algorithm1}
		\caption{DS-TS }
		{\bfseries Input:} discounted factor $\gamma \in (1-\frac{1}{e},1)$, $\tau_{max}$,\\ \quad \quad \quad  $\hat{\mu}_1(i) = 0$, $\tilde{\mu}_1(i)=0$, $N_t(\gamma,i)=0$,  $\tau_1(i)=\tau_{max}.$\\
		\For{$t=1,...,T$}{
			\For {$i=1,..,K$}{
				sample $\theta_t(i)$ independently from  $\mathcal{N}(\hat{\mu}_t(\gamma,i),\tau_t(i)^2)$ \\
			}
			Play arm $i_t = \arg\max_i\theta_t(i) $  and observe reward $X_t(i_t)$.\\
			
			\For {$i=1,...,K$}{
				$\tilde{\mu}_{t+1}(\gamma,i)=\gamma \tilde{\mu}_t(\gamma,i) + \mathbbm{1}\{i=i_t\} X_t(i_t) $\\
				$N_{t+1}(\gamma,i)=\gamma N_t(\gamma,i)+\mathbbm{1}\{i=i_t\}$\\
				$\hat{\mu}_{t+1}(\gamma,i)= \frac{\tilde{\mu}_{t+1}(\gamma,i)}{N_{t+1}(\gamma,i)} $\\
				$\tau_{t+1}(i)=  \min\{ \frac{1}{\sqrt{N_{t+1}(\gamma,i)}},\tau_{max} \}$	
			}
		}
	\end{algorithm}

	\noindent {\bfseries{Related to Thompson Sampling. }}
	If $\gamma=1$, DS-TS is equivalent to Thompson Sampling with stationary settings proposed by \citeA{agrawal2013further} except for the variance. Their sampling variance is $\frac{1}{k_i+1} $($k_i$ is the number  plays of arm $i$), which are updated according to the standard Bayesian posterior distribution. While we truncate it to $\tau_{max}$  to prevent the posterior variance from becoming infinite.

	\noindent {\bfseries{Related to Discounted UCB. }}
	Line 7 in Algorithm \ref{algorithm1} can be rewritten as 
	\[ i_t=\arg \max_i \hat{\mu}_t(\gamma,i) + \epsilon_t(i), \epsilon_t(i) \sim \mathcal{N}(0,\tau_t(i)^2)  \]
	We use the same method as DS-UCB \cite{garivier2011upper} to update $\hat{\mu}_t(\gamma,i)$. As for selecting the best arm,  DS-UCB uses a padding function $c_t(\gamma,i)=2B\sqrt{\frac{\xi\log n_t(\gamma)}{N_t(\gamma,i)}}$,  where $B,\xi$ is the tuning parameters, $n_t(\gamma)=\sum_{i=1}^{K}N_t(\gamma,i)$.  
	While our approach can be viewed as replacing the padding function with $\epsilon_t(i)$.
	Sampling from the normal distribution $\mathcal{N}(0,\tau_t(i)^2) $ is more exploratory than using deterministic padding function. In the experiment section, we will show the advantages of DS-TS.

	\section{Our Results}
	
	In this section, we  give the upper bounds of DS-TS with abruptly changing and smoothly changing settings. Then we discuss how to take the values of the parameters so that the DS-TS reaches the optimal upper bound.
	\subsection{Abruptly Changing Settings}
	Recall that  $\Delta_t(i):=\mu_t(*)-\mu_t(i) $.
	Let $\Delta_T=\min\{ \Delta_t(i):t\leq T, i \neq i_t^* \}$, be the minimum  difference between the expected reward of the best arm $i_t^*$ and the expected reward of all arm  in all time $T$ when the arm  is not the best arm, and  $\mu_{max}= \max_{t \in \{1,...,T\},i \in \{1,...,K\}}\mu_i(t) \in (0,1] $, be the maximum of  expected rewards.
	Define function $F(x)=\frac{1}{\sqrt{2\pi}}\frac{x}{1+x^2}e^{-x^2/2}.$
	\begin{theorem}
		\label{result1}
		Let $\gamma \in (1-\frac{1}{e},1), \tau_{max}\geq\frac{1}{12\sqrt{2}}$. In the abruptly changing settings, for any arm $i \in \{1,...,K\}$,
			\begin{equation*}
				\mathbb{E}[k_T(i)]  \leq B_T D(\gamma)  + (C+2)L(\gamma)\gamma^{-1/(1-\gamma)} T(1-\gamma)\log(\frac{1}{1-\gamma}), 
			\end{equation*}
	\end{theorem}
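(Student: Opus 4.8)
The plan is to adapt the Thompson Sampling regret analysis of \citeA{agrawal2013further} to the discounted, non-stationary setting, combining it with the breakpoint-splitting idea that \citeA{garivier2011upper} used for Discounted UCB. First I would fix an arm $i$ and partition $\{1,\dots,T\}$ into rounds lying within a ``contamination window'' $\mathcal{D}(\gamma)$ of some breakpoint and rounds lying in the clean interior of a stationary phase, choosing $\mathcal{D}(\gamma)$ so that $\gamma^{\mathcal{D}(\gamma)}\le 1-\gamma$, i.e. $\mathcal{D}(\gamma)\asymp \log(1/(1-\gamma))/\log(1/\gamma)$. The contaminated rounds number at most $B_T\,\mathcal{D}(\gamma)$, and bounding the play-indicator by $1$ there yields the $B_T D(\gamma)$ term; all the work is in bounding the expected plays of $i$ in the clean interior, where $\hat\mu_t(\gamma,i)$ is a discounted average of rewards drawn entirely under the \emph{current} distribution.

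For a clean round $t$, introduce thresholds strictly between $\mu_t(i)$ and $\mu_t(*)$, say $y_t(i):=\mu_t(i)+\tfrac13\Delta_t(i)$ and $x_t(i):=\mu_t(*)-\tfrac13\Delta_t(i)$, and split $\{i_t=i\}$ exactly as in Agrawal--Goyal according to: (a) the concentration event $E^\mu_t(i)=\{\hat\mu_t(\gamma,i)\le y_t(i)\}$ fails; (b) $E^\mu_t(i)$ holds but the sample event $E^\theta_t(i)=\{\theta_t(i)\le x_t(i)\}$ fails; (c) both hold. For (a) I would invoke a Hoeffding-type deviation inequality for discounted sums (the analogue of the Garivier--Moulines bound for $\hat\mu_t(\gamma,i)$), which forces $\gamma$ bounded away from $1$; summing its failure probability over $t$ and over the dyadic levels of $N_t(\gamma,i)$ costs a $\log\frac{1}{1-\gamma}$ factor. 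For (b), conditioned on $E^\mu_t(i)$ the Gaussian sample $\theta_t(i)\sim\mathcal{N}(\hat\mu_t(\gamma,i),\tau_t(i)^2)$ exceeds $x_t(i)$ only with tail probability decaying in $\Delta_t(i)^2 N_t(\gamma,i)$ — here the truncation $\tau_t(i)=\min\{1/\sqrt{N_t(\gamma,i)},\tau_{max}\}$ together with $\tau_{max}\ge\frac{1}{12\sqrt2}$ keeps the variance in the right window — again summable up to a logarithmic factor, and these two cases are where the constant $C$ is born. Case (c) is the crux: on $E^\mu_t(i)\cap E^\theta_t(i)$, playing $i$ forces the best arm's sample below $x_t(i)$, and I would use the Gaussian anti-concentration bound $\mathbb{P}(\theta_t(i^*)>x_t(i))\ge F\!\bigl(\mathrm{gap}/\tau_t(i^*)\bigr)$ — precisely the role of the function $F$ defined just before the theorem — to convert $\mathbb{E}[\mathbbm{1}\{i_t=i,\text{(c)}\}]$ into $\tfrac{1-p}{p}\,\mathbb{E}[\mathbbm{1}\{i_t=i^*\}]$ on the same round, with $p=F(\cdot)$.

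Assembling the pieces, I would sum the case-(c) bound over clean rounds: each play of the best arm bumps its discounted count $N_t(\gamma,i^*)$, and across the horizon the number of distinct ``freshness levels'' of $N_t(\gamma,i^*)$ inside windows of length $\mathcal{D}(\gamma)$ is $O\bigl((1-\gamma)T\bigr)$, producing the shape $L(\gamma)\gamma^{-1/(1-\gamma)}\,T(1-\gamma)\log\frac{1}{1-\gamma}$; the factor $\gamma^{-1/(1-\gamma)}$ (roughly $e$ on the admissible range of $\gamma$) arises from lower-bounding $N_t(\gamma,\cdot)$ by $\gamma^{1/(1-\gamma)}$ times a genuine count over the last $\approx 1/(1-\gamma)$ rounds, which is where the restriction $\gamma\in(1-\tfrac1e,1)$ is used. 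The main obstacle I expect is exactly case (c) intertwined with the discounting: one must justify applying the ratio argument round-by-round even though $N_t(\gamma,i^*)$ and $\tau_t(i^*)$ are random and only weakly controlled, then sum these random ratios against the increments of $N_t(\gamma,i^*)$ — this discounted-count bookkeeping (the analogue of the Garivier--Moulines counting lemma) is the delicate part and is where the precise definitions of $D(\gamma)$, $L(\gamma)$ and the admissible ranges of $\gamma$ and $\tau_{max}$ get pinned down.
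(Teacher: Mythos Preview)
Your overall architecture matches the paper: split off a $B_TD(\gamma)$-sized neighbourhood of each breakpoint, then on the remaining ``pseudo-stationary'' rounds run the Agrawal--Goyal three-case decomposition, with the Garivier--Moulines counting lemma doing the discounted-count bookkeeping. Two points, however, are not right as stated and would block the argument.

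First, a small but real issue: even on clean rounds $\hat\mu_t(\gamma,i)$ is \emph{not} ``a discounted average of rewards drawn entirely under the current distribution.'' The discounted sum always reaches back past the last breakpoint; what is true is that after $D(\gamma)$ rounds the residual bias $|\mu_t(i)-\ddot\mu_t(\gamma,i)|$ is controlled by $U_t(\gamma,i)=\sqrt{(1-\gamma)\log\frac{1}{1-\gamma}/N_t(\gamma,i)}$ (the paper's Lemma~\ref{D}). This bias term has to be carried through the Hoeffding step explicitly, and it is the reason the paper first isolates the event $\{N_t(\gamma,i)>A(\gamma)\}$ via the counting lemma (their Step~2) before touching cases (a) and (b): without that threshold neither the self-normalized Hoeffding bound nor the Gaussian tail for $\theta_t(i)$ is summable, and ``peeling over dyadic levels of $N_t(\gamma,i)$'' does not work here because $N_t(\gamma,i)$ is not monotone in $t$.

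Second, and more importantly, your treatment of case~(c) underestimates what is needed. The inequality $p_{i,t}\ge F(\text{gap}/\tau_t(i^*))$ only gives a useful constant when $\tau_t(i^*)=\tau_{max}$, i.e.\ when $N_t(\gamma,i^*)<1/\tau_{max}^2$; this is precisely the sub-case that produces the $1/F(\mu_{max}/\tau_{max})$ summand in $C$. Once $N_t(\gamma,i^*)$ exceeds that, the sampling variance shrinks to $1/N_t(\gamma,i^*)$ and $p_{i,t}=\mathbb{P}(\theta_t(i^*)>y_t(i)\mid\mathcal F_{t-1})$ can be \emph{exponentially} small on histories with $\hat\mu_t(i^*)<y_t(i)$, so a pathwise lower bound by $F(\cdot)$ is useless. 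The paper's Lemma~\ref{new} instead bounds $\mathbb{E}[1/p_{i,t}]$ by interpreting it as the expected number of i.i.d.\ Gaussian draws until one exceeds $y_t(i)$, then averaging over the randomness in $\hat\mu_t(i^*)$ via the self-normalized Hoeffding inequality; this geometric-trial argument yields $\mathbb{E}[1/p_{i,t}]\le e^{25}+3+3\log\frac{1}{1-\gamma}$ in general, and a much tighter $1+6(1-\gamma)\log\frac{1}{1-\gamma}$ once $N_t(\gamma,i^*)>L(\gamma)$. It is this two-regime bound on $\mathbb{E}[1/p_{i,t}]$, combined with the counting lemma applied to $\{i_t=i_t^*,\,N_t(\gamma,i^*)\le L(\gamma)\}$, that produces the main term and the constant $C$---not cases (a)/(b), which in the paper's accounting are negligible $(O(T(1-\gamma)^{16}))$.
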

	\noindent where $D(\gamma)=\frac{\log(-(1-\gamma)^2\log(1-\gamma))}{\log\gamma}$,$L(\gamma)= \frac{144(1+\sqrt{2})^2\log(\frac{1}{1-\gamma}+e^{25})}{\gamma^{1/(1-\gamma)}(\Delta_T)^2},C=e^{25}+12+\frac{1}{F(\frac{\mu_{max}}{\tau_{max}})}  $.\\

	\begin{corollary}
		\label{corollary1}
		When $\gamma$ is close to $1$,  $\gamma^{-\frac{1}{1-\gamma}}$ is around $e$. If the time horizon $T$ and number of breakpoints $B_T$ are known in advance, the discounted factor can be chosen as $\gamma =1-\sqrt{B_T/T}$, then $\mathbb{E}[k_T(i)] = O(\sqrt{TB_T}\log^2(T) )$. If $B_T = O(T^{\alpha})$ for some $\alpha \in (0,1)$, this regret is upper bounded by $O(T^{\frac{1+\alpha}{2}}\log^2(T) )$.

	\end{corollary}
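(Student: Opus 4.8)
\textbf{Proof plan for Corollary~\ref{corollary1}.}
The plan is to prove the corollary purely by substituting the prescribed choice $\gamma = 1 - \sqrt{B_T/T}$ into the bound of Theorem~\ref{result1} and estimating its two summands; no new probabilistic argument is required. First I would record the elementary fact behind the opening sentence: with $u := 1-\gamma$, the exponent of $\gamma^{1/(1-\gamma)}$ equals $\frac{\log\gamma}{1-\gamma} = \frac{\log(1-u)}{u} = -\big(1 + \tfrac{u}{2} + \tfrac{u^2}{3} + \cdots\big)$, which is strictly increasing in $\gamma$ on $(0,1)$ and tends to $-1$ as $\gamma \to 1^-$. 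Hence on the admissible range $\gamma \in (1-\tfrac1e, 1)$ the quantity $\gamma^{-1/(1-\gamma)}$ stays below an absolute constant (concretely $\gamma^{-1/(1-\gamma)} < e^{1.25} < e^2$) and converges to $e$; consequently $\gamma^{-2/(1-\gamma)}$ — which is what appears once the explicit $\gamma^{-1/(1-\gamma)}$ factor of the theorem is multiplied into the $\gamma^{-1/(1-\gamma)}$ hidden in $L(\gamma)$ — is likewise $\Theta(1)$, tending to $e^2$. I would also observe that $\gamma = 1-\sqrt{B_T/T}$ falls in $(1-\tfrac1e,1)$ exactly when $T > e^2 B_T$, which holds for all large $T$ and is consistent with the purely asymptotic ($O(\cdot)$) nature of the claim.

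Next I would bound the breakpoint term $B_T D(\gamma)$. Put $x := 1-\gamma = \sqrt{B_T/T}$. In the denominator, $\log\gamma = \log(1-x)$ is negative with $|\log(1-x)| = \Theta(x)$. In the numerator, $-(1-\gamma)^2\log(1-\gamma) = x^2\,(-\log(1-x))$, which for $x$ small lies in $(0,1)$, so its logarithm is negative; expanding $-\log(1-x) = x + O(x^2)$ gives $\log\big(x^2(-\log(1-x))\big) = 3\log x + O(x) = -3\log(1/x) + O(x)$. Dividing, $D(\gamma) = \Theta\!\big(\tfrac{\log(1/x)}{x}\big)$, and since $\tfrac1x = \sqrt{T/B_T}$ and $\log\tfrac1x = \tfrac12\log(T/B_T) \le \tfrac12\log T$, we get $B_T D(\gamma) = O\!\big(B_T\sqrt{T/B_T}\,\log(T/B_T)\big) = O(\sqrt{TB_T}\,\log T)$.

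Then I would bound the second summand. Merging its explicit $\gamma^{-1/(1-\gamma)}$ with $L(\gamma)$, it equals $(C+2)\,\frac{144(1+\sqrt2)^2}{(\Delta_T)^2}\,\gamma^{-2/(1-\gamma)}\,\log\!\big(\tfrac{1}{1-\gamma}+e^{25}\big)\,T(1-\gamma)\,\log\tfrac{1}{1-\gamma}$. By the first step $\gamma^{-2/(1-\gamma)} = O(1)$; with $1-\gamma = \sqrt{B_T/T}$ we have $T(1-\gamma) = \sqrt{TB_T}$, $\log\tfrac{1}{1-\gamma} = \tfrac12\log(T/B_T) \le \tfrac12\log T$, and $\log(\tfrac{1}{1-\gamma}+e^{25}) = \log(\sqrt{T/B_T}+e^{25}) = O(\log T)$. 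Treating $C$, $\Delta_T$, $\mu_{max}$ and $\tau_{max}$ as constants independent of $T$ (they are absorbed by $\tilde O$), the second summand is $O(\sqrt{TB_T}\,\log^2 T)$.

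Adding the two estimates gives $\mathbb{E}[k_T(i)] = O(\sqrt{TB_T}\,\log T) + O(\sqrt{TB_T}\,\log^2 T) = O(\sqrt{TB_T}\,\log^2 T)$, and if $B_T = O(T^\alpha)$ then $\sqrt{TB_T} = O(T^{(1+\alpha)/2})$, yielding the stated $O(T^{(1+\alpha)/2}\log^2 T)$. The only step carrying any subtlety is the estimate of $D(\gamma)$: one must track the signs (the argument of the outer logarithm is below $1$, so the numerator is negative, as is $\log\gamma$, which is precisely why $D(\gamma)>0$) and confirm that the nested term $\log(-\log(1-x))$ is genuinely of lower order than $\log(1/x)$; the remaining manipulations are routine bookkeeping.
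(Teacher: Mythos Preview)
Your plan is correct and matches the paper's (implicit) argument: the corollary is obtained by direct substitution of $\gamma=1-\sqrt{B_T/T}$ into the bound of Theorem~\ref{result1}, and the paper offers no further proof. One small slip to fix: with $x:=1-\gamma$ you have $-(1-\gamma)^2\log(1-\gamma)=-x^2\log x=x^2\log(1/x)$, not $x^2(-\log(1-x))$; the correct numerator is $2\log x+\log\log(1/x)\sim 2\log x$ rather than $3\log x$, but this only changes the constant and your conclusion $D(\gamma)=\Theta\big(\tfrac{\log(1/x)}{x}\big)$, and hence the final $O(\sqrt{TB_T}\log^2 T)$ bound, is unaffected.
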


	\begin{remark}
		The condition $\tau_{max}\geq \frac{1}{12\sqrt{2}}$ is imposed to help the analysis. From the expression for $F(x)$ it is obvious that $\tau_{max}$ has a lower bound greater than 0.
		In fact, follows from the proof of Lemma \ref{new}, $\tau_{max}$ needs to satisfy the condition $\tau_{max} \geq \frac{\Delta_T}{12\sqrt{2}+3\sqrt{1-\gamma}}\frac{1}{\sqrt{\log\frac{1}{1-\gamma}}} $. Since $\gamma >1-\frac{1}{e}$,  $ \tau_{max} \geq \frac{1}{12\sqrt{2}}$  is clearly satisfied.
		Combining practical experience, too small sampling variance will make Thompson Sampling lose its exploration ability. 
		In addition, if we know $\mu_{max}$ in advance,  $\tau_{max}$ can be set suitably  to improve the  empirical performance of DS-TS. In the experimental section we will describe how to take the appropriate $\tau_{max}$.
	\end{remark}
	
	\subsection{Smoothly Changing Settings}
		
	Smoothly changing settings present greater challenges compared to abruptly changing settings. The primary difficulty arises from the fact that the expected rewards of different arms may be extremely close to each other. In order to effectively address this challenge, we require certain assumption that restricts the number of rounds in which the expected rewards of two arms can become arbitrarily small. This assumption is necessary to ensure that the arms' expected rewards remain distinguishable and prevent the algorithm from being overwhelmed by the inherent uncertainty in the rewards.
	Let $\Delta \in (0,1)$, define 
	\[  H(\Delta,T)= \{   t \in \{1,...,T\}: \exists i\neq j,| \mu_i(t)-\mu_j(t)|< \Delta \} \]
	and we make the following assumption.
	\begin{assumption}
		\label{assumption_delta}
		There exist some constant independent of $T$, $\beta \in [0,1]$, positive number $F$, $\Delta_0 \in (0,1)$, s.t. for all $\Delta <\Delta_0$,
		\[  | H(\Delta,T)| \leq F\Delta T^{\beta}  \]
	\end{assumption}
	This assumption  is consistent with  \cite{trovo2020sliding}, and if $\beta=1$  it's equivalent to that in \cite{combes2014unimodal}. 
	\begin{theorem}
		\label{result2}
		Let $\gamma \in (1-\frac{1}{e},1), \tau_{max}\geq\frac{1}{12\sqrt{2}}$ Lipschitz constant $\sigma>0$. There exists $\Delta_0  $ as in Assumption \ref{assumption_delta}, s.t. $2\sigma D(\gamma) < \Delta/3\leq \Delta_0$. In the smoothly changing settings, for any arm $i \in \{1,...,K\}$,
	\begin{equation}
	\mathbb{E}[k_T(i)]  \leq F \Delta T^{\beta}  +  M(\gamma)T(1-\gamma)\log(\frac{1}{1-\gamma}), 
	\end{equation}
	where $M(\gamma)= \frac{144(1+\sqrt{2})^2\log(\frac{1}{1-\gamma}+e^{25})}{\gamma^{1/(1-\gamma)}\Delta^2}(e^{25}+13+\frac{1}{F(\frac{\mu_{max}}{\tau_{max}})})  + \frac{594}{\gamma^{1/(1-\gamma)}(\Delta/3-2\sigma D(\gamma))^2} .$
	\end{theorem}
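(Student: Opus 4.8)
The plan is to follow the architecture of the proof of Theorem~\ref{result1}, replacing the partition of the horizon into stationary phases with a partition of the rounds according to whether, at that round, the arms' expected rewards are mutually well separated. Writing $k_T(i)=\sum_{t=1}^{T}\mathbbm{1}\{i_t=i,\, i\neq i_t^{*}\}$, I would split the expectation into $\mathbb{E}[\sum_{t\in H(\Delta,T)}\mathbbm{1}\{i_t=i,\, i\neq i_t^{*}\}]$ plus $\mathbb{E}[\sum_{t\notin H(\Delta,T)}\mathbbm{1}\{i_t=i,\, i\neq i_t^{*}\}]$. The first term is at most $|H(\Delta,T)|\le F\Delta T^{\beta}$ by Assumption~\ref{assumption_delta}, which applies because the hypothesis $\Delta/3\le\Delta_0$ forces $\Delta<\Delta_0$; this is the first summand of the claimed bound. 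It then remains to show the second term is bounded by $M(\gamma)\,T(1-\gamma)\log\frac{1}{1-\gamma}$, and there is no separate $B_T D(\gamma)$-type term here since the environment has no discrete breakpoints.

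For $t\notin H(\Delta,T)$ the environment is \emph{locally stationary} on the relevant timescale. By Assumption~\ref{assumption_sigma}, over any $\ell$ consecutive rounds each arm's mean drifts by at most $\sigma\ell$; and by the very definition $D(\gamma)=\log(-(1-\gamma)^2\log(1-\gamma))/\log\gamma$, the discount weights $\gamma^{t-j}$ put total mass of order $\gamma^{D(\gamma)}/(1-\gamma)=(1-\gamma)|\log(1-\gamma)|$ on lags exceeding $D(\gamma)$, so the effective memory length is of order $D(\gamma)$. The hypothesis $2\sigma D(\gamma)<\Delta/3$ then says that over this effective window the means of arm $i$ and of the best arm together move by less than $\Delta/3$, so on a round $t\notin H(\Delta,T)$ — where $\Delta_t(i)\ge\Delta$ — the gap stays of order at least $\Delta/3$ throughout the window seen by $\hat\mu_t(\gamma,i)$. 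This is what is needed to re-run the three ingredients behind Theorem~\ref{result1}: (i) a discounted Hoeffding concentration of $\hat\mu_t(\gamma,i)$ about $\mu_t(i)$, now carrying an extra deterministic bias of size at most $2\sigma D(\gamma)$; (ii) a Gaussian tail / anti-concentration estimate for the best arm's sample $\theta_t(i_t^{*})$, which produces the factor $1/F(\mu_{max}/\tau_{max})$ through the function $F$; and (iii) a bound on the number of rounds on which the discounted count $N_t(\gamma,i)$ falls below a threshold of order $\log(\tfrac{1}{1-\gamma}+e^{25})/\Delta^2$.

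Matching constants, the first summand of $M(\gamma)$, namely $\frac{144(1+\sqrt{2})^2\log(\frac{1}{1-\gamma}+e^{25})}{\gamma^{1/(1-\gamma)}\Delta^{2}}(e^{25}+13+\frac{1}{F(\mu_{max}/\tau_{max})})$, should be obtained from essentially the same chain of estimates as the term $(C+2)L(\gamma)\gamma^{-1/(1-\gamma)}$ of Theorem~\ref{result1}: one sums the per-round failure probabilities coming from (i)--(iii), bounds the resulting sums of the form $\sum_{t}(\text{geometric in }\gamma)$ by $O(T(1-\gamma)\log\frac{1}{1-\gamma})$, and replaces $\Delta_T$ by the eroded gap $\Delta/3$ (hence $\Delta$ in place of $\Delta_T$), the additive constant $13$ reflecting a slightly different accounting of the boundary rounds. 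The second summand, $\frac{594}{\gamma^{1/(1-\gamma)}(\Delta/3-2\sigma D(\gamma))^2}$, is the genuinely new piece: it bounds the rounds on which the concentration of $\hat\mu_t(\gamma,i)$ cannot close the margin $\Delta/3-2\sigma D(\gamma)$ (positive by hypothesis) left after the drift bias has consumed $2\sigma D(\gamma)$ of the gap, the constant $594$ collecting the numerical factors from the corresponding Hoeffding estimate and the geometric-to-$T(1-\gamma)\log\frac{1}{1-\gamma}$ conversion.

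The hardest part will be controlling this deterministic drift bias with explicit constants. In the abruptly changing case the mean is exactly constant within a stationary phase, so outside the $B_T D(\gamma)$ contaminated rounds $\hat\mu_t(\gamma,i)$ is unbiased; here the mean moves at every round, so one must show that the bias $\big|\sum_{j\le t}\gamma^{t-j}(\mu_j(i)-\mu_t(i))\mathbbm{1}\{i_j=i\}\big|/N_t(\gamma,i)$ is uniformly at most $2\sigma D(\gamma)$, by splitting the sum at lag $D(\gamma)$: since $|\mu_j(i)-\mu_t(i)|\le\sigma(t-j)$, the near part is at most $\sigma D(\gamma)\cdot\big(\sum_{j}\gamma^{t-j}\mathbbm{1}\{i_j=i\}/N_t(\gamma,i)\big)=\sigma D(\gamma)$, while the far part is a lower-order geometric tail (using $|\mu_j(i)-\mu_t(i)|\le1$ and the definition of $D(\gamma)$), under control on the rounds where $N_t(\gamma,i)$ already exceeds the threshold from ingredient (iii). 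Interleaving this deterministic estimate with the stochastic concentration bound, and treating the boundary rounds — those near $t=1$, and those $t\notin H(\Delta,T)$ whose memory window reaches back into $H(\Delta,T)$ — so that neither the $F\Delta T^{\beta}$ term nor the constants in $M(\gamma)$ are inflated, is the delicate bookkeeping; everything else should transfer from the proof of Theorem~\ref{result1} and Lemma~\ref{new}.
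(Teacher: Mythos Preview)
Your plan is essentially the paper's own proof: split off the rounds in $H(\Delta,T)$ via Assumption~\ref{assumption_delta}, and on the complement rerun the Theorem~\ref{result1} machinery with the thresholds $x_t(i),y_t(i)$ shifted to absorb the drift bias, invoking the smooth analogues Lemma~\ref{D1} and Lemma~\ref{new1} in place of Lemma~\ref{D} and Lemma~\ref{new}. Two small corrections are worth making, though neither is a real gap.

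First, the deterministic bias bound you want is not ``uniformly at most $2\sigma D(\gamma)$'': the paper's Lemma~\ref{D1} gives $|\mu_t(i)-\ddot\mu_t(\gamma,i)|\le U_t(\gamma,i)+\sigma D(\gamma)$. In your lag-splitting, the near part (lag $\le D(\gamma)$) indeed contributes $\sigma D(\gamma)$ via Lipschitz, but the far part contributes exactly the same $U_t(\gamma,i)$ term as in Lemma~\ref{D} (using $|\mu_j-\mu_t|\le 1$ and $\gamma^{D(\gamma)}=(1-\gamma)^2\log\tfrac{1}{1-\gamma}$), not a second $\sigma D(\gamma)$. Accordingly the paper shifts the thresholds by only $\sigma D(\gamma)$ each, i.e.\ works with $x_t(i)+\sigma D(\gamma)$ and $y_t(i)-\sigma D(\gamma)$; the residual gap is then $\Delta_t(i)/3-2\sigma D(\gamma)$, which is what appears in the redefined $A(\gamma)=n^2\log\tfrac{1}{1-\gamma}/(\Delta/3-2\sigma D(\gamma))^2$ and hence in the $594/(\gamma^{1/(1-\gamma)}(\Delta/3-2\sigma D(\gamma))^2)$ piece of $M(\gamma)$ (coming from Lemma~\ref{N<A}, exactly your ingredient~(iii), not from a separate Hoeffding step).

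Second, your worry about ``boundary rounds whose memory window reaches back into $H(\Delta,T)$'' does not arise: Lemma~\ref{D1} holds for \emph{every} $t$, because the Lipschitz assumption is global and makes no reference to $H(\Delta,T)$. The set $H(\Delta,T)$ is used only once, to discard rounds where $\Delta_t(i)$ may be below $\Delta$; after that the analysis is uniform in $t$ and no further boundary accounting is needed.
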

	
	\begin{corollary}
		\label{corollary2}
		If $\beta$ in Assumption \ref{assumption_delta} and the time horizon $T$ are known in advance, the discounted factor can be set as $\gamma=1-1/T^{1-\beta}$, then  $\mathbb{E}[k_T(i)] = O(T^{\beta}\log^2(T) )$. 
	\end{corollary}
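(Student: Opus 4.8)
The plan is to plug the prescribed value $\gamma = 1 - T^{-(1-\beta)}$ into the bound of Theorem~\ref{result2} and check that each of its two summands is $O(T^{\beta}\log^2 T)$. The first step is bookkeeping: with this choice $1-\gamma = T^{-(1-\beta)}$, so $T(1-\gamma) = T^{\beta}$ and $\log\frac{1}{1-\gamma} = (1-\beta)\log T$, whence the explicit factor $T(1-\gamma)\log\frac{1}{1-\gamma}$ in the theorem equals $(1-\beta)\,T^{\beta}\log T$. One also checks the admissibility conditions: $\gamma \in (1-\frac{1}{e},1)$ as soon as $T > e^{1/(1-\beta)}$ (so the statement is to be read for $\beta<1$; for $\beta=1$ the asserted bound $O(T\log^2 T)$ is weaker than the trivial $O(T)$), and the hypothesis $2\sigma D(\gamma) < \Delta/3 \le \Delta_0$ of Theorem~\ref{result2} holds for a fixed admissible $\Delta<\Delta_0$ provided the smoothness scale satisfies $\sigma = O\!\big(\Delta\,T^{-(1-\beta)}/\log T\big)$, which is the regime in which the corollary is stated.

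The substantive step is to show $M(\gamma) = O(\log T)$. This rests on three facts. First, $\gamma^{-1/(1-\gamma)} = \Theta(1)$ on $(1-\frac{1}{e},1)$, in fact it stays between $e$ and $e^{5/4}$, because $\frac{-\log\gamma}{1-\gamma}$ is continuous, decreasing, and tends to $1$ as $\gamma\to1$ (this is the observation, already used in Corollary~\ref{corollary1}, that $\gamma^{-1/(1-\gamma)}$ is close to $e$). Second, $\log\!\big(\tfrac{1}{1-\gamma}+e^{25}\big) = \log\!\big(T^{1-\beta}+e^{25}\big) = \Theta(\log T)$. Third, with $\Delta$ fixed and $2\sigma D(\gamma) < \Delta/3$, both denominators $\Delta^2$ and $(\Delta/3 - 2\sigma D(\gamma))^2$ in the definition of $M(\gamma)$ are $\Theta(1)$. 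Feeding these into the formula for $M(\gamma)$ leaves $\log(\tfrac{1}{1-\gamma}+e^{25})$ as the only surviving $T$-dependence, so $M(\gamma) = O(\log T)$.

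Assembling the pieces: the first summand of Theorem~\ref{result2} is $F\Delta T^{\beta} = O(T^{\beta})$ since $F$ and $\Delta$ are constants, and the second is $M(\gamma)\,T(1-\gamma)\log\frac{1}{1-\gamma} = O(\log T)\cdot T^{\beta}\cdot O(\log T) = O(T^{\beta}\log^2 T)$. Their sum is $O(T^{\beta}\log^2 T)$, so $\mathbb{E}[k_T(i)] = O(T^{\beta}\log^2 T)$, which is the claim.

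I expect the main obstacle to be precisely the admissibility point flagged in the first paragraph rather than the asymptotics. Since $D(\gamma) = \Theta\!\big(\tfrac{1}{1-\gamma}\log\tfrac{1}{1-\gamma}\big) = \Theta(T^{1-\beta}\log T)$ \emph{grows} with $T$, the constraint $2\sigma D(\gamma) < \Delta/3 \le \Delta_0 < 1$ cannot hold for a fixed $\sigma$ at all large $T$, so one must state the corollary in a regime where $\sigma$ decays fast enough relative to the horizon for $\gamma = 1 - T^{-(1-\beta)}$ to remain a legitimate choice. Once that compatibility is secured, the rest is a routine substitution.
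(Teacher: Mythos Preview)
Your proposal is correct and follows exactly the approach the paper intends: the paper gives no explicit proof of Corollary~\ref{corollary2}, relying (as in Corollary~\ref{corollary1}) on direct substitution of the chosen $\gamma$ into the Theorem~\ref{result2} bound together with the fact that $\gamma^{-1/(1-\gamma)}$ is bounded near $e$. Your admissibility discussion is a welcome addition---the paper handles the same issue in the Remark following the corollary, phrasing it as a lower-bound constraint on $\beta$ (namely $\beta \ge 1-\log_T\bigl(\tfrac{\Delta}{12\sigma\log T}\bigr)$, which is your condition $\sigma = O(\Delta T^{-(1-\beta)}/\log T)$ rearranged) rather than as a decay requirement on $\sigma$.
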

	\begin{remark}
		\citeA{trovo2020sliding}  discusses in detail the value of $\beta$ for which Assumption \ref{assumption_delta} holds under the conditions of Theorem \ref{result2} and Corollary \ref{corollary2}. Define 
		\[ P=|\{ t \in \{1,...,T-1\}: \exists i\neq j,(\mu_i(t)-\mu_j(t))(\mu_i(t+1)-\mu_j(t+1))<0  \}| \]
		as the number of times the expected rewards of a pair of arms change over the time period.  When $\gamma=1-1/T^{1-\beta}$, it's easy to get $D(\gamma) \leq 2T^{1-\beta}\log T$.
		We can get  $\beta$ need within $ [ \max \{1-\log_T (\frac{\Delta}{12\sigma \log T}), \frac{1}{2}-\log_T\sqrt{\frac{F\Delta}{24P\log T}} \},1 ]$ in our setting. 
	\end{remark}
	\section{Proofs of Upper Bounds}
	
	In this section, we prove the results  respectively. The proofs of the two settings follow a similar approach. The differences in the proof can be dealt with uniformly using Lemma \ref{D} and Lemma \ref{D1}. The core framework of our proof follows \cite{agrawal2013further}. We  extend the analysis of \cite{agrawal2013further} to the non-stationary case by combining the method proposed in  \cite{garivier2011upper}. The main difficulty lies in the need to estimate the  $\mathbb{E}[\frac{1}{p_{i,t}}]$ for non-stationary settings (Lemma \ref{new} and Lemma \ref{new1}).

	\subsection{Proofs of Theorem \ref{result1}}
	{\bfseries{Proof outlines:}} 
	 The  idea is to divide 
	$\mathbb{E}[k_T(i)]$ into several parts according to whether the specific events  are true or not.
	First of all the expected rewards for all arms will not be well estimated in the rounds near the  breakpoints, and this part can be bounded as $B_TD(\gamma)$. Then focus on the rounds that are far from the breakpoints($D(\gamma)$ rounds after the breakpoints).
	For the rounds in which the mean of the arm is not well estimated, the regret can be bounded by  Lemma \ref{N<A}. For the rounds that the arm is fully  explored, the regret bound can be estimated by self-normalized Hoeffding-type inequality \cite{garivier2011upper}. For the other cases, one can use Lemma \ref{pit} and thus needs to estimate $\mathbb{E}[\frac{1}{p_{i,t}}]$. We derive the upper bound of $\mathbb{E}[\frac{1}{p_{i,t}}]$ for non-stationary settings, with an extra logarithmic term compared with the stationary settings. 
	
	Before proceeding to the specific analysis, we first give some definitions and lemmas that will be used in both non-stationary settings. 
	\begin{definition}[\bfseries{Quantities} $x_t(i),y_t(i)$]
		For  arm $i \neq i_t^{*}$, we choose two threshold $x_t(i),y_t(i)$ such that $x_t(i)=\mu_t(i)+\frac{\Delta_t(i)}{3},y_t(i)=\mu_t(*)- \frac{\Delta_t(i)}{3}  $. Then $ \mu_t(i)<x_t(i)<y_t(i)<\mu_t(*)$ and $y_t(i)-x_t(i)= \frac{\Delta_t(i)}{3} $.
		
	\end{definition}
	
	\begin{definition}[$\ddot{\mu}_t(\gamma,i)$]
		$\ddot{\mu}_t(\gamma,i)=\frac{1}{N_t(\gamma,i)}\sum_{j=1}^{t}\gamma^{t-j}\mathbbm{1}\{ i_j=i \}\mu_j(i)$ denotes the discounted average of expectation for arm $i$ at time step $t$.  If the randomness of $\mathbbm{1}\{i_j=i\}$ is removed, i.e., the selection of each arm is deterministic, then $\ddot{\mu}_t(\gamma,i)=\mathbb{E}[\hat{\mu}_t(\gamma,i)] $.
	\end{definition}
	
	\begin{definition}[\bfseries{History} $\mathcal{F}_t$] 
		$\mathcal{F}_t$ is the squence 
		\[ \mathcal{F}_t = \{ i_k,X_k(i_k),k=1,...,t \}, \]
		where $i_k$ denotes the arm played at time $k$, and $X_k(i_k)$ denotes the reward obtained at time $k$. Define $\mathcal{F}_0=\{ \}$. 
	\end{definition}

	By definition, we know $\mathcal{F}_0 \subseteq \mathcal{F} \subseteq ... \subseteq \mathcal{F}_{T}$. And $i_t,\hat{\mu}_t(\gamma,i)$, the distribution of $\theta_t(i)$  are determined by the history $\mathcal{F}_{t-1}$.
	
	Now we give some additional definitions and lemmas for abruptly changing settings only.
	The abruptly changing settings is in fact piecewise-stationary. Some rounds between  two breakpoints  appear to be stationary. Based on this observation, we give the following definition.
	\begin{definition}[\bfseries{Pseudo-Stationary Phase} $\mathcal{T}(\gamma)$]
		\label{pseudo}
		$\mathcal{T}(\gamma) = \{ t \leq T: \forall s \in (t-D(\gamma),t],\mu_s(\cdot)=\mu_t(\cdot)  \}$, where
		$D(\gamma)=\log((1-\gamma)^2\log( \frac{1}{1-\gamma}) )/\log\gamma$.
	\end{definition}
	
	\begin{remark}
		\label{|S|}
		Let $\mathcal{S}(\gamma)=\{t\leq T: t \notin \mathcal{T}(\gamma) \}$. Note that, on the right side of any breakpoint, there will be $D(\gamma)$ rounds belonging to $\mathcal{S}(\gamma)$. Therefore, the number of elements in the set $\mathcal{S}(\gamma)$ has an upper bound $B_T D(\gamma)$, i.e. $|\mathcal{S}(\gamma)| \leq B_T D(\gamma)$. $\mathcal{T}(\gamma)$ is called the pseudo-stationary phase because the length of $\mathcal{T}(\gamma)$ is smaller than the true stationary phase. 
		Figure \ref{fig_DS} shows $\mathcal{T}(\gamma)$ and $\mathcal{S}(\gamma)$ in two different situations.
	\end{remark}
	
	\begin{figure}[!htbp] 
		
		\centering{\includegraphics[height=4cm,width=7cm]{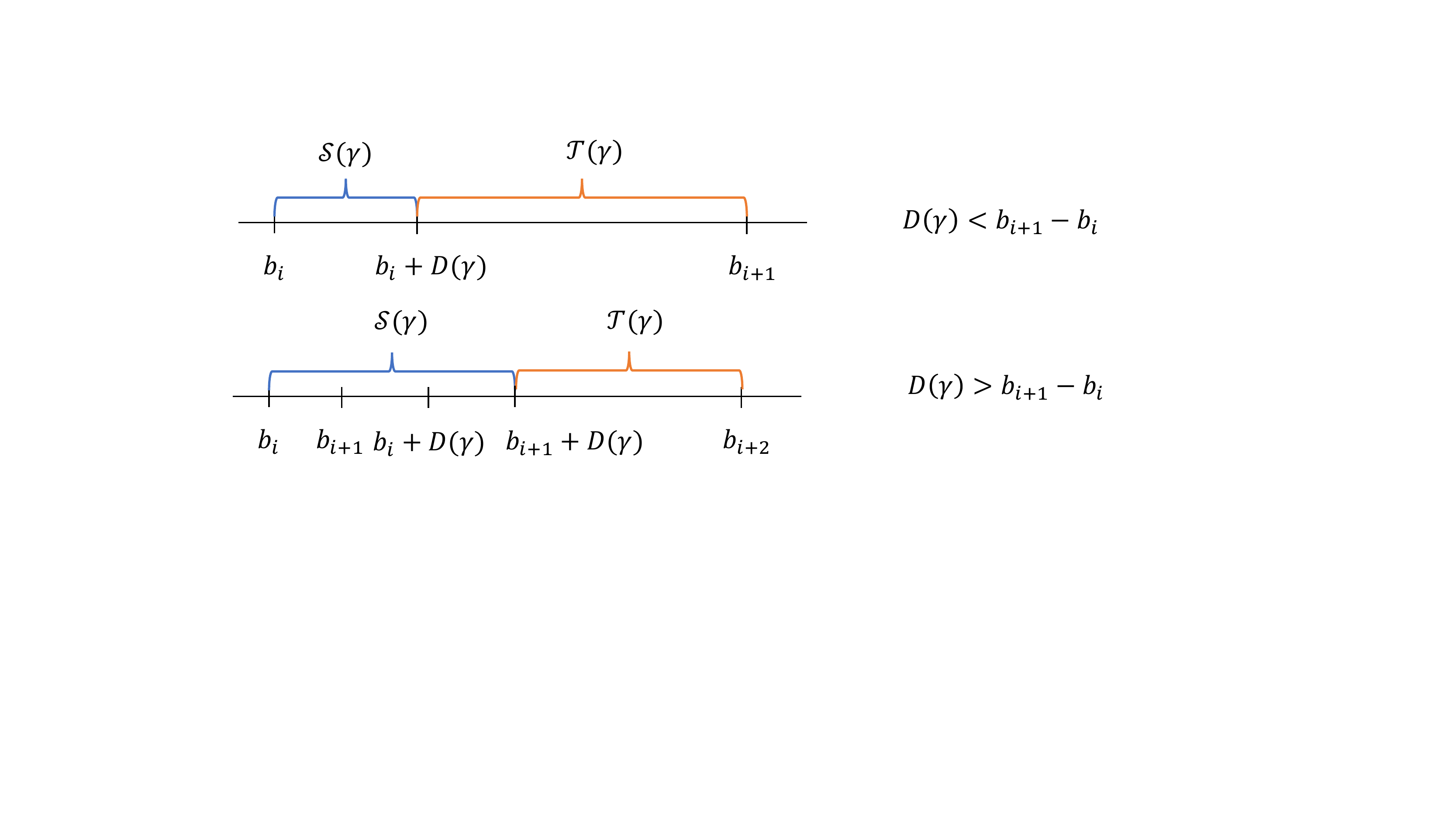}}
		\caption{ Illustration of $\mathcal{T}(\gamma)$ and $\mathcal{S}(\gamma)$ in two different situations. $b_{i+1}-b_i>D(\gamma)$ is shown in the top figure, and $ b_{i+1}-b_i \leq D(\gamma)$ the bottom figure.} 
		\label{fig_DS}
	\end{figure}

	Unlike the Sliding window method\cite{garivier2011upper,trovo2020sliding}, $\mu_t(i)$ and $\ddot{\mu}_t(\gamma,i)$ are often not the same even in the pseudo-stable phase $\mathcal{T}(\gamma)$.
	The following lemma depicts that after finite rounds at the breakpoint, that is, in the pseudo-stable phase, the distance between $\mu_t(i)$ and $\ddot{\mu}_t(\gamma,i)$ has an upper bound.
	\begin{lemma}
		\label{D}
		$\forall t \in \mathcal{T}(\gamma)$,  the distance between $\mu_t(i)$ and $\ddot{\mu}_t(\gamma,i)$ is less than $U_t(\gamma,i)$.
		\begin{equation}
			\label{U}
			| \mu_t(i) - \ddot{\mu}_t(\gamma,i| \leq U_t(\gamma,i), 
		\end{equation}
		
		where 
		\[U_t(\gamma,i)= \sqrt{\frac{(1-\gamma)\log \frac{1}{1-\gamma}}{N_t(\gamma,i)}}.
		\]
		
	\end{lemma}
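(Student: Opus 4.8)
My plan starts from the observation that $\ddot{\mu}_t(\gamma,i)$ is, by its very definition, a convex combination of the past expected rewards $\{\mu_j(i):j\le t,\ i_j=i\}$ with weights $\gamma^{t-j}/N_t(\gamma,i)$ summing to one. Applying the same combination to the constant $\mu_t(i)$ reproduces $\mu_t(i)$, so
\[
\mu_t(i)-\ddot{\mu}_t(\gamma,i)=\frac{1}{N_t(\gamma,i)}\sum_{j=1}^{t}\gamma^{t-j}\mathbbm{1}\{i_j=i\}(\mu_t(i)-\mu_j(i)).
\]
I would then split this sum at the cutoff $t-D(\gamma)$. For the recent rounds $j$ with $t-D(\gamma)<j\le t$, membership $t\in\mathcal{T}(\gamma)$ (Definition \ref{pseudo}) forces $\mu_j(i)=\mu_t(i)$, so those terms vanish identically; only the old rounds $j\le t-D(\gamma)$ contribute.

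For those surviving terms I would estimate crudely: $|\mu_t(i)-\mu_j(i)|\le 1$ since rewards lie in $[0,1]$, $\mathbbm{1}\{i_j=i\}\le 1$, and $\gamma^{t-j}\le\gamma^{D(\gamma)}$ because $t-j\ge D(\gamma)$. Bounding the geometric tail then gives
\[
|\mu_t(i)-\ddot{\mu}_t(\gamma,i)|\le\frac{1}{N_t(\gamma,i)}\sum_{k\ge D(\gamma)}\gamma^{k}\le\frac{\gamma^{D(\gamma)}}{N_t(\gamma,i)(1-\gamma)}.
\]
Substituting the definition of $D(\gamma)$, which gives $D(\gamma)\log\gamma=\log((1-\gamma)^2\log\tfrac{1}{1-\gamma})$ and hence $\gamma^{D(\gamma)}=(1-\gamma)^2\log\tfrac{1}{1-\gamma}$, this bound simplifies to $(1-\gamma)\log\tfrac{1}{1-\gamma}/N_t(\gamma,i)$.

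To reach the stated square-root form I would combine this with the trivial estimate $|\mu_t(i)-\ddot{\mu}_t(\gamma,i)|\le 1$ (both quantities lie in $[0,1]$) and the elementary inequality $\min\{1,z\}\le\sqrt{z}$ valid for all $z\ge 0$, applied with $z=(1-\gamma)\log\tfrac{1}{1-\gamma}/N_t(\gamma,i)$; this yields exactly $|\mu_t(i)-\ddot{\mu}_t(\gamma,i)|\le\sqrt{z}=U_t(\gamma,i)$. I do not expect any genuine obstacle here: the only points needing care are checking that $D(\gamma)>0$ so that the cutoff and the identity for $\gamma^{D(\gamma)}$ make sense — which follows from $\gamma\in(1-\tfrac{1}{e},1)$ ensuring $(1-\gamma)^2\log\tfrac{1}{1-\gamma}<1$ — and the final relaxation from an $O(1/N_t)$ to an $O(1/\sqrt{N_t})$ bound, which is precisely what keeps the statement (trivially) valid when $N_t(\gamma,i)$ is small. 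The substantive difficulties of the analysis lie rather in the later lemmas controlling $\mathbb{E}[1/p_{i,t}]$ in the non-stationary regime.
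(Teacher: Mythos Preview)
Your proposal is correct and follows essentially the same route as the paper's proof: write $\mu_t(i)-\ddot{\mu}_t(\gamma,i)$ as a weighted average of $\mu_t(i)-\mu_j(i)$, kill the recent terms via $t\in\mathcal{T}(\gamma)$, bound the old terms by the geometric tail $\gamma^{D(\gamma)}/((1-\gamma)N_t(\gamma,i))$, substitute $\gamma^{D(\gamma)}=(1-\gamma)^2\log\frac{1}{1-\gamma}$, and then pass to the square root using $\min\{1,z\}\le\sqrt{z}$. The only cosmetic difference is that the paper keeps the indicator $\mathbbm{1}\{i_j=i\}$ one step longer to recognise $\gamma^{D(\gamma)}N_{t-D(\gamma)}(\gamma,i)\le \gamma^{D(\gamma)}/(1-\gamma)$, whereas you drop it immediately and sum the full geometric tail; both give the same bound.
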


	As can be seen from Definition \ref{pseudo}, $(t-D(\gamma),t) \text{ does not contain any breakpoints} \Longleftrightarrow  t \in \mathcal{T}(\gamma)$.
	For any breakpoint $b_i \in \{ b_1,...,b_{B_T} \}$,  $b_i +D(\gamma) \in \mathcal{T}(\gamma)$ if $D(\gamma) \leq b_{i+1}-b_{i}$. 
	That is, $D(\gamma)$ rounds after the breakpoint $b_i$ ($D(\gamma) \leq b_{i+1}-b_{i}$), the distance between $\mu_t(i)$ and $\ddot{\mu}_t(\gamma,i)$ is less than $U_t(\gamma,i)$. 
	
	\begin{lemma}
		\label{new}
		Let $p_{i,t}=\mathbb{P}(\theta_t(*)>y_t(i)|\mathcal{F}_{t-1})$. For any $t \in \mathcal{T}(\gamma)$ and $i \neq i_t^{*}$,
		\[
		 \sum_{t \in \mathcal{T}(\gamma)} \mathbb{E}[\frac{1-p_{i,t}}{p_{i,t}}\mathbbm{1}\{ i_t=i_t^{*}, \theta_t(i)<y_t(i) \} ] \leq CT(1-\gamma)L(\gamma)\gamma^{-1/(1-\gamma)}\log\frac{1}{1-\gamma}.
		\]
	\end{lemma}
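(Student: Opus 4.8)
The plan is to establish Lemma \ref{new} as the discounted, non-stationary analogue of the ``inverse sampling probability'' bound of \cite{agrawal2013further}: the skeleton is to control $\mathbb{E}[1/p_{i,t}]$ by splitting the sum over $t\in\mathcal{T}(\gamma)$ according to how well the current optimal arm $i_t^{*}$ is estimated, i.e.\ according to the size of the discounted count $N_t(\gamma,i_t^{*})$. Two ingredients are new relative to the stationary case: (a) $\hat\mu_t(\gamma,i_t^{*})$ is now a \emph{biased} estimator whose bias, on the pseudo-stationary phase $\mathcal{T}(\gamma)$, is controlled by Lemma \ref{D} through $U_t(\gamma,i_t^{*})$; and (b) the effective sample size fluctuates because of discounting, so the ``good estimation'' event must come from a self-normalized Hoeffding-type inequality for discounted sums (as in \cite{garivier2011upper}) rather than the ordinary one, which is where the extra $\log\frac1{1-\gamma}$ factor originates. (Lemma \ref{pit} is applied downstream of this lemma, not inside it.)

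First I would make $p_{i,t}$ explicit. Conditionally on $\mathcal{F}_{t-1}$, $\theta_t(i_t^{*})\sim\mathcal{N}(\hat\mu_t(\gamma,i_t^{*}),\tau_t(i_t^{*})^2)$ with $\tau_t(i_t^{*})=\min\{N_t(\gamma,i_t^{*})^{-1/2},\tau_{max}\}$, so $p_{i,t}=\bar\Phi((y_t(i)-\hat\mu_t(\gamma,i_t^{*}))/\tau_t(i_t^{*}))$ with $\bar\Phi=1-\Phi$; I will use $\frac{1-p_{i,t}}{p_{i,t}}\le\frac1{p_{i,t}}$ together with three elementary Gaussian facts. (i) Since $\hat\mu_t(\gamma,i_t^{*})\ge 0$ and $y_t(i)\le\mu_t(*)\le\mu_{max}$, always $p_{i,t}\ge\bar\Phi(\mu_{max}/\tau_t(i_t^{*}))\ge F(\mu_{max}/\tau_t(i_t^{*}))$, which in particular gives $1/p_{i,t}\le 1/F(\mu_{max}/\tau_{max})$ whenever $\tau_t(i_t^{*})=\tau_{max}$. (ii) If $\hat\mu_t(\gamma,i_t^{*})\ge y_t(i)$ then $p_{i,t}\ge\frac12$, and more precisely $\frac{1-p_{i,t}}{p_{i,t}}\le 2e^{-z_t^2/2}$ with $z_t=(\hat\mu_t(\gamma,i_t^{*})-y_t(i))/\tau_t(i_t^{*})$. (iii) Even when $\hat\mu_t(\gamma,i_t^{*})$ is a bounded amount below $y_t(i)$, $1/p_{i,t}$ is at most an absolute constant. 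I also reduce the indicator: $p_{i,t}$ is $\mathcal{F}_{t-1}$-measurable, so each summand equals $\mathbb{E}[\frac{1-p_{i,t}}{p_{i,t}}\mathbb{P}(i_t=i_t^{*},\theta_t(i)<y_t(i)\mid\mathcal{F}_{t-1})]\le\mathbb{E}[\frac{1-p_{i,t}}{p_{i,t}}\mathbbm{1}\{i_t=i_t^{*}\}]$ after dropping the event on $\theta_t(i)$.

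Next, fix $t\in\mathcal{T}(\gamma)$. Combining Lemma \ref{D}, which gives $|\mu_t(i_t^{*})-\ddot\mu_t(\gamma,i_t^{*})|\le U_t(\gamma,i_t^{*})$, with a self-normalized Hoeffding-type deviation bound for $\hat\mu_t(\gamma,i_t^{*})-\ddot\mu_t(\gamma,i_t^{*})$ yields a ``favorable'' event $\mathcal{G}_t$ on which $\hat\mu_t(\gamma,i_t^{*})\ge\mu_t(*)-V_t$, where $V_t$ is a radius of order $\sqrt{\log(\frac1{1-\gamma})/N_t(\gamma,i_t^{*})}$, and whose complement has conditional probability at most $\lceil\log/\log(1/\gamma)\rceil$ times a small power of $1-\gamma$ (this ceiling is where $\log(\frac1{1-\gamma}+e^{25})$ enters $L(\gamma)$). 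I then fix a threshold $A(\gamma)\asymp L(\gamma)\gamma^{-1/(1-\gamma)}$ (of order $\log(\frac1{1-\gamma})/\Delta_T^2$) and split the sum into three parts. \emph{Part 1}, $N_t(\gamma,i_t^{*})\ge A(\gamma)$ on $\mathcal{G}_t$: then $V_t$ is so small that $\hat\mu_t(\gamma,i_t^{*})\ge y_t(i)+\Delta_t(i)/6$, so fact (ii) applies with $z_t\ge\frac{\Delta_T}{6}\sqrt{A(\gamma)}=\Theta(\sqrt{\log\frac1{1-\gamma}})$, hence $\frac{1-p_{i,t}}{p_{i,t}}\le 2(1-\gamma)^{c}$ for a large absolute $c$, and summing over at most $T$ rounds costs only $O(T(1-\gamma))$. \emph{Part 2}, $N_t(\gamma,i_t^{*})<A(\gamma)$ on $\mathcal{G}_t$: here $1/p_{i,t}$ is bounded by a constant via facts (i) and (iii) — separating $\tau_t(i_t^{*})=\tau_{max}$ (contributing $1/F(\mu_{max}/\tau_{max})$) from $\tau_t(i_t^{*})=N_t(\gamma,i_t^{*})^{-1/2}$ (where $\mathcal{G}_t$ keeps $z_t$ bounded below, so $1/p_{i,t}$ is bounded by the ``$12$''-type constant) — and the number of such rounds with $i_t=i_t^{*}$ is controlled by Lemma \ref{N<A}, giving $O(A(\gamma)\,T(1-\gamma)\log\frac1{1-\gamma})$, which is exactly the claimed shape. \emph{Part 3}, $\mathcal{G}_t^{c}$: bound $1/p_{i,t}$ by the crude universal bound of facts (i),(iii) (producing the $e^{25}$- and $1/F(\mu_{max}/\tau_{max})$-terms) and multiply by the small conditional probability of $\mathcal{G}_t^{c}$, which is again absorbed into the stated bound. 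Collecting the three parts with $C=e^{25}+12+1/F(\mu_{max}/\tau_{max})$ gives the lemma.

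The main obstacle will be the bookkeeping behind Parts 2 and 3, i.e.\ choosing $V_t$ and $A(\gamma)$ so that three constraints hold at once: (1) $\mathcal{G}_t^{c}$ has small enough conditional probability, which forces $V_t\gtrsim\sqrt{\log(\frac1{1-\gamma})/N_t}$, strictly larger than the Lemma \ref{D} bias $U_t$; (2) on $\{N_t\ge A(\gamma)\}\cap\mathcal{G}_t$ the deviation $z_t$ is large enough that $(1-p_{i,t})/p_{i,t}$ is a summably small power of $1-\gamma$; and (3) on $\{N_t<A(\gamma)\}$ the crude bound on $1/p_{i,t}$ stays bounded by an \emph{absolute} constant rather than one blowing up with $1/\Delta_T^2$ — which is why the split must isolate $\tau_t(i_t^{*})=\tau_{max}$, must use $\mathcal{G}_t$ to keep $z_t$ controlled, and is ultimately responsible both for the hypothesis $\tau_{max}\ge\frac1{12\sqrt2}$ and for the exact form of $C$. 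Interfacing Lemma \ref{N<A} cleanly with these thresholds, and checking that the accumulated logarithmic factors are exactly of order $\log^2$ and no worse, is the other delicate point.
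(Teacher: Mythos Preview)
Your decomposition has a genuine gap in Parts 2 and 3: the quantity $1/p_{i,t}$ is \emph{not} bounded by an absolute constant on the events you describe, and without such a bound the argument collapses.

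Concretely, on your good event $\mathcal{G}_t$ with $\tau_t(i_t^{*})=N_t(\gamma,i_t^{*})^{-1/2}$, the standardized gap is
\[
z_t=\bigl(\hat\mu_t(\gamma,i_t^{*})-y_t(i)\bigr)\sqrt{N_t(\gamma,i_t^{*})}
\ \ge\ \tfrac{\Delta_t(i)}{3}\sqrt{N_t(\gamma,i_t^{*})}-V_t\sqrt{N_t(\gamma,i_t^{*})}
\ \ge\ -c\sqrt{\log\tfrac{1}{1-\gamma}},
\]
because $V_t$ must be of order $\sqrt{\log\frac{1}{1-\gamma}/N_t}$ for the self-normalized Hoeffding bound to make $\mathcal{G}_t^{c}$ small. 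Hence when $N_t$ is small you only get $p_{i,t}\ge \bar\Phi\bigl(c\sqrt{\log\frac{1}{1-\gamma}}\bigr)$, so $1/p_{i,t}$ is of order $(1-\gamma)^{-c^2/2}$, \emph{polynomial} in $1/(1-\gamma)$ rather than constant or logarithmic. Multiplying this by the Lemma~\ref{N<A} count $T(1-\gamma)L(\gamma)\gamma^{-1/(1-\gamma)}$ loses a factor $(1-\gamma)^{-c^2/2}$ and destroys the bound. Your ``fact (iii)'' is only valid when the gap $y_t(i)-\hat\mu_t$ is bounded \emph{in units of} $\tau_t$, which is exactly what fails here. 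Part 3 is worse: on $\mathcal{G}_t^{c}$ with $N_t>1/\tau_{max}^2$ there is no almost-sure bound on $1/p_{i,t}$ at all (it can be $\exp(\Theta(N_t))$), so you cannot simply multiply a ``crude universal bound'' by $\mathbb{P}(\mathcal{G}_t^{c})$; the $e^{25}$ you invoke has no source in your argument.

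The paper handles this by never trying to bound $1/p_{i,t}$ pointwise on events. Instead it writes $\mathbb{E}[1/p_{i,t}]=\mathbb{E}[G_t]$, where $G_t$ is the geometric waiting time until an independent posterior sample exceeds $y_t(i)$, and bounds $\mathbb{P}(G_t>r)$ for each $r$ by combining (a) the probability that the maximum of $r$ Gaussian draws fails to exceed $\hat\mu_t(*)+z/\sqrt{N_t}$ with $z=\sqrt{\log r}+1$, and (b) the self-normalized Hoeffding probability that $\hat\mu_t(*)+z/\sqrt{N_t}<y_t(i)$. Summing over $r$ gives a universal bound $\mathbb{E}[1/p_{i,t}]\le e^{25}+3+3\log\frac{1}{1-\gamma}$ (only logarithmic, not polynomial, in $1/(1-\gamma)$; the $e^{25}$ is the threshold in $r$ at which the tail estimate becomes effective), and a refined bound $\mathbb{E}[1/p_{i,t}]-1\le 6(1-\gamma)\log\frac{1}{1-\gamma}$ once $N_t>L(\gamma)$. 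The point is that this integrates the unbounded tail of $1/p_{i,t}$ against the probability of $\hat\mu_t(*)$ being bad, which your event-wise decomposition cannot do. You need this $\text{MAX}_r$/geometric-trials device (or an equivalent moment computation) to close the argument.
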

	where $C=e^{25}+\frac{1}{F(\frac{\mu_{max}}{\tau_{max}})} +12$, $L(\gamma) =\frac{144(1+\sqrt{2})^2\log(\frac{1}{1-\gamma}+e^{25})}{\Delta_T^2}. $

	\noindent To facilitate the analysis, we  define some quantities that are independent of $t$.  \[m=\frac{12\sqrt{2}}{\sqrt{1-\gamma}}+3,n=12\sqrt{2}+3\sqrt{1-\gamma}, A(\gamma)=\frac{n^2\log(\frac{1}{1-\gamma})}{(\Delta_T)^2}, \]
	From the definition of $U_t(\gamma,i)$ given in Lemma \ref{D}, we can get 
	\begin{equation}
		\label{UA}
		U_t(\gamma,i)=\frac{\Delta_T}{m}\sqrt{\frac{A(\gamma)}{N_t(\gamma,i)}}.
	\end{equation}

	\noindent Now we can give the detailed proof. The proof is in $5$ steps:
	
	\noindent {\bfseries{Step 1}} 
	We can divide the rounds $t \in \{1,...,T\}$ into two parts: $ \{ t \in \mathcal{T}(\gamma)\}$ and $ \{ t \notin \mathcal{T}(\gamma)\}$. From Remark \ref{|S|}, the number of elements in the second part is smaller than $B_TD(\gamma)$.
	\begin{equation}
			\mathbb{E}[k_T(i)] \leq B_TD(\gamma) +\sum_{t \in \mathcal{T}(\gamma)}\mathbb{P}(i_t=i).
	\end{equation}

	\noindent {\bfseries{Step 2}} Then we consider the event $\{ N_t(\gamma,i)>A(\gamma)\}$.
	\[
	\sum_{t \in \mathcal{T}(\gamma)}\mathbb{P}(i_t=i)=\sum_{t \in\mathcal{T}(\gamma)}\mathbb{P}(i_t=i,N_t(\gamma,i)<A(\gamma))+\sum_{t \in \mathcal{T}(\gamma)}\mathbb{P}(i_t=i,N_t(\gamma,i)>A(\gamma)).
	\]

	\noindent We first bound $\sum_{t \in \mathcal{T}(\gamma)}\mathbb{P}(i_t=i,N_t(\gamma,i)<A(\gamma))$.
	\begin{equation}
		\label{EE}
		\begin{aligned}
			\sum_{t \in \mathcal{T}(\gamma)}\mathbb{P}(i_t=i,N_t(\gamma,i)<A(\gamma))
			&=\sum_{t \in \mathcal{T}(\gamma)}\mathbb{E}[\mathbb{P}(i_t=i,N_t(\gamma,i)<A(\gamma)|\mathcal{F}_{t-1})]\\
			&= \sum_{t \in \mathcal{T}(\gamma)}\mathbb{E}[\mathbb{E}[\mathbbm{1} \{i_t=i,N_t(\gamma,i)<A(\gamma)\}|\mathcal{F}_{t-1}]]\\
			& \stackrel{(a)}{=}\sum_{t \in \mathcal{T}(\gamma)}\mathbb{E}[\mathbbm{1} \{i_t=i,N_t(\gamma,i)<A(\gamma)\}] \\
			& \stackrel{(b)}{\leq} T(1-\gamma) A(\gamma) \gamma^{-1/(1-\gamma)}
		\end{aligned}
	\end{equation}
	where (a) uses  the tower rule for expectation, (b) follows from Lemma \ref{N<A}.
	Therefore,
	\begin{equation}
		\label{tmp}
			\mathbb{E}[k_T(i)] \leq T(1-\gamma) A(\gamma) \gamma^{-1/(1-\gamma)} + B_T D(\gamma) 
		+ \sum_{t \in \mathcal{T}(\gamma)}\mathbb{P}(i_t=i,N_t(\gamma,i)>A(\gamma))
	\end{equation}
	\noindent {\bfseries{Step 3}} 
	Define $E_t(\gamma,i) $ as the event that $\{ i_t=i,N_t(\gamma,i)>A(\gamma) \}$. Define $E_t^{\theta}(i)$ as the event that $\theta_t(i)<y_t(i)$. 
	This part may be decomposed as follows:
	\begin{equation}
		\label{tmp2}
		\begin{aligned}
		\sum_{t \in\mathcal{T}(\gamma)}\mathbb{P}(E_t(\gamma,i)) &= \sum_{t \in\mathcal{T}(\gamma)} \mathbb{P}(E_t(\gamma,i),\hat{\mu}_t(\gamma,i)>x_t(i)) 
			+ \sum_{t \in\mathcal{T}(\gamma)} \mathbb{P}(E_t(\gamma,i), \hat{\mu}_t(\gamma,i)<x_t(i), \overline{E_t^{\theta}(i)}  )\\
			&+ \sum_{t \in\mathcal{T}(\gamma)} \mathbb{P}(E_t(\gamma,i), \hat{\mu}_t(\gamma,i)<x_t(i) ,E_t^{\theta}(i))
		\end{aligned}
	\end{equation}
	Next, we bound the first part by Lemma \ref{hh}.
	\begin{equation}
		\label{firstpart}
		\begin{aligned}
			 &\sum_{t \in\mathcal{T}(\gamma)} \mathbb{P}(E_t(\gamma,i),\hat{\mu}_t(\gamma,i)>x_t(i)) \\
			&\leq \sum_{t \in\mathcal{T}(\gamma)} \mathbb{P}(\hat{\mu}_t(\gamma,i)>\mu_t(i)+\frac{\Delta_t(i)}{3},N_t(\gamma,i)>A(\gamma))\\
			&\leq T (1-\gamma)^{48}\log \frac{1}{1-\gamma}
		\end{aligned}
	\end{equation}
	
	\noindent {\bfseries{Step 4}} Then we bound the second part.
	
	\begin{equation}
		\label{secondpart}
		\begin{aligned}
		&\sum_{t \in\mathcal{T}(\gamma)} \mathbb{P}(E_t(\gamma,i), \hat{\mu}_t(\gamma,i)<x_t(i), \overline{E_t^{\theta}(i)} )\\
		&=\mathbb{E}[ \sum_{t \in\mathcal{T}(\gamma)} \mathbb{E}[ \mathbbm{1}\{ i_t=i,N_t(\gamma,i)>A(\gamma), \hat{\mu}_t(\gamma,i)<x_t(i), \overline{E_t^{\theta}(i)}  \}| \mathcal{F}_{t-1} ]  ]\\
		& \stackrel{(c)}{=}  \mathbb{E}[ \sum_{t \in\mathcal{T}(\gamma)} \mathbbm{1}\{ N_t(\gamma,i)>A(\gamma), \hat{\mu}_t(\gamma,i)<x_t(i)\} \mathbb{P}(i_t=i,\overline{E_t^{\theta}(i)} | \mathcal{F}_{t-1})  ]\\
		&\leq \mathbb{E}[ \sum_{t \in\mathcal{T}(\gamma)} \mathbbm{1}\{ N_t(\gamma,i)>A(\gamma), \hat{\mu}_t(\gamma,i)<x_t(i)\} \mathbb{P}(\theta_t(i)>y_t(i) | \mathcal{F}_{t-1}) ],
	\end{aligned}
\end{equation}
	\noindent where (c) uses the fact that $N_t(\gamma,i)$ and $\hat{\mu}_t(i)$ are determined by the history $\mathcal{F}_{t-1}.$
	Therefore, given the history $\mathcal{F}_{t-1}$ such that $ N_t(\gamma,i)>A(\gamma)$ and $ \hat{\mu}_t(\gamma,i)<x_t(i)$, we have 
	\[  
	\mathbb{P}( \theta_t(i)>y_t(i) | \mathcal{F}_{t-1}) ) \leq \mathbb{P}( \theta_t(i)- \hat{\mu}_t(\gamma,i) >\frac{\Delta_T}{3} | \mathcal{F}_{t-1}) ) \leq \frac{1}{2}\exp (-\frac{(\Delta_T)^2A(\gamma)}{18} )\leq \frac{1}{2}(1-\gamma)^{16}.
	 \]
	 For other $\mathcal{F}_{t-1}$, the  indicator term $\mathbbm{1}\{ N_t(\gamma,i)>A(\gamma), \hat{\mu}_t(\gamma,i)<x_t(i)\}$ will be $0$. Hence, we can bound the second part by
	 \[
	 \sum_{t \in\mathcal{T}(\gamma)} \mathbb{P}(E_t(\gamma,i), \hat{\mu}_t(\gamma,i)<x_t(i), \overline{E_t^{\theta}(i)} ) \leq \frac{T}{2}(1-\gamma)^{16}
	 \]
	
	\noindent {\bfseries{Step 5}} Finally, 
	using Lemma \ref{new},
	\[\begin{aligned} 
	\sum_{t \in\mathcal{T}(\gamma)} \mathbb{P}(E_t(\gamma,i), \hat{\mu}_t(\gamma,i)<x_t(i) ,E_t^{\theta}(i)) 
	&\leq \sum_{t \in\mathcal{T}(\gamma)} \mathbb{E}[ \frac{1-p_{i,t}}{p_{i,t}}\mathbb{P}( i_t=i_t^{*}, E_t^{\theta}(i)|\mathcal{F}_{t-1} ) ] \\
	&\stackrel{(d)}{=} \sum_{t \in\mathcal{T}(\gamma)} \mathbb{E}[\mathbb{E}[ \frac{1-p_{i,t}}{p_{i,t}}\mathbbm{1}\{ i_t=i_t^{*}, E_t^{\theta}(i) \}|\mathcal{F}_{t-1}  ]] \\
	&= \sum_{t \in\mathcal{T}(\gamma)} \mathbb{E}[ \frac{1-p_{i,t}}{p_{i,t}}\mathbbm{1}\{ i_t=i_t^{*}, E_t^{\theta}(i) \}  ]\\
	&\leq CT(1-\gamma)L(\gamma)\gamma^{-1/(1-\gamma)}\log\frac{1}{1-\gamma}.
	\end{aligned}\]
	where (d)  uses the fact that $p_{i,t}$ is fixed given $\mathcal{F}_{t-1}$, 
	
	Substituting the results in Step 3-5 to Equation \eqref {tmp2} and Equation \eqref {tmp}, 
	\[
	\begin{aligned}
		\mathbb{E}[k_T(i)] &\leq T(1-\gamma) A(\gamma) \gamma^{-1/(1-\gamma)} + B_T D(\gamma) 
		+ \sum_{t \in \mathcal{T}(\gamma)}\mathbb{P}(i_t=i,N_t(\gamma,i)>A(\gamma))\\
		&\leq T(1-\gamma) A(\gamma) \gamma^{-1/(1-\gamma)} + B_T D(\gamma)  + T(1-\gamma)\log\frac{1}{1-\gamma}\\&+ CT(1-\gamma)L(\gamma)\gamma^{-1/(1-\gamma)}\log\frac{1}{1-\gamma}\\
		&\leq B_TD(\gamma) + (C+2)L(\gamma)\gamma^{-1/(1-\gamma)}T(1-\gamma)\log\frac{1}{1-\gamma}.
	\end{aligned}
	\]

	\subsection{Proofs of Theorem \ref{result2}}
	The proof of Theorem \ref{result2} is similar to  Theorem \ref{result1}. The main difference is that there is no pseudo-stationary phase under smoothly changing settings.
	Fortunately, conclusions (Lemma \ref{D1},Lemma \ref{new1}) similar to Lemma \ref{D} and Lemma \ref{new} still hold.
	
	Recall that
	\[U_t(\gamma,i)=\sqrt{\frac{(1-\gamma)\log \frac{1}{1-\gamma}}{N_t(\gamma,i)}}, D(\gamma)=\frac{\log((1-\gamma)^2\log( \frac{1}{1-\gamma}) )}{\log\gamma},\]\[ m=\frac{12\sqrt{2}}{\sqrt{1-\gamma}}+3,n=12\sqrt{2}+3\sqrt{1-\gamma}\]
	
	\begin{lemma}
		\label{D1}
		For any $t$ and $\sigma$ satisfies Assumption \ref{assumption_sigma},
		\begin{equation}
			\label{U}
			| \mu_t(i) - \ddot{\mu}_t(\gamma,i| \leq U_t(\gamma,i)+\sigma D(\gamma), 
		\end{equation}
	\end{lemma}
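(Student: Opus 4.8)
The plan is to write $\mu_t(i)-\ddot{\mu}_t(\gamma,i)$ as a discounted weighted average of the one-step drifts $\mu_t(i)-\mu_j(i)$ and to split that average at the horizon $D(\gamma)$, exactly as in the proof of Lemma \ref{D}; the only new ingredient is that here the ``recent'' block no longer vanishes (there is no pseudo-stationary phase), and instead it is controlled by the Lipschitz Assumption \ref{assumption_sigma}. Since $\ddot{\mu}_t(\gamma,i)=\frac{1}{N_t(\gamma,i)}\sum_{j=1}^{t}\gamma^{t-j}\mathbbm{1}\{i_j=i\}\mu_j(i)$ and $N_t(\gamma,i)=\sum_{j=1}^{t}\gamma^{t-j}\mathbbm{1}\{i_j=i\}$, we have
\[
\mu_t(i)-\ddot{\mu}_t(\gamma,i)=\frac{1}{N_t(\gamma,i)}\sum_{j=1}^{t}\gamma^{t-j}\mathbbm{1}\{i_j=i\}\bigl(\mu_t(i)-\mu_j(i)\bigr),
\]
so by the triangle inequality it suffices to bound $\sum_{j}\gamma^{t-j}\mathbbm{1}\{i_j=i\}|\mu_t(i)-\mu_j(i)|$ and divide by $N_t(\gamma,i)$.

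First I would split the sum into the indices $j$ with $t-D(\gamma)<j\le t$ and those with $j\le t-D(\gamma)$. On the recent block, Assumption \ref{assumption_sigma} gives $|\mu_t(i)-\mu_j(i)|\le\sigma(t-j)\le\sigma D(\gamma)$, and since $\sum_{t-D(\gamma)<j\le t}\gamma^{t-j}\mathbbm{1}\{i_j=i\}\le N_t(\gamma,i)$, this block contributes at most $\sigma D(\gamma)$ after dividing by $N_t(\gamma,i)$. On the old block I would use only the trivial bound $|\mu_t(i)-\mu_j(i)|\le 1$ (rewards lie in $[0,1]$) together with the geometric tail estimate $\sum_{j\le t-D(\gamma)}\gamma^{t-j}\le\sum_{s\ge\lceil D(\gamma)\rceil}\gamma^{s}=\gamma^{\lceil D(\gamma)\rceil}/(1-\gamma)\le\gamma^{D(\gamma)}/(1-\gamma)$; the defining identity $\gamma^{D(\gamma)}=(1-\gamma)^2\log\frac{1}{1-\gamma}$ then turns this into $\frac{(1-\gamma)\log\frac{1}{1-\gamma}}{N_t(\gamma,i)}$ after dividing by $N_t(\gamma,i)$.

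The last step is to recognize this old-block bound as $U_t(\gamma,i)$. Since the old block is also trivially at most $1$ (a subaverage of quantities in $[-1,1]$), it is bounded by $\min\bigl\{1,\ \frac{(1-\gamma)\log\frac{1}{1-\gamma}}{N_t(\gamma,i)}\bigr\}$, and using $\min\{1,y\}\le\sqrt{y}$ for $y\ge 0$ this is at most $\sqrt{\frac{(1-\gamma)\log\frac{1}{1-\gamma}}{N_t(\gamma,i)}}=U_t(\gamma,i)$. Adding the two blocks yields $|\mu_t(i)-\ddot{\mu}_t(\gamma,i)|\le U_t(\gamma,i)+\sigma D(\gamma)$, which is the claim; note this holds for \emph{every} $t$, with no pseudo-stationarity needed. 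I expect the only delicate points to be bookkeeping around the non-integrality of $D(\gamma)$ in the index split (handled via $\lceil D(\gamma)\rceil\ge D(\gamma)$ and $\gamma<1$) and the $\min$--$\sqrt{\cdot}$ trick that converts the $1/N_t(\gamma,i)$ decay of the tail into the $1/\sqrt{N_t(\gamma,i)}$ shape of $U_t(\gamma,i)$; both are light. Structurally the argument is identical to Lemma \ref{D} except that in the pseudo-stationary case the recent block is identically zero, whereas here it accounts for the extra $\sigma D(\gamma)$ term.
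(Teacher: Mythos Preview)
Your proposal is correct and follows essentially the same route as the paper: both split the weighted average at the horizon $D(\gamma)$, bound the old block exactly as in Lemma \ref{D} (trivial bound $|\mu_t-\mu_j|\le 1$, geometric tail, then the $\min\{1,y\}\le\sqrt{y}$ trick to recover $U_t(\gamma,i)$), and bound the recent block by $\sigma D(\gamma)$ via Assumption \ref{assumption_sigma}. If anything, you are slightly more careful than the paper about the triangle inequality and the non-integrality of $D(\gamma)$.
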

	
	\begin{lemma}
		\label{new1}
		Let $p_{i,t}=\mathbb{P}(\theta_t(*)>y_t(i)-\sigma D(\gamma)|\mathcal{F}_{t-1})$. For any  $i \neq i_t^{*}$, 
		\[
		\sum_{t =1}^{T} \mathbb{E}[\frac{1-p_{i,t}}{p_{i,t}}\mathbbm{1}\{ i_t=i_t^{*}, \theta_t(i)<y_t(i)-\sigma D(\gamma) \} ] \leq CT(1-\gamma)L(\gamma)\gamma^{-1/(1-\gamma)}\log\frac{1}{1-\gamma}.
		\]
	\end{lemma}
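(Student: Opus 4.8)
The plan is to carry out the argument that proves Lemma~\ref{new}, but over all rounds $t\in\{1,\dots,T\}$ instead of only $t\in\mathcal T(\gamma)$, with Lemma~\ref{D1} in place of Lemma~\ref{D}. Passing from $\mathcal T(\gamma)$ to the full horizon costs exactly the extra $\sigma D(\gamma)$ that Lemma~\ref{D1} adds to the drift bound, and this is precisely compensated by shifting every threshold down by $\sigma D(\gamma)$ — which is why $p_{i,t}$ is defined with $y_t(i)-\sigma D(\gamma)$ and the indicator event is $\{\theta_t(i)<y_t(i)-\sigma D(\gamma)\}$. Concretely: fix $i\neq i_t^{*}$, condition on $\mathcal F_{t-1}$ so that $\theta_t(*)\sim\mathcal N(\hat\mu_t(\gamma,*),\tau_t(*)^2)$ and $p_{i,t}$ is $\mathcal F_{t-1}$-measurable; pull $\tfrac{1-p_{i,t}}{p_{i,t}}$ out of the inner conditional expectation (tower rule) and bound the summand, using $\mathbb P(i_t=i_t^{*}\mid\mathcal F_{t-1})\le1$, by $\mathbb E[\tfrac{1-p_{i,t}}{p_{i,t}}\,\mathbbm{1}\{i_t=i_t^{*}\}]$. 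Then split on the discounted count $N_t(\gamma,*)$ of the optimal arm exactly as in the proof of Lemma~\ref{new}.

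In the under-sampled case, $N_t(\gamma,*)$ is below an $O(1/\tau_{max}^2)$ threshold, so the variance cap is active and $\tau_t(*)=\tau_{max}$; since $\hat\mu_t(\gamma,*)\ge0$ and $y_t(i)-\sigma D(\gamma)\le\mu_t(*)\le\mu_{max}$, the Gaussian tail lower bound $1-\Phi(x)\ge F(x)$ gives $p_{i,t}\ge F(\mu_{max}/\tau_{max})$, hence $\tfrac{1-p_{i,t}}{p_{i,t}}\le 1/F(\mu_{max}/\tau_{max})$, and a counting estimate of the Lemma~\ref{N<A} type bounds the number of such rounds with $i_t=i_t^{*}$ by $O\big(T(1-\gamma)\gamma^{-1/(1-\gamma)}\big)$; this is the origin of the $1/F(\mu_{max}/\tau_{max})$ summand of $C$. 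In the well-sampled case $N_t(\gamma,*)>A(\gamma)$ with $A(\gamma)=n^2\log\frac1{1-\gamma}/\Delta_T^2$ as in the proof of Theorem~\ref{result1}: by Lemma~\ref{D1}, $|\mu_t(*)-\ddot\mu_t(\gamma,*)|\le U_t(\gamma,*)+\sigma D(\gamma)$, and a self-normalised Hoeffding-type inequality (as in \citeA{garivier2011upper}) controls $|\hat\mu_t(\gamma,*)-\ddot\mu_t(\gamma,*)|$ by $O(U_t(\gamma,*))$ outside an event of probability $(1-\gamma)^{\Theta(1)}$; since $N_t(\gamma,*)>A(\gamma)$ makes $U_t(\gamma,*)$ a small fraction of $\Delta_T$ and $\tau_t(*)\le1/\sqrt{A(\gamma)}$ (the cap being inactive because $\tau_{max}\ge\frac1{12\sqrt2}\ge1/\sqrt{A(\gamma)}$), on the complement $\hat\mu_t(\gamma,*)$ exceeds $y_t(i)-\sigma D(\gamma)$ by $\Theta(\sqrt{\log\frac1{1-\gamma}})$ standard deviations, forcing $1-p_{i,t}\le\frac12(1-\gamma)^{\Theta(1)}$ and $\tfrac{1-p_{i,t}}{p_{i,t}}\le2(1-p_{i,t})$ there, while on the exceptional event $\tfrac{1-p_{i,t}}{p_{i,t}}\le1/F(\mu_{max}/\tau_{max})$ again. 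An intermediate range of $N_t(\gamma,*)$ is split dyadically exactly as for Lemma~\ref{new}; summing all pieces over $t\le T$ produces the $L(\gamma)\,\gamma^{-1/(1-\gamma)}\,T(1-\gamma)\log\frac1{1-\gamma}$ scaling together with the $e^{25}+12$ book-keeping constants in $C$.

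Two hypotheses of Theorem~\ref{result2} are consumed: $2\sigma D(\gamma)<\Delta/3$ keeps $x_t(i)<y_t(i)-\sigma D(\gamma)$, so the well-sampled step still separates $\hat\mu_t(\gamma,*)$ from the shifted threshold, while $\tau_{max}\ge\frac1{12\sqrt2}$ (sharply $\tau_{max}\ge\frac{\Delta_T}{12\sqrt2+3\sqrt{1-\gamma}}(\log\frac1{1-\gamma})^{-1/2}$, as noted in the Remark) ensures the cap is inactive before $N_t(\gamma,*)$ reaches $A(\gamma)$. I expect the well-sampled case to be the main obstacle: one must apply the self-normalised inequality with the \emph{discounted} count $N_t(\gamma,*)$ rather than a raw count — via the supermartingale of \citeA{garivier2011upper} — and simultaneously control its interaction with the $\tau_{max}$ truncation, which is what makes the calibration of $A(\gamma)$ (and hence the lower bound on $\tau_{max}$) delicate. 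By contrast, dropping the pseudo-stationary restriction is essentially free, because Lemma~\ref{D1} — unlike Lemma~\ref{D} — holds for every round, and the $\sigma D(\gamma)$ it adds is exactly the shift built into the statement.
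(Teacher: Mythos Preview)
Your high-level plan — replace Lemma~\ref{D} by Lemma~\ref{D1}, shift every threshold down by $\sigma D(\gamma)$, and otherwise rerun the argument of Lemma~\ref{new} over all rounds — is exactly what the paper does. Its proof of Lemma~\ref{new1} says explicitly that it is ``almost the same as Lemma~\ref{new}'' and only displays the spots where the shift enters, namely wherever one uses $|\mu_t(*)-\ddot\mu_t(\gamma,*)|\le U_t(\gamma,*)+\sigma D(\gamma)$ from Lemma~\ref{D1}.

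However, your description of what the core argument (i.e.\ the proof of Lemma~\ref{new}) looks like diverges from the paper, and the version you sketch has a genuine gap. In your ``well-sampled'' case $N_t(\gamma,*)>A(\gamma)$ the cap is inactive, so $\tau_t(*)=1/\sqrt{N_t(\gamma,*)}$ is small; you then assert that on the exceptional Hoeffding event one still has $\tfrac{1-p_{i,t}}{p_{i,t}}\le 1/F(\mu_{max}/\tau_{max})$. That is not true: the bound $p_{i,t}\ge F(\mu_{max}/\tau_{max})$ was derived precisely from $\tau_t(*)=\tau_{max}$, and once the cap is off there is no uniform pointwise lower bound on $p_{i,t}$. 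If $\hat\mu_t(\gamma,*)$ sits well below the shifted threshold while $\tau_t(*)$ is tiny, $1/p_{i,t}$ can be arbitrarily large, and an exceptional set of probability $(1-\gamma)^{\Theta(1)}$ does not control $\mathbb E\bigl[\tfrac{1}{p_{i,t}}\mathbbm 1_{\text{exceptional}}\bigr]$. For the same reason, a dyadic split on $N_t(\gamma,*)$ in the intermediate range runs into the identical obstruction.

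The paper avoids any pointwise control of $p_{i,t}$ once the cap is inactive. It writes $\mathbb E[1/p_{i,t}]=\mathbb E[G_t]$, where $G_t$ is the first success time of i.i.d.\ Bernoulli($p_{i,t}$) trials, and bounds $\mathbb P(G_t>r)$ by showing that the \emph{maximum} of $r$ independent draws from $\mathcal N(\hat\mu_t(\gamma,*),1/N_t(\gamma,*))$ exceeds $y_t(i)-\sigma D(\gamma)$ with high probability; the self-normalized Hoeffding inequality enters only to locate $\hat\mu_t(\gamma,*)$ inside this maximum argument. This yields $\mathbb E[1/p_{i,t}]\le e^{25}+3+3\log\tfrac{1}{1-\gamma}$ in general, and the sharper $\mathbb E[1/p_{i,t}]\le 1+6(1-\gamma)\log\tfrac{1}{1-\gamma}$ once $N_t(\gamma,*)>L(\gamma)$. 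The relevant split is therefore on $N_t(\gamma,*)\lessgtr L(\gamma)$ (combined via Lemma~\ref{N<A}), not on $A(\gamma)$, and there is no dyadic decomposition.
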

	where $C=e^{25}+\frac{1}{F(\frac{\mu_{max}}{\tau_{max}})} +12$, $L(\gamma) =\frac{144(1+\sqrt{2})^2\log(\frac{1}{1-\gamma}+e^{25})}{\Delta^2}. $ 
	
	We redefine $A(\gamma)$ in smoothly changing settings as
	\[ A(\gamma)=\frac{n^2\log(\frac{1}{1-\gamma})}{(\Delta/3-2\sigma D(\gamma))^2}. \]
	
	Since there is no pseudo-stationary phase, the proof is only need divide into four steps. 

	\noindent {\bfseries {Step 1}}
	The first step is exactly the same as the step 2 of  Theorem \ref{result1}, so we can get the folowing directly:
	\begin{equation}
		\label{tmp3}
		\mathbb{E}[k_T(i)] \leq F\Delta T^{\beta}+ T(1-\gamma) A(\gamma) \gamma^{-1/(1-\gamma)} 
		+ \sum_{t=1}^{T}\mathbb{P}(i_t=i,N_t(\gamma,i)>A(\gamma))
	\end{equation}
	
	\noindent {\bfseries {Step 2}} Define $E_t(\gamma,i) $ as the event that $\{ i_t=i,N_t(\gamma,i)>A(\gamma) \}$. Then 
	\begin{equation}
		\begin{aligned}
			\sum_{t =1}^{T}\mathbb{P}(E_t(\gamma,i)) &= \sum_{t =1}^{T} \mathbb{P}(E_t(\gamma,i),\hat{\mu}_t(\gamma,i)>x_t(i)+\sigma D(\gamma)) \\
			&+ \sum_{t=1}^{T} \mathbb{P}(E_t(\gamma,i), \hat{\mu}_t(\gamma,i)<x_t(i)+\sigma D(\gamma), \theta_t(i)>y_t(i)-\sigma D(\gamma)  )\\
			&+ \sum_{t =1}^{T} \mathbb{P}(E_t(\gamma,i), \hat{\mu}_t(\gamma,i)<x_t(i)+\sigma D(\gamma), \theta_t(i)<y_t(i)-\sigma D(\gamma) )
		\end{aligned}
	\end{equation}
	The first part can be bounded by Lemma \ref{hh},
	\[ \sum_{t =1}^{T} \mathbb{P}(E_t(\gamma,i),\hat{\mu}_t(\gamma,i)>x_t(i)+\sigma D(\gamma)) \leq T(1-\gamma)^{48} \log \frac{1}{1-\gamma}\]
	
	\noindent {\bfseries {Step 3}} The second part can be bounded through a similar method as Equation \eqref {secondpart}.
	\begin{equation}
		\begin{aligned}
			&\sum_{t=1}^{T} \mathbb{P}(E_t(\gamma,i), \hat{\mu}_t(\gamma,i)<x_t(i)+\sigma D(\gamma), \theta_t(i)>y_t(i)-\sigma D(\gamma)  )\\
			&\leq \mathbb{E}[ \sum_{t =1}^{T} \mathbbm{1}\{ N_t(\gamma,i)>A(\gamma), \hat{\mu}_t(\gamma,i)<x_t(i)+\sigma D(\gamma)\} \mathbb{P}(\theta_t(i)>y_t(i)-\sigma D(\gamma) | \mathcal{F}_{t-1}) ]
		\end{aligned}
	\end{equation}
	Given the history $\mathcal{F}_{t-1}$ such that $ N_t(\gamma,i)>A(\gamma)$ and $ \hat{\mu}_t(\gamma,i)<x_t(i)+\sigma D(\gamma)$, we have 
	\[  
	\begin{aligned}
	\mathbb{P}( \theta_t(i)>y_t(i)-\sigma D(\gamma) | \mathcal{F}_{t-1}) ) 
	&\leq \mathbb{P}( \theta_t(i)- \hat{\mu}_t(\gamma,i) >\frac{\Delta}{3}-2\sigma D(\gamma) | \mathcal{F}_{t-1}) ) 
	\\
	&\leq \frac{1}{2}\exp (-\frac{(\Delta/3-2\sigma D(\gamma))^2A(\gamma)}{2} )
	\\
	&\leq \frac{1}{2}(1-\gamma)^{144}.
	\end{aligned}
	\]
	For other $\mathcal{F}_{t-1}$, the  indicator term $\mathbbm{1}\{ N_t(\gamma,i)>A(\gamma), \hat{\mu}_t(\gamma,i)<x_t(i)+\sigma D(\gamma)\}$ will be $0$. Hence, we can bound the second part by $\frac{T}{2}(1-\gamma)^{144}.$

	\noindent {\bfseries {Step 4}} 
	Using Lemma \ref{new1}, we can get 
	\[
	\begin{aligned}
	&\sum_{t =1}^{T} \mathbb{P}(E_t(\gamma,i), \hat{\mu}_t(\gamma,i)<x_t(i)+\sigma D(\gamma), \theta_t(i)<y_t(i)-\sigma D(\gamma) )\\
	&\leq CT(1-\gamma)L(\gamma)\gamma^{-1/(1-\gamma)}\log\frac{1}{1-\gamma}.
	\end{aligned}
	\]
	Substituting  all into Equation \eqref {tmp3}, we can obtain the statement of Theorem \ref{result2}.
	

	\section{Experiments}
	
	In this section, we empirically compare the performance of DS-TS  w.r.t. the state-of-the-art  algorithms  on  Bernoulli and an arbitrarily generated bounded reward distributions. Specifically, we compare DS-TS with Thompson Sampling (TS)\cite{thompson1933likelihood} to evaluate the improvement obtained thanks to the employment of the discounted factor $\gamma$. We also compare DS-TS with  SW-TS \cite{trovo2020sliding} to evaluate the performance of sliding window and discounted factor.  Furthermore, we compare DS-TS with 
	another discounted method, DS-UCB \cite{garivier2011upper}, to evaluate the effect of Thompson Sampling and UCB. Moreover, we compare DS-TS with some novel and efficient algorithms such as CUSUM \cite{liu2018change}, M-UCB \cite{cao2019nearly} and LB-SDA \cite{baudry2021limited}. We measure the performance of each algorithm with the cumulative expected regret defined in Equation \eqref {regret}. The expected regret is averaged on $100$ independently runs.
	The 95\% confidence interval is obtained by  performing  $100$  independent runs and is shown as a semi-transparent region in the figure. 
	
	\subsection{Abruptly Changing Settings}
	\label{acs}
	
	{\bfseries{Experimental Setting  }}
	\quad The time horizon is set as $T=100000$. We split the time horizon into $5,10,20$ phases of equal length and use a number of arms $K = \{ 5,10,20,30\}$,respectively. We only show the results of some of the experimental settings, one can run the code on  online website to get more results\footnote{Our code is available at https://github.com/qh1874/nonmab }.
	
	\begin{figure}[!htbp] 
		\centering{\includegraphics[height=6cm,width=9cm]{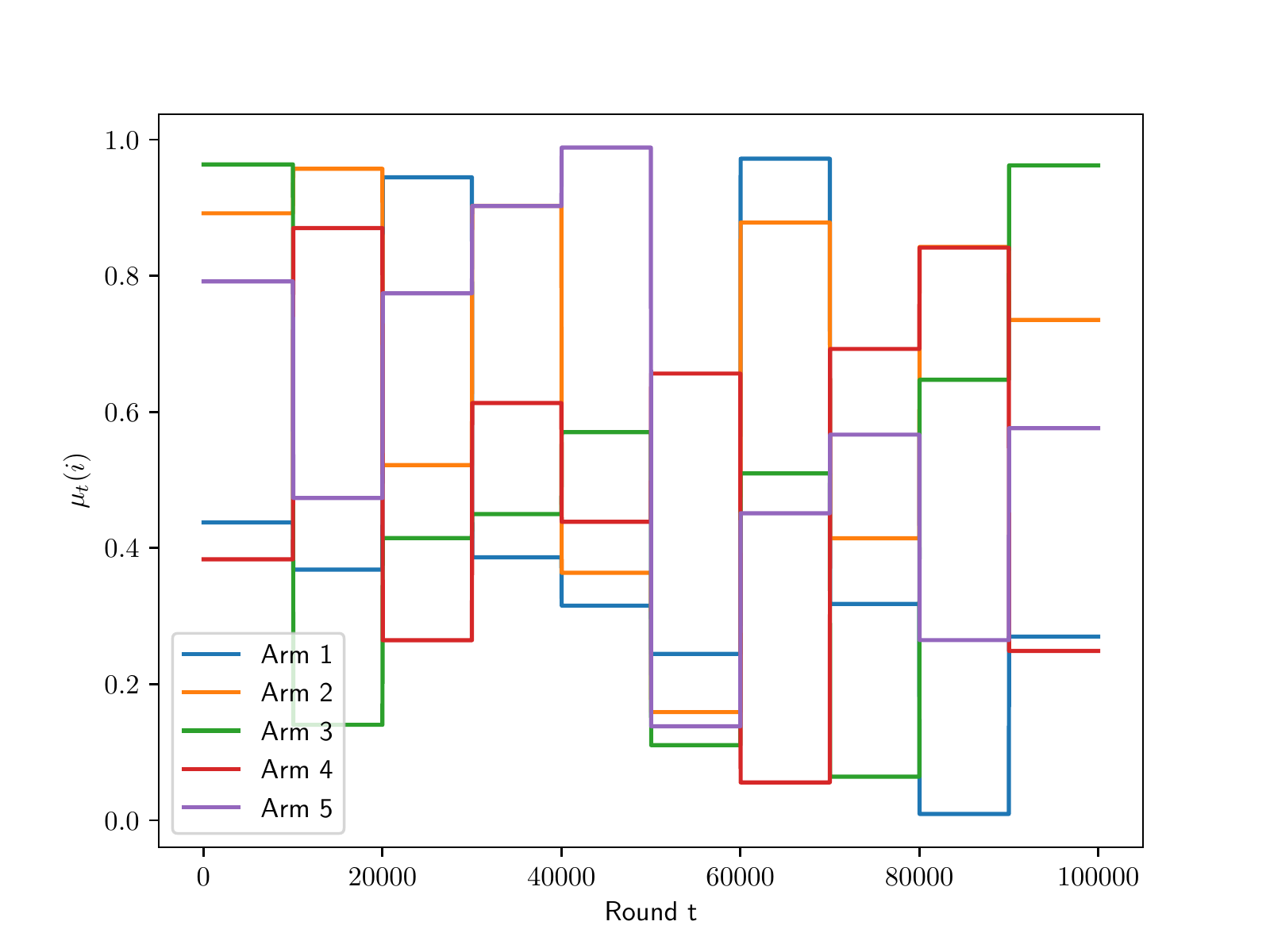}}
		\caption{$ K=5,B_T=10 $ for Bernoulli rewards.} 
		\label{fig_Ber}
	\end{figure}
	At each breakpoint, the expected value $\mu_t(i)$ of each arm $i$ is drawn from a uniform
	distribution over $[0, 1]$. In the stationary phase, the rewards distributions remain unchanged.
	The Bernoulli arms  for each phase are generated  as 
	$ \mu_t(i) \sim U(0,1). $
	Figure \ref{fig_Ber} depicts the expected rewards for Bernoulli arms with $K = 5$ and $B_T = 10$.

	Based on Corollary \ref{corollary1}, we set $\gamma= 1-\sqrt{B_T/T}$. $\tau_{max}$ is an important parameter that not only ensures the exploration ability of the algorithm but also prevents the sampling deviating too much from the arm's expectation.
	$\tau_{max}$ is generally $1/5-1/3$  of the upper bound of the expected rewards. In this experiment, the upper bound of expected rewards $\mu_{max}=1$, we take $\tau_{max}$ as $1/5$. To allow for fair comparison, DS-UCB uses the discount factor $\gamma= 1- \sqrt{B_T/T}/4, B=1,\xi=2/3$ suggested by \citeA{garivier2011upper}.  Based on \cite{baudry2021limited}, we set $\tau=2\sqrt{T\log(T)/B_T}$ for LB-SDA and SW-TS. For changepoint detection algorithm M-UCB, we set $w=800, b= \sqrt{w/2\log(2KT^2)}$ suggested by \citeA{cao2019nearly}. But set the amount of exploration $\gamma = \sqrt{KB_T\log(T)/T}$.  In practice, it has been found that using this value instead of the one guaranteed in \cite{cao2019nearly} will improve empirical performance \cite{baudry2021limited}. For CUSUM, following from \cite{liu2018change}, we set $\alpha= \sqrt{B_T/T\log(T/B_T)}$ and $h=\log(T/B_T)$. For our experiment settings, we choose $M=50, \epsilon=0.05$. Based on \cite{auer2002nonstochastic}, the parameters $\alpha$ and $\gamma$  for EXP3S are set as follows: $\alpha=1/T, \gamma=\min(1,\sqrt{K(e+B_T\log(KT)/((e-1)T))})$.
	
	\begin{figure}[!htbp] 
		\centering 
		\subfigure[]{ 
			\begin{minipage}[b]{0.45 \textwidth} 
				\centerline{	\includegraphics[width=7cm]{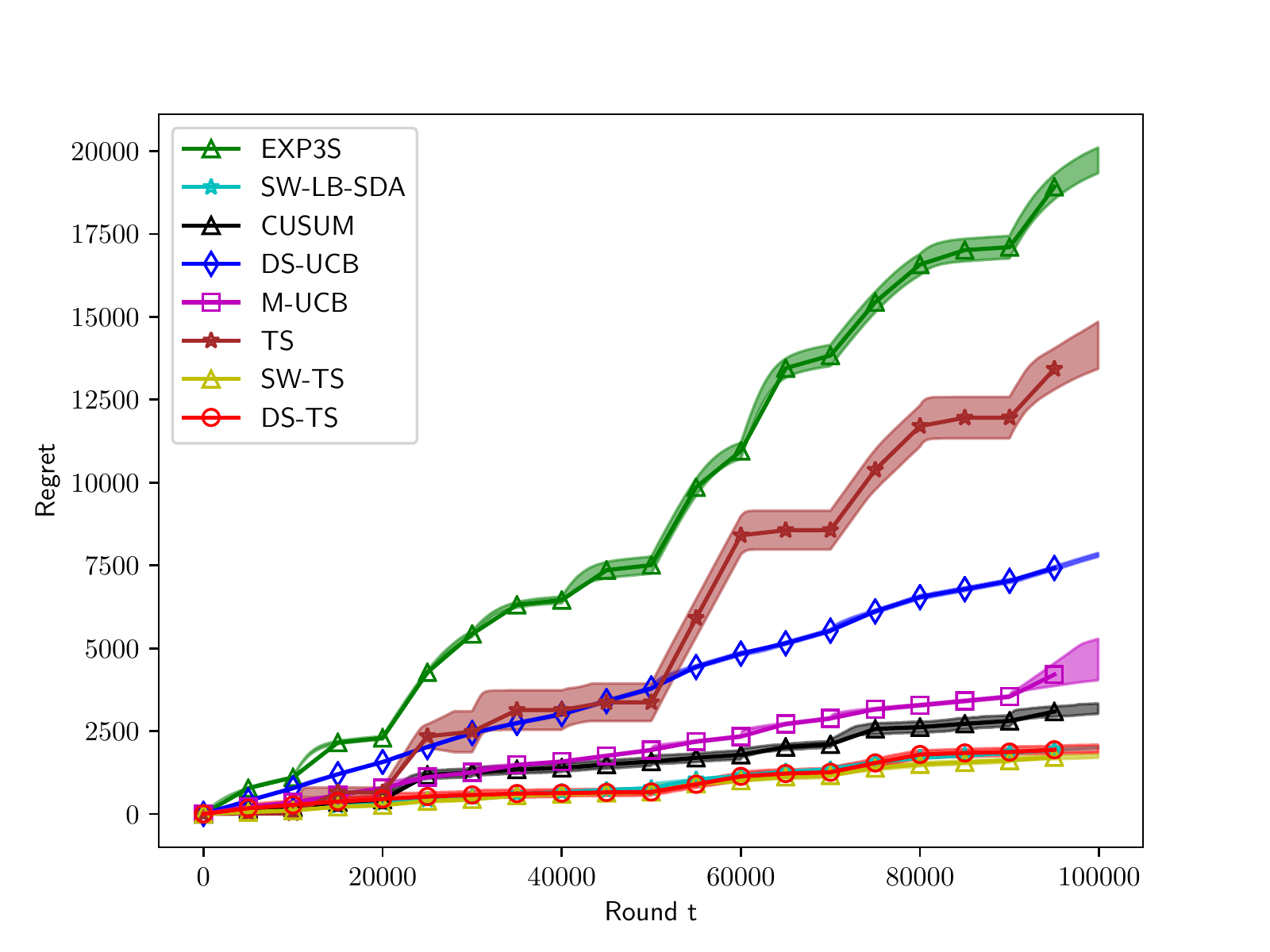}}
			\end{minipage} 
		} 
		\subfigure[]{
			\begin{minipage}[b]{0.45\textwidth}
				
				\centerline{	\includegraphics[width=7cm]{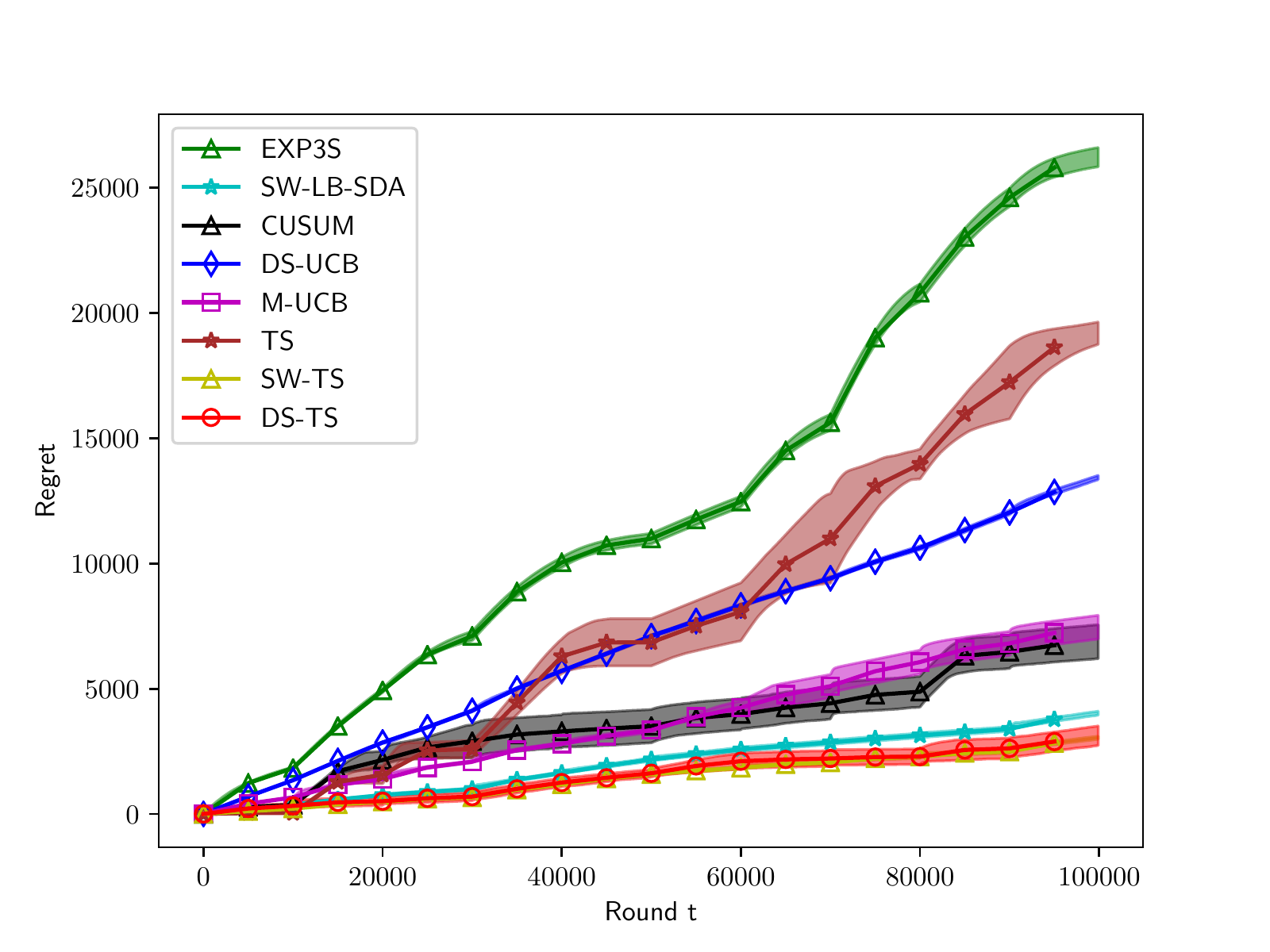} }
			\end{minipage} 	
		}
		\subfigure[]{ 
			\begin{minipage}[b]{0.45 \textwidth} 
				\centerline{	\includegraphics[width=7cm]{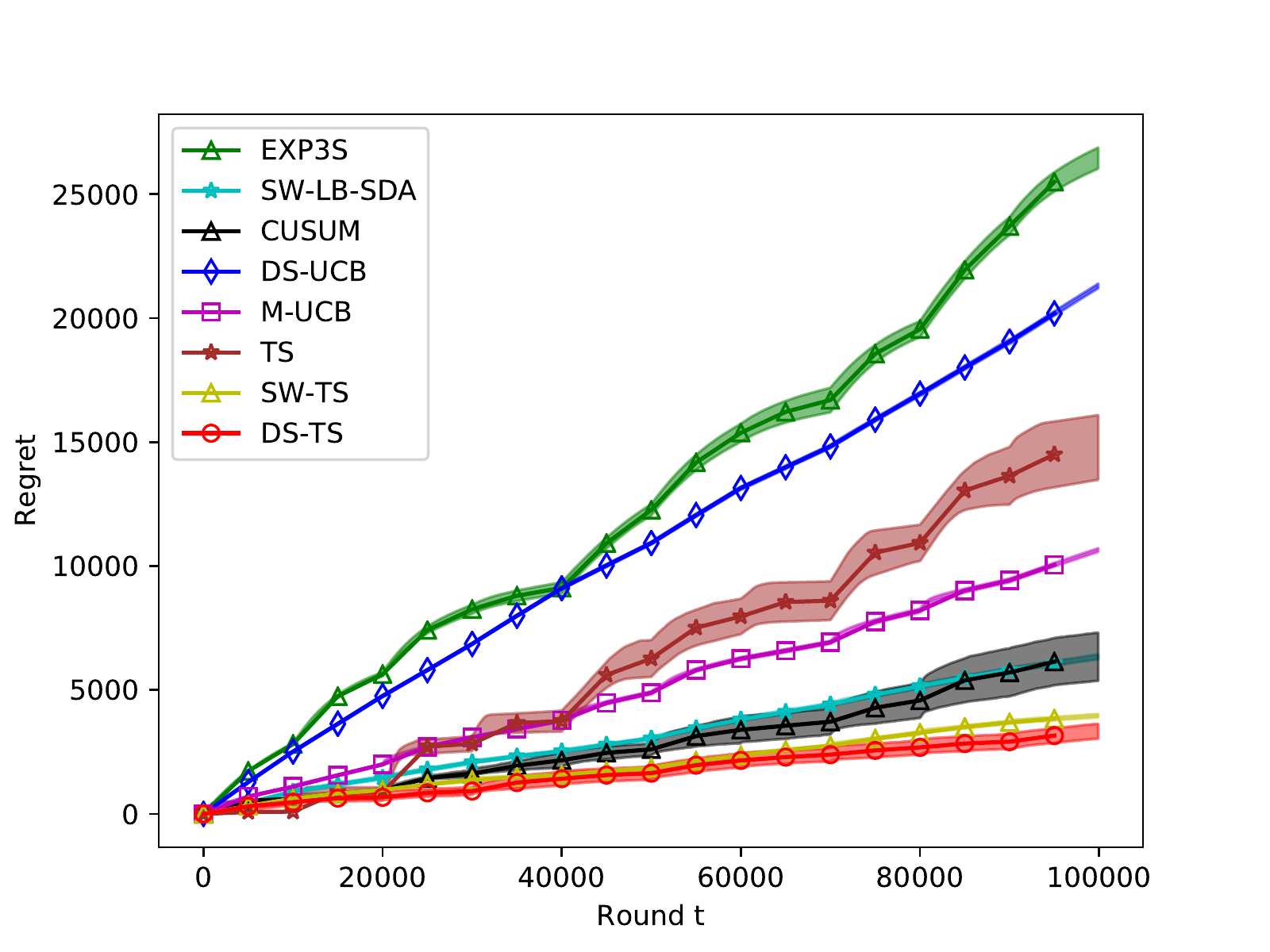}}
			\end{minipage} 
		} 
		\subfigure[]{
			\begin{minipage}[b]{0.45\textwidth}
				
				\centerline{	\includegraphics[width=7cm]{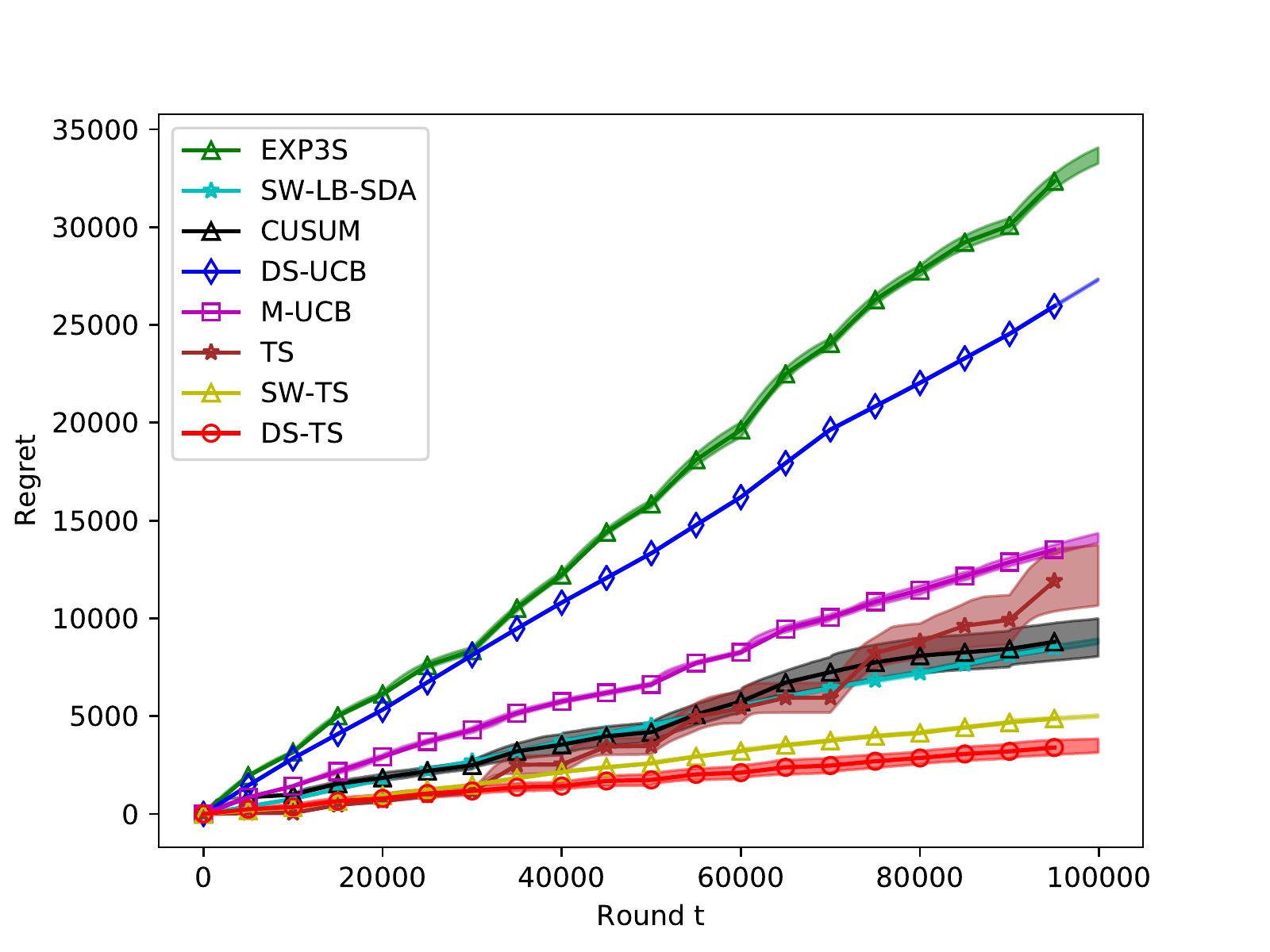} }
			\end{minipage} 	
		}
		
		\caption{Abruptly changing settings. Settings with $K=5,B_T=10$ (a), $K=10,B_T=10$ (b), $K=20,B_T=10 $ (c) and $K=30,B_T=10$ (d).  } 
		\label{fig:Bernoulli}
	\end{figure}
	
	\noindent{\bfseries{Results}}
	\quad Figure \ref{fig:Bernoulli}  report the results for Bernoulli arms in abruptly changing settings. It can be observed that our method and  SW-TS has almost the same performance.  Thompson Sampling (TS) is an algorithm for stationary MAB problems, so it oscillates a lot at the breakpoint. The changepoint detection algorithm CUSUM also shows competitive performance. Note that, our experiment does not satisfy the detectability assumption  of CUSUM \cite{liu2018change}. When the number of arms  are large, several algorithms, such as EXP3S and DS-UCB, have a near-linear regret, while our algorithm still performs well. While two TS-based algorithms, DS-TS and SW-TS, still work well. This is consistent with the results in \cite{bayati2020unreasonable}: when the number of arms is relatively large, algorithms based on TS are often better than that of UCB-class algorithms.
	
		\begin{figure*}[!htbp] 
		\centering 
		\subfigure[]{ 
			\begin{minipage}[b]{0.45 \textwidth} 
				\centerline{	\includegraphics[width=7cm]{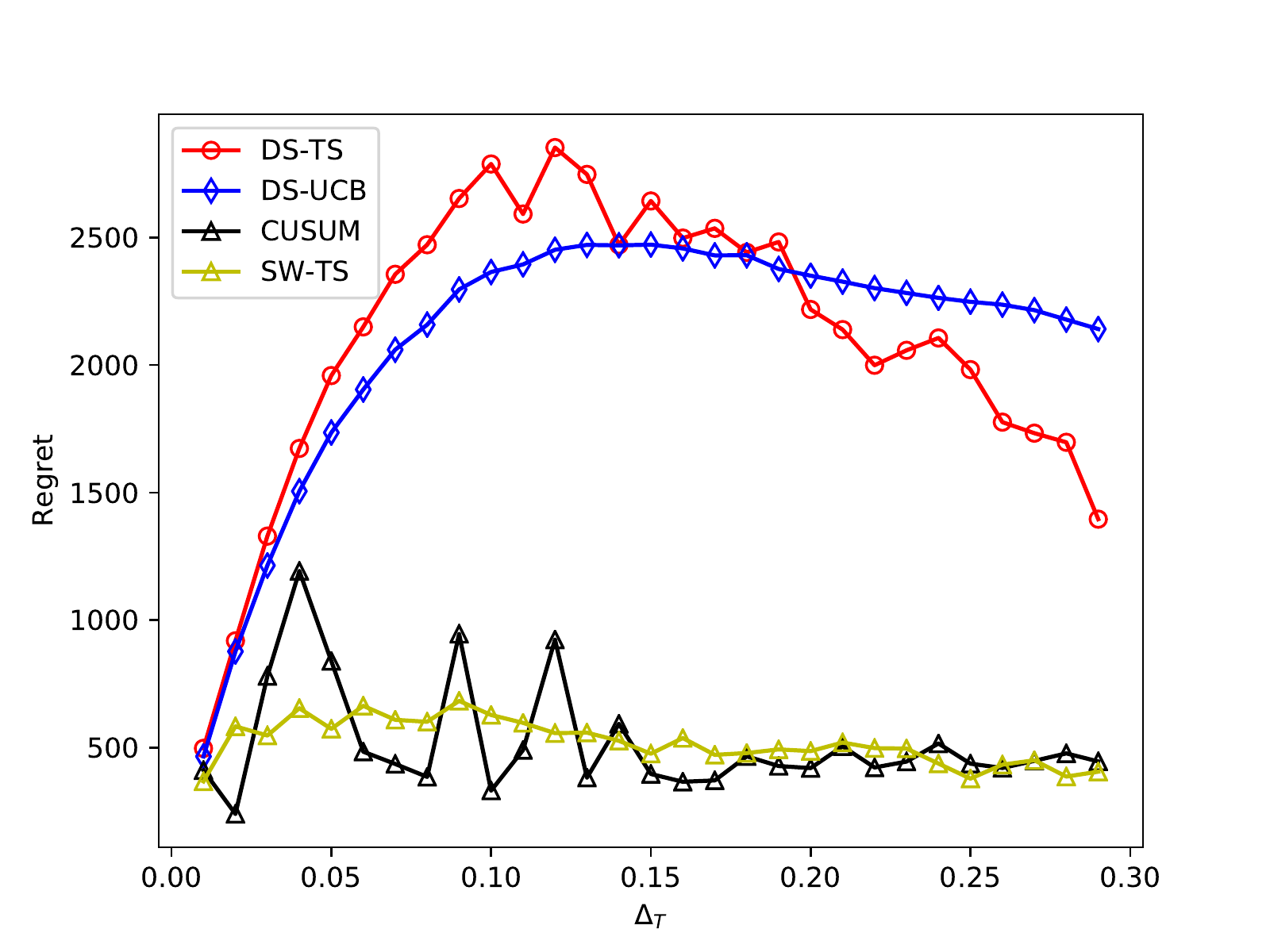}}
			\end{minipage} 
		} 
		\subfigure[]{
			\begin{minipage}[b]{0.45\textwidth}
				
				\centerline{	\includegraphics[width=7cm]{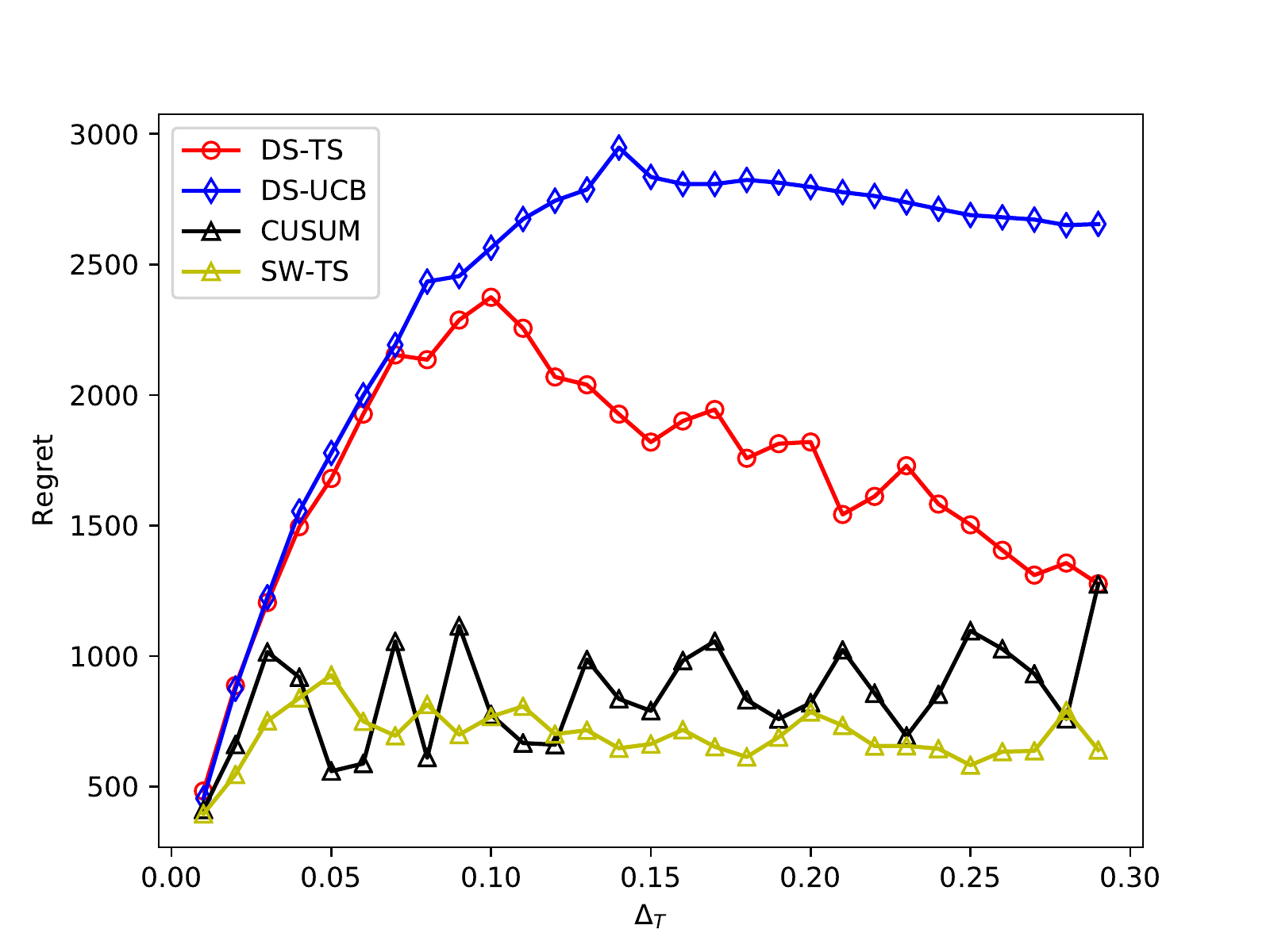} }
			\end{minipage} 	
		}
		\subfigure[]{ 
			\begin{minipage}[b]{0.45 \textwidth} 
				\centerline{	\includegraphics[width=7cm]{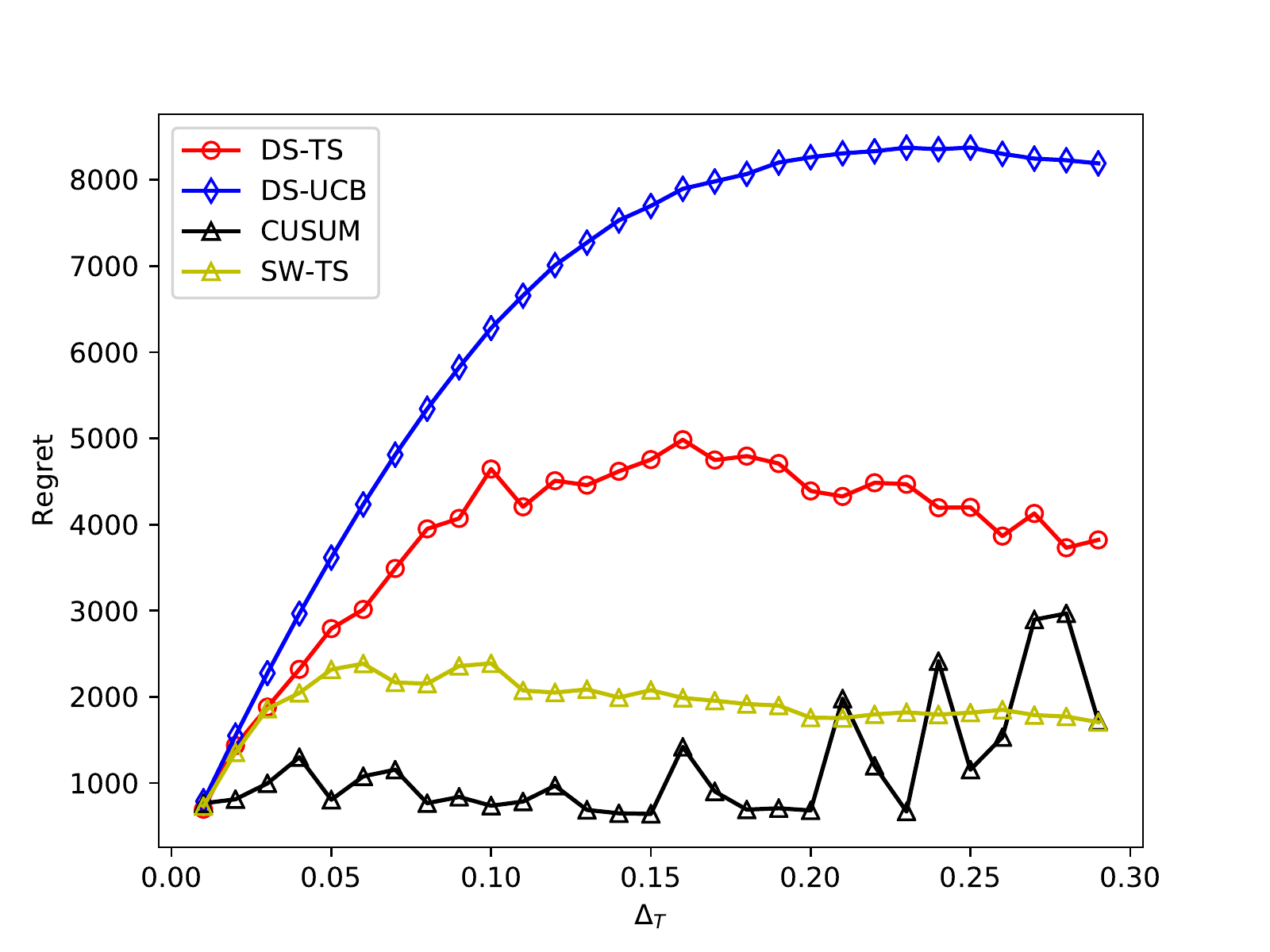}}
			\end{minipage} 
		} 
		\subfigure[]{
			\begin{minipage}[b]{0.45\textwidth}
				\centerline{	\includegraphics[width=7cm]{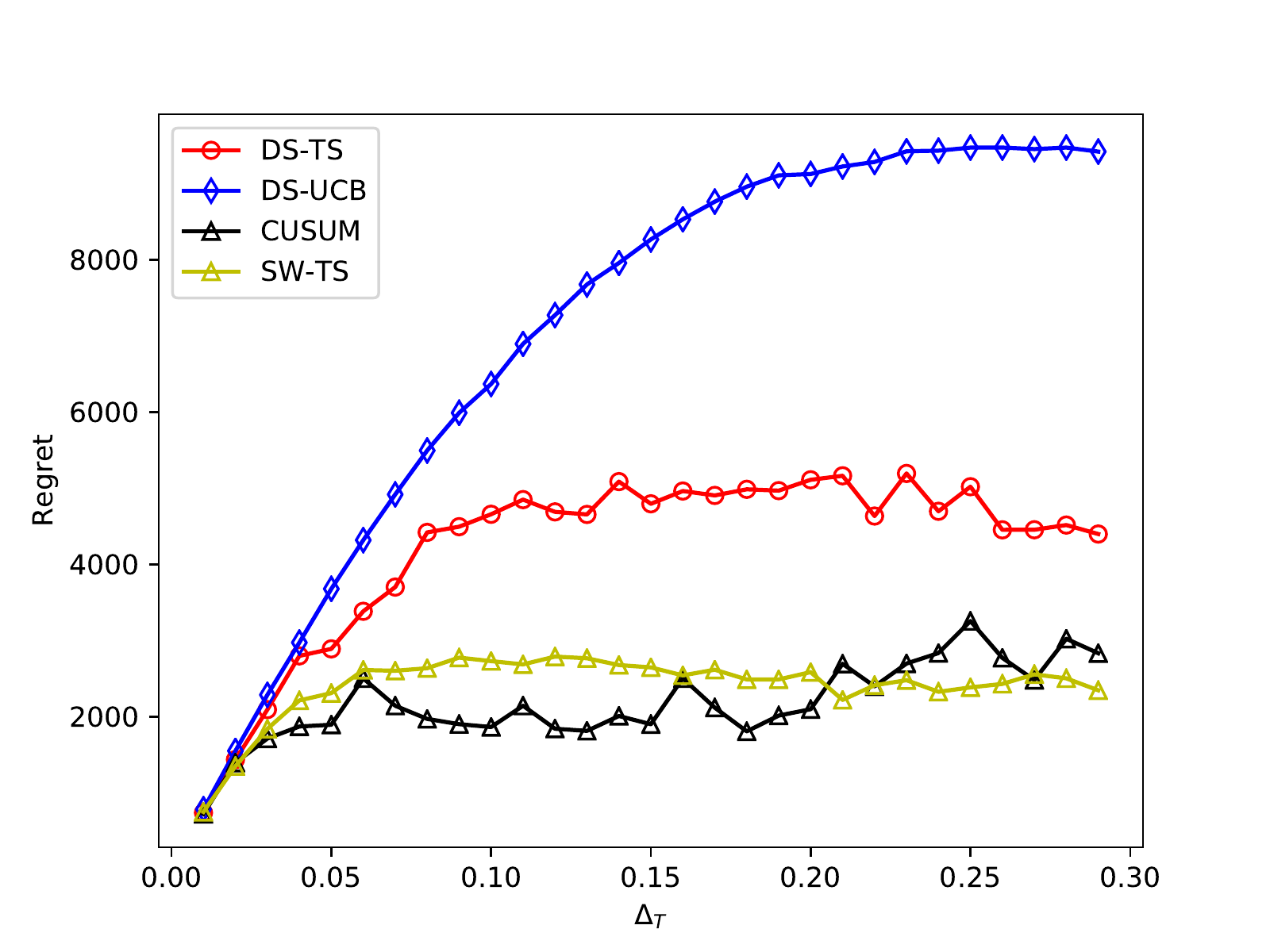} }
			\end{minipage} 	
		}
		
		\caption{$\Delta_T$ along with regret. Settings with $K=2,B_T=5$ (a), $K=2,B_T=10$ (b), $K=5,B_T=5 $ (c) and $K=5,B_T=10$ (d).  } 
		\label{fig:delta}
	\end{figure*}
	
	\noindent{\bfseries{Impact of $\boldsymbol{\Delta_T}$}}
	\quad The regret upper bound of DS-TS as well as CUSUM, DS-UCB and  SW-TS, all depend on $\Delta_T$.
	In order to more clearly analyze the impact of $\Delta_T$ on the performance, we consider  the environment: $K=\{2,5\},B_T=\{5,10\},T=100000$. We let $\Delta_T$ vary within the interval $(0,0.3)$, and compare the regrets of CUSUM, DS-UCB, SW-TS and DS-TS. 
	As shown in Figure \ref{fig:delta}, the performance of CUSUM and SW-TS is less significantly affected by $\Delta_T$.
	The reason behind this phenomenon is that their regret upper bound  have different factors about $\Delta_T$.  Their regret upper bounds are shown in Table \ref{table1}. As can be seen from the table, the upper bound of CUMSUM with respect to $\Delta_T$ is only related to $B_T$. The upper bound of SW-TS has factor $\frac{1}{\Delta_T}$, while DS-UCB and DS-TS have factor $\frac{1}{(\Delta_T)^2}$.
	However,
	DS-UCB and DS-TS also have smaller regret when $\Delta_T$ is small, as $\Delta_T$ increases, the regret value increases first and then tends to decrease or stabilize gradually. This is because when $\Delta_T$ is very small, although the algorithm cannot distinguish the optimal arm from the suboptimal arms well, the regret of choosing the suboptimal arm is small enough. 
	\begin{table}[!htp]
		\label{table1}
		\caption{ Comparison of regret bounds related to $\Delta_T$ in various algorithms }
		\centering
		\begin{tabular}{|c|c|c|c|c|}
			\hline
			Algorithm &CUMSUM& SW-TS & DS-UCB & DS-TS \\ \hline
			Bound &$\tilde{O}(\frac{B_T}{(\Delta_T)^2}+\sqrt{TB_T})$  & $\tilde{O}(\frac{\sqrt{TB_T}}{\Delta_T})$ & $\tilde{O}(\frac{\sqrt{TB_T}}{(\Delta_T)^2})$ & $\tilde{O}(\frac{\sqrt{TB_T}}{(\Delta_T)^2})$ \\ \hline
		\end{tabular}
	\end{table}

	\subsection{Smoothly Changing Settings}
	{\bfseries{Experimental Setting}} 
	We use a number of arms $K =\{5,10,20,30\}$ and the time horizon is set as $T=\{ 10^4,10^5\}$. The smoothly changing setting we use is the same as \cite{trovo2020sliding} and \cite{combes2014unimodal}, where the expected reward changing periodically according to the following function:
	\begin{equation}
		\label{tmp_eq}
	\begin{aligned}
		\mu_t(i) &=\frac{K-1}{K}-\frac{|w(t)-i|}{K}\\
		w(t)&=1+ \frac{(K-1)(1+sin(t\sigma))}{2}
	\end{aligned}
	\end{equation}
	The expected value generated from Equation \eqref {tmp_eq} clearly satisfies Assumption \ref{assumption_sigma}. \citeA{trovo2020sliding} have shown that $F=\frac{4K}{\sigma (K-1)}, \Delta_0=\frac{1}{3}$ satisfies Assumption \ref{assumption_delta} regardless of the value of $\beta$. 
	
	\begin{figure*}[!htbp] 
		\centering 
		\subfigure[]{ 
			\begin{minipage}[b]{0.45 \textwidth} 
				\centerline{	\includegraphics[width=7cm]{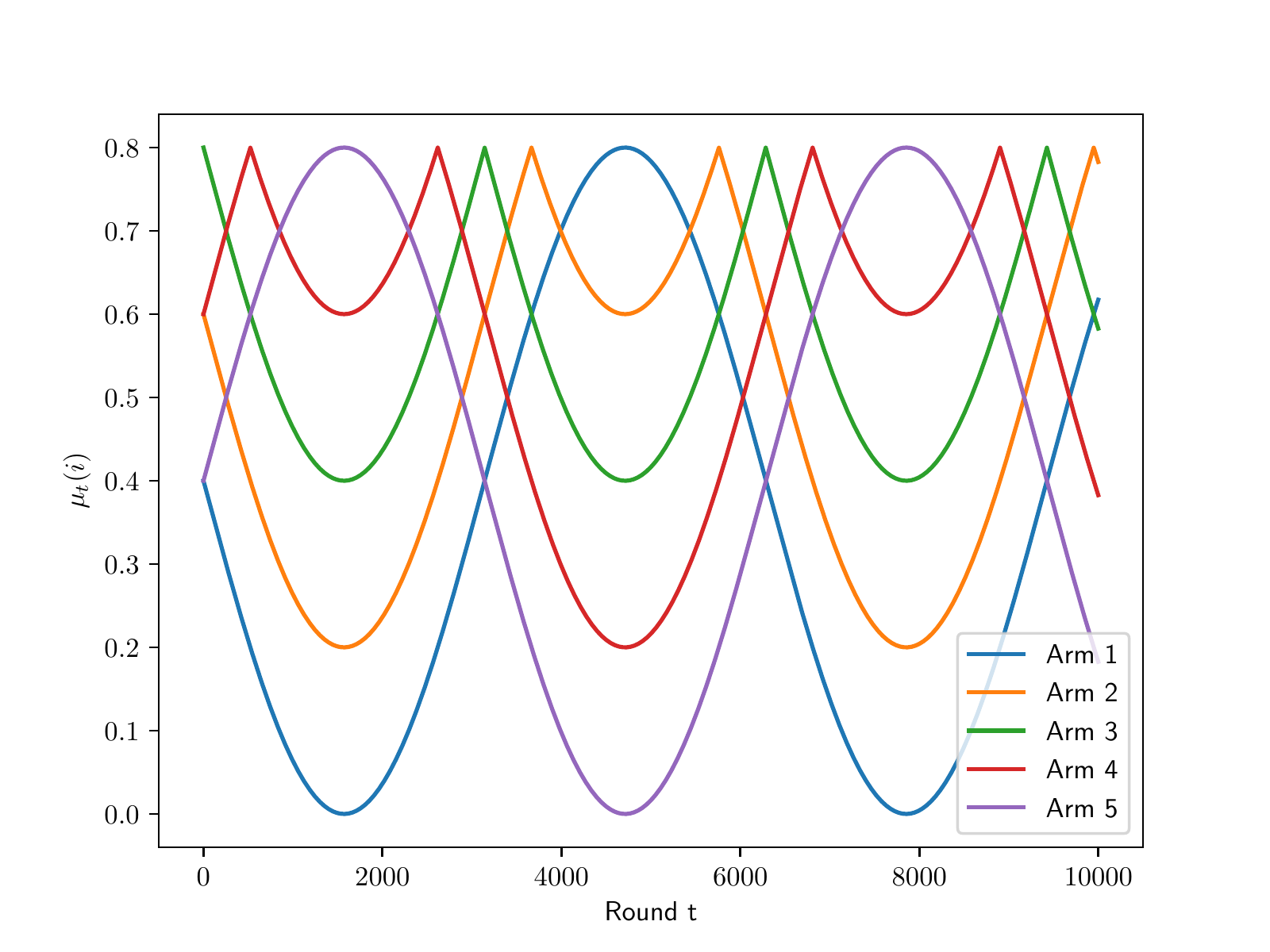}}
			\end{minipage} 
		} 
		\subfigure[]{
			\begin{minipage}[b]{0.45\textwidth}
				\centerline{	\includegraphics[width=7cm]{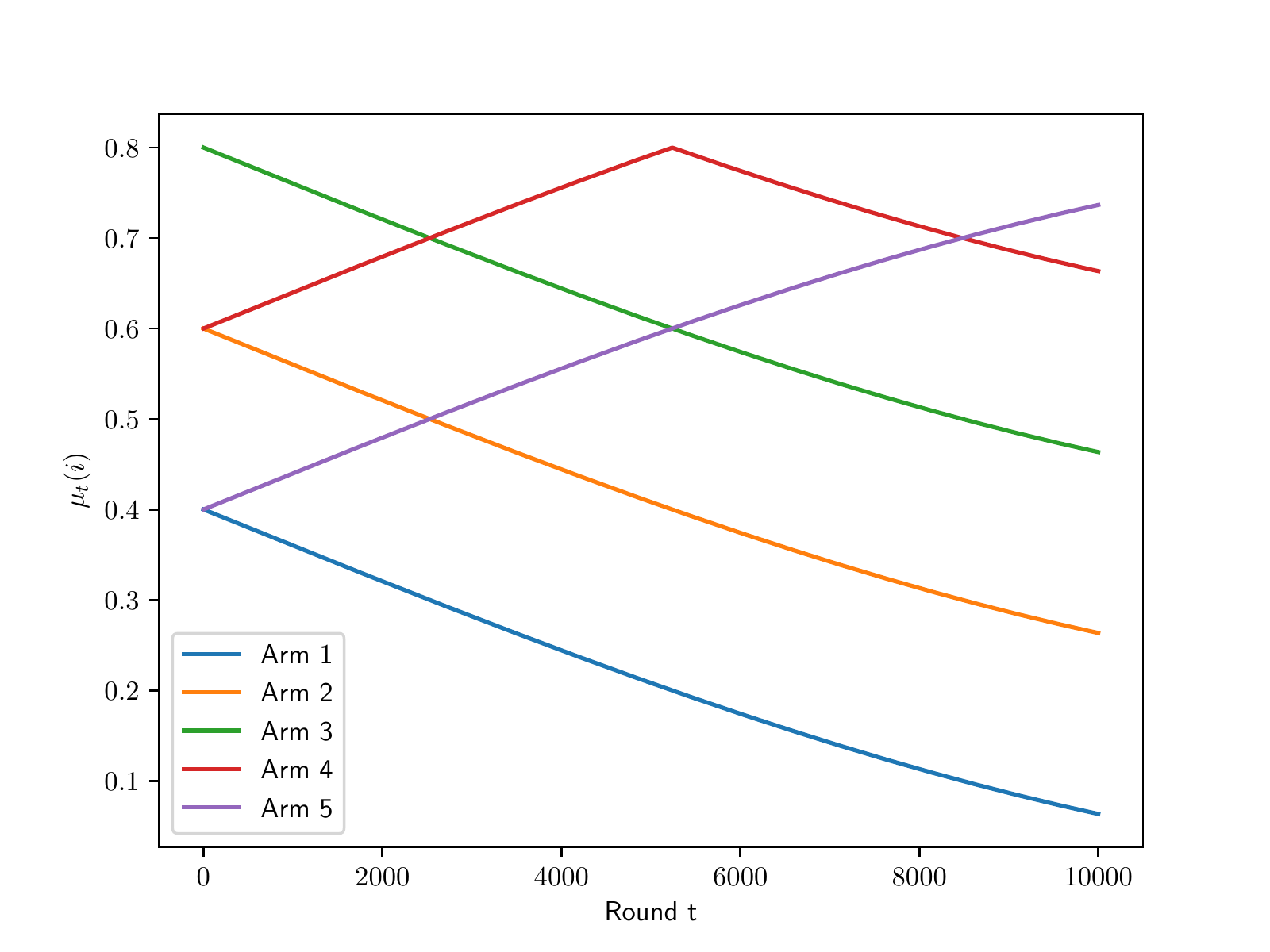} }
			\end{minipage} 	
		}
		
		\caption{Instances of expected rewards change over the time. Settings with $K=5,T=10^4,\sigma=0.001$ (a), $K=5,T=10^4,\sigma=0.0001$ (b).  } 
		\label{fig:Smoothly}
	\end{figure*}

	Unlike the abruptly changing settings, we do not compare the CUMSUM and M-UCB algorithms. Both algorithms use the opposite assumption to Assumption \ref{assumption_sigma} to guarantee detectability. 
	Let $B_T=1$(i.e. $b_1=1$, this means  there is no breakpoint), other algorithms including ours use the same parameter as abruptly changing settings. 
	Since we want to get a regret for $\tilde{O}(\sqrt{T})$, in this experiment we set $\beta=\frac{1}{2}$.  Corollary \ref{corollary2} suggests taking $\gamma=1-\frac{1}{\sqrt{T}}$, in this experiment we take $\gamma=1-\frac{10}{\sqrt{T}}$ to ensure that the conditions of Theorem \ref{result2} are  satisfied. 
	 In particular, if $T=10^4,\sigma=0.001$ or $T=10^5,\sigma=0.0001$, $\gamma=1-\frac{10}{\sqrt{T}}$ satisfies Theorem \ref{result2} while $\gamma=1-\frac{1}{\sqrt{T}}$ not. Figure \ref{fig:Smoothly} shows two instances of smoothly changing arms.
		\begin{figure*}[!htbp] 
		\centering 
		\subfigure[]{ 
			\begin{minipage}[b]{0.45 \textwidth} 
				\centerline{	\includegraphics[width=7cm]{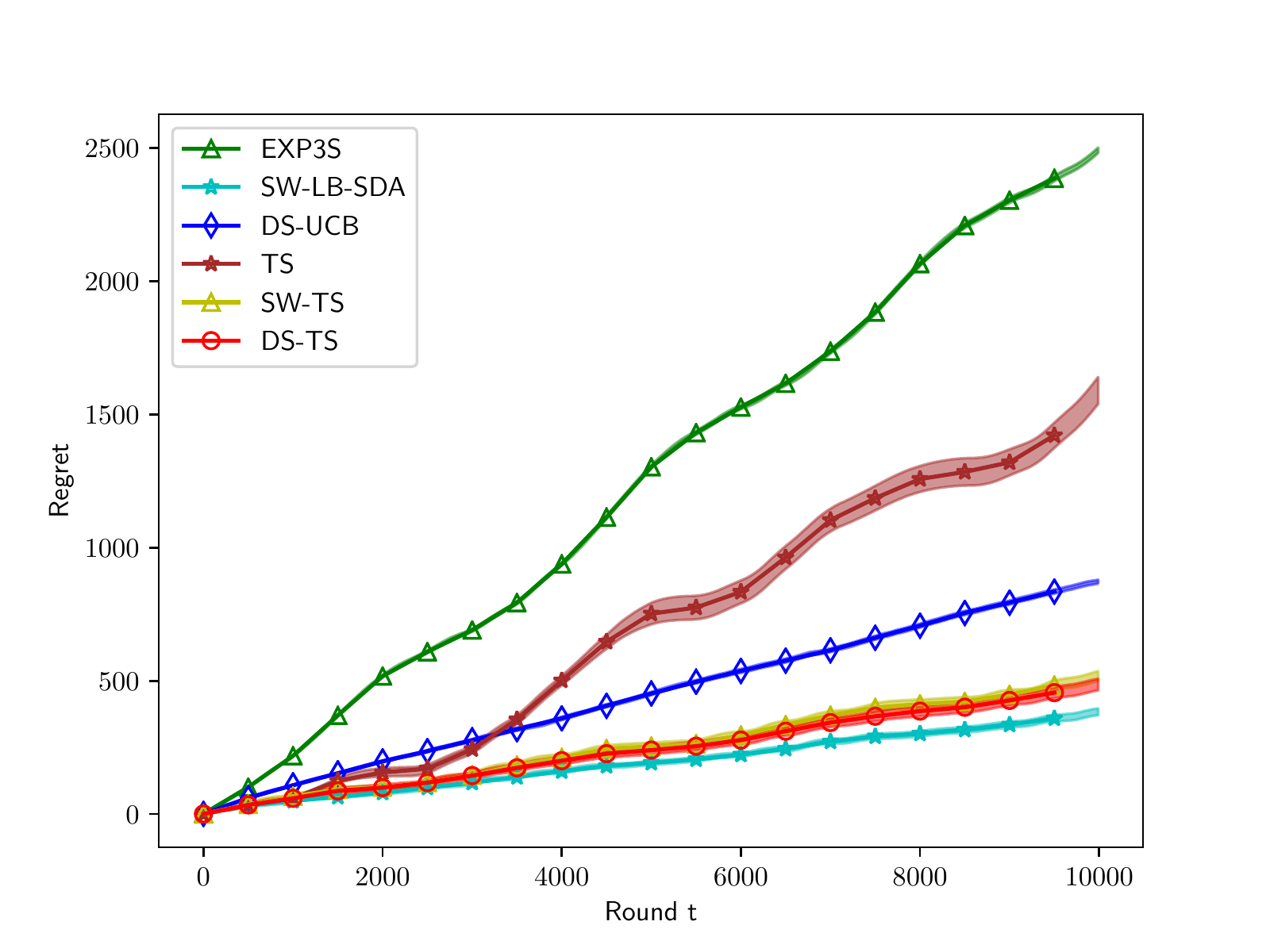}}
			\end{minipage} 
		} 
		\subfigure[]{
			\begin{minipage}[b]{0.45\textwidth}
				
				\centerline{	\includegraphics[width=7cm]{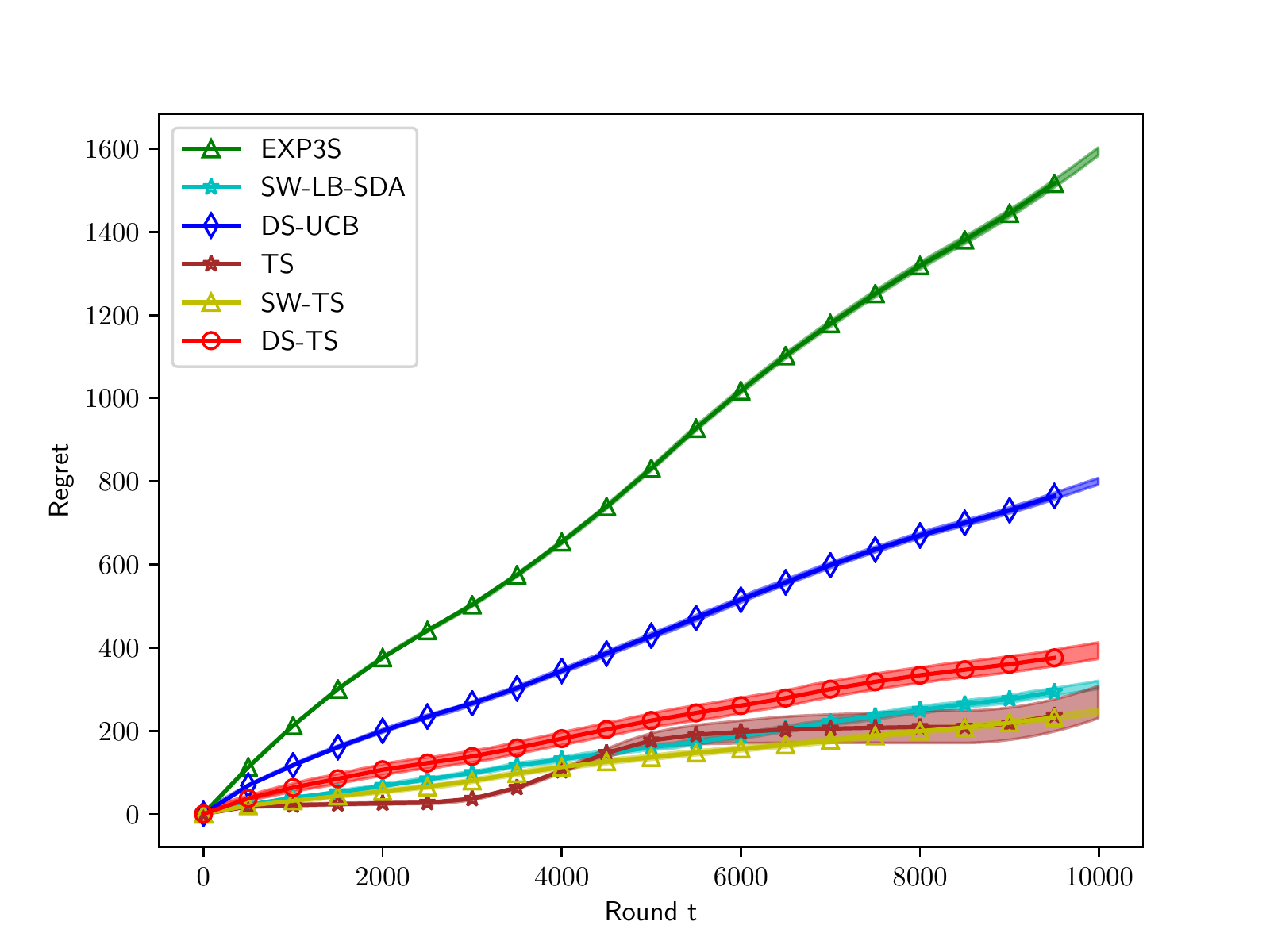} }
			\end{minipage} 	
		}
		\subfigure[]{ 
			\begin{minipage}[b]{0.45 \textwidth} 
				\centerline{	\includegraphics[width=7cm]{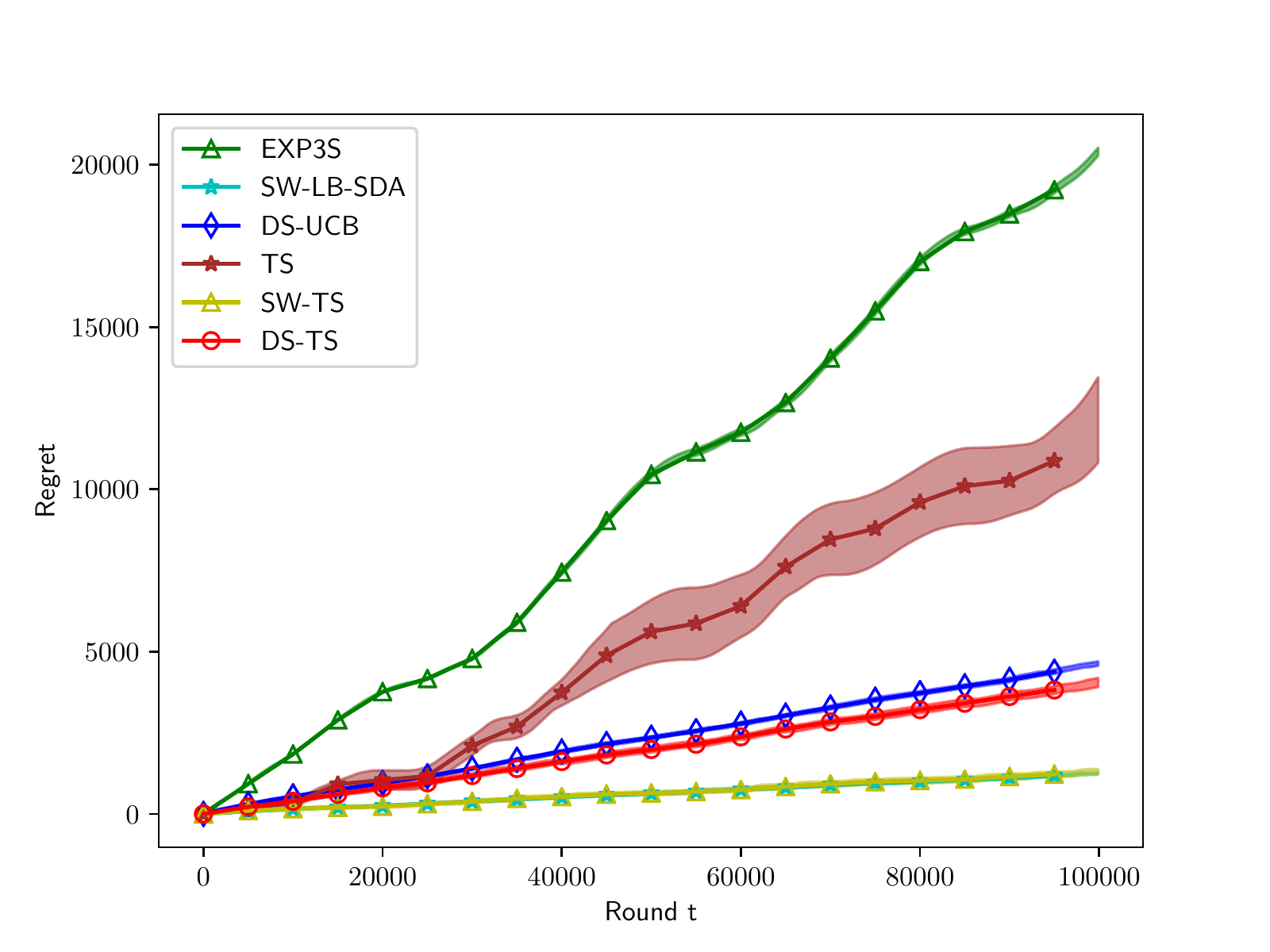}}
			\end{minipage} 
		} 
		\subfigure[]{
			\begin{minipage}[b]{0.45\textwidth}
				
				\centerline{	\includegraphics[width=7cm]{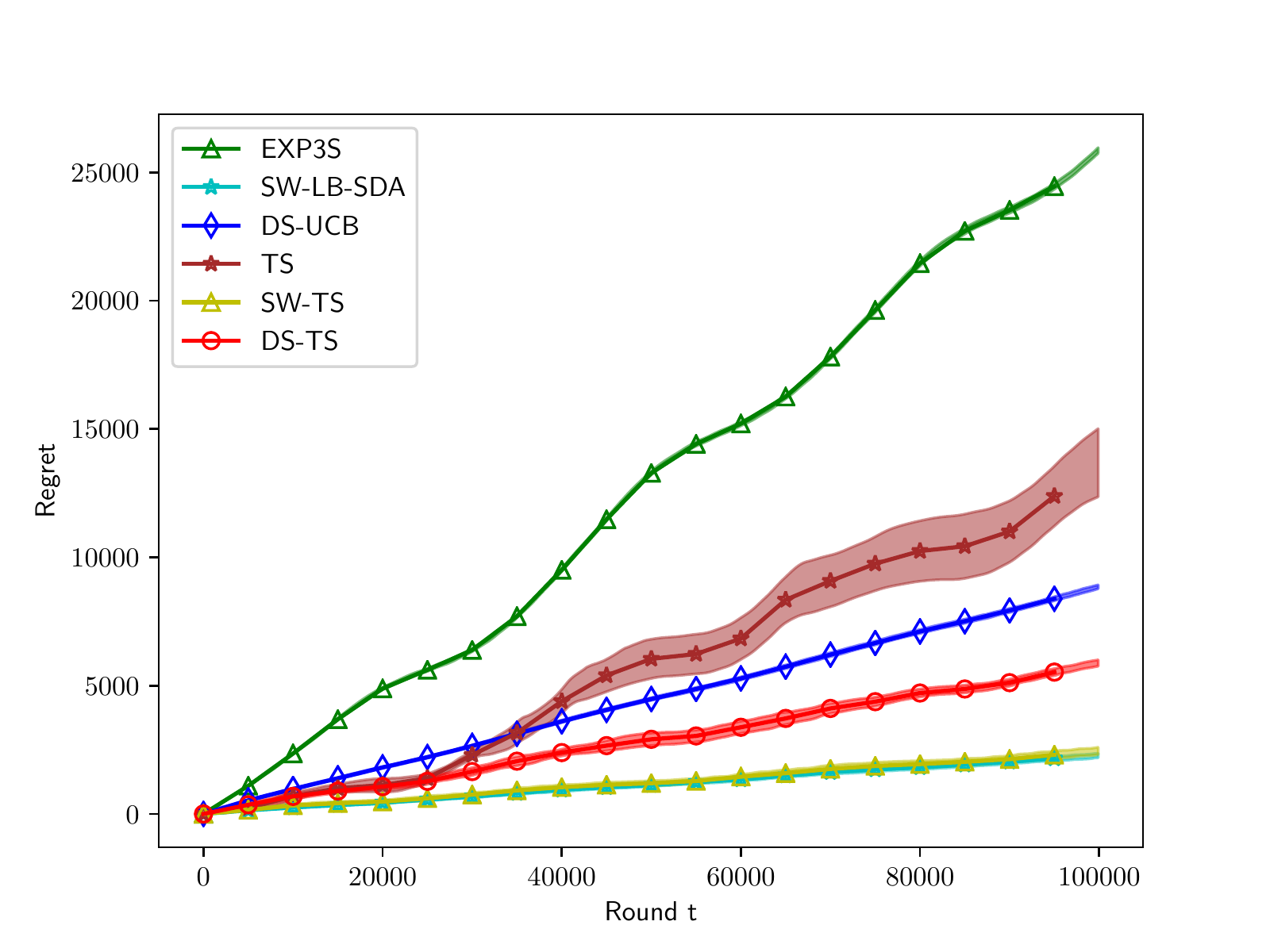} }
			\end{minipage} 	
		}
		
		\caption{Smoothly changing settings. Settings with $K=5,T=10^4,\sigma=0.001$ (a), $K=5,T=10^4,\sigma=0.0001$ (b), $K=5,T=10^5,\sigma=0.0001 $ (c) and $K=10,T=10^5,\sigma=0.0001$ (d).  } 
		\label{fig:Bernoulli_s}
	\end{figure*}
	
	\noindent{\bfseries{Results}} 
	
	Figure \ref{fig:Bernoulli_s}  report the results for smoothly changing settings. It can be seen that  SW-TS and  SW-LB-SDA achieve similar performance in several environmental settings. 
	Due to the extra logarithmic regret induced by the discounted method while adapting to changes in the reward, the performance of DS-TS is not as potent as that of SW-TS and SW-LB-SDA.
	However, when $T=10^4,\sigma=0.0001$, Thompson Sampling  exhibits excellent performance. The reason for this phenomenon can be explained by Figure \ref{fig:Smoothly}(b). In this environment the optimal arm is switched only twice and the difference between the optimal arm and the second best arm is not significant.

	\subsection{Prior Knowledge of $\boldsymbol{\mu_{max}}$}
	The expectation of  arm in our experiment is uniformly sampled from $(0,1)$.  For a reasonable number of arms, at least one of the arms has a expected value  close to 1. Most of the algorithms based on UCB and TS perform well due to their strong exploration ability. Now we change the experimental settings so that the expectation of the arms are relatively small and test the performance of each algorithm. 
	
	In abruptly changing settings, we limit the maximum expectation of the arms to less than $0.7$. 
	In smoothly changing settings, we limit the maximum expectation to less than $0.5$. To this end, we modify the expected arms generation function \eqref{tmp_eq} as $\mu_t(i)=\frac{K}{2(K-1)}\mu_t(i)$. We test the performance of each algorithm using the same parametrer  as before except DS-TS. We test DS-TS with $\tau_{max}=1/5$ and $\tau_{max}=\mu_{max}/5$  respectively. The latter means  DS-TS has additional information about $\mu_{max}$.
	
	\begin{figure*}[!htbp] 
		\centering 
		\subfigure[]{ 
			\begin{minipage}[b]{0.45 \textwidth} 
				\centerline{	\includegraphics[width=7cm]{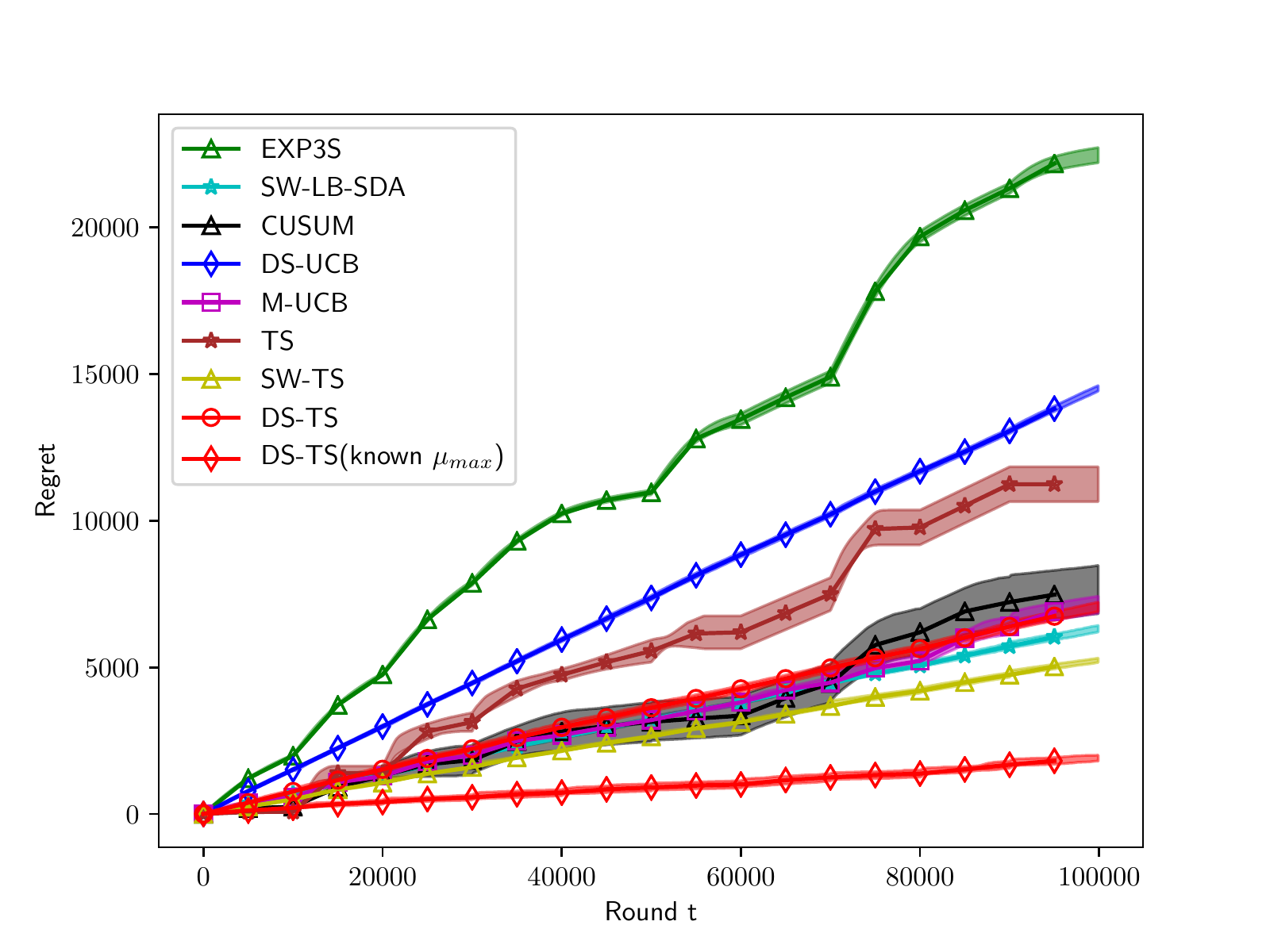}}
			\end{minipage} 
		} 
		\subfigure[]{
			\begin{minipage}[b]{0.45\textwidth}
				\centerline{	\includegraphics[width=7cm]{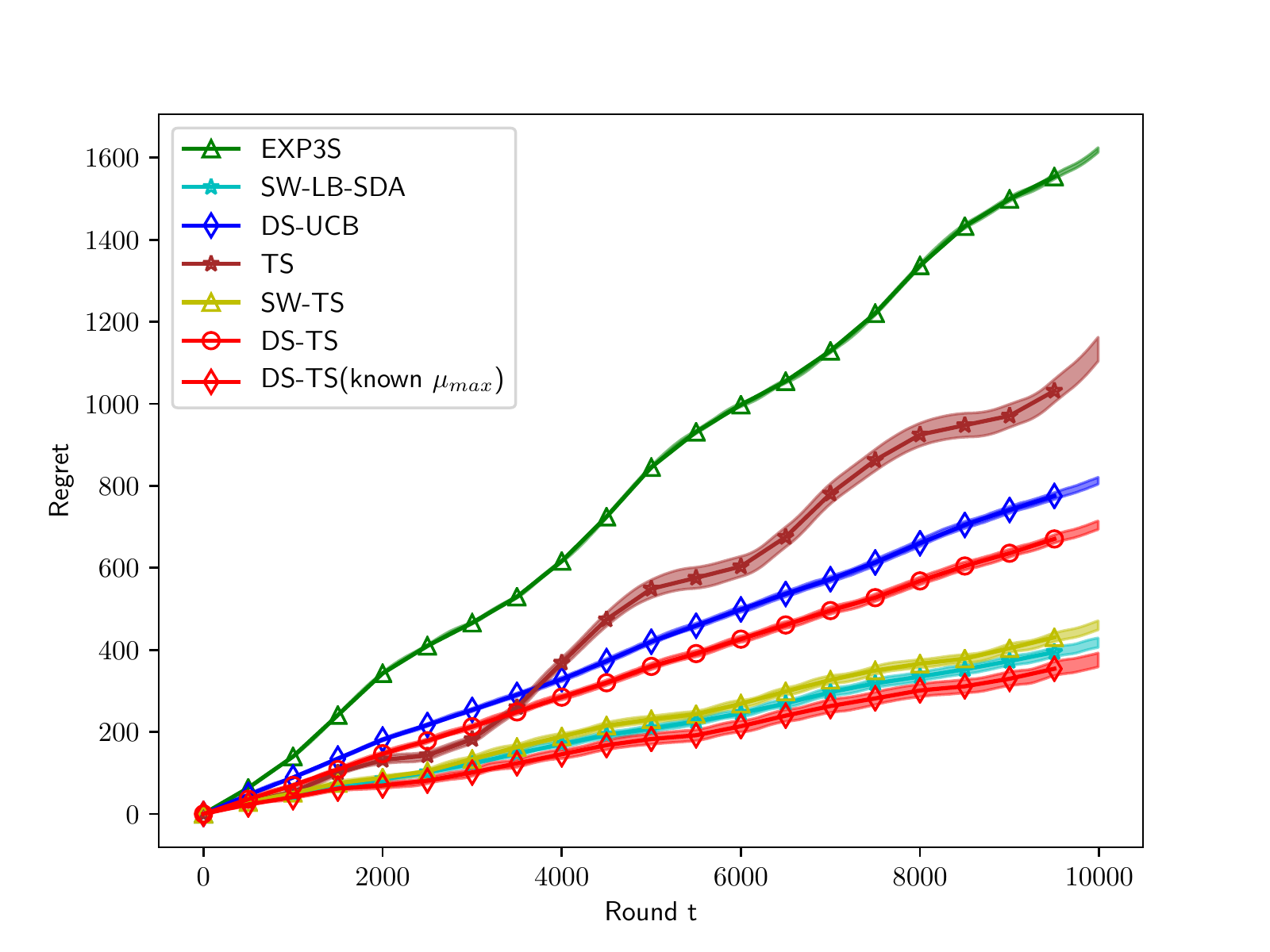} }
			\end{minipage} 	
		}
		
		\caption{ (a)  Abruptly changing settings with expected value less than 0.7. Settings with $K=10,B_T=10,T=10^5$.
			(b) Smoothly changing settings with expected value less than 0.5. Settings with $K=5,T=10^4,\sigma=0.001$. }
		\label{fig:ka}
	\end{figure*}
	
	Figure \ref{fig:ka} shows the  performance of each algorithm with small expected reward.  First, a comparison of Figure \ref{fig:Bernoulli}(b) shows that the regret values for each algorithm are larger than the case where the expected rewards are sampled uniformly from $(0,1)$. In smoothly changing settings, the expected reward was scaled down, meaning that the reward of the arm changed more slowly. Comparison with Figure 1 reveal that the regret of each algorithm even decreased except for DS-TS. This suggests that DS-TS is more influenced by $\mu_{max}$ than the other algorithms. Second, if the  $\mu_{max}$ is known in advance and the $\tau_{max}$  of DS-TS is reset to $\mu_{max}/5$, the performance of DS-TS will be significantly improved.

	\section{Discussion}

	In this paper, we have proposed DS-TS algorithm with Gaussian priors for abruptly changing and smoothly changing MAB problems. Under mild assumptions, 
	we  provide the regret upper bounds of DS-TS  in both non-stationary settings. Our experiments show that DS-TS can achieve significant regret reduction with respect to the state-of-the-art algorithms when the $\mu_{max}$ is known in advance. Furthermore, we empirically analyze the influence of $\Delta_T$ on the performance of different algorithms. 
	
	However, there are still some shortcomings in our work. First, the performance of DS-TS  is usually similar to that of SW-TS in abruptly changing settings, but the regret upper bound of DS-TS has an extra logarithmic term $\log T$, which is probably due to the fact that our analysis method yields a too rough upper bound. Second, it's natural to use the Bernoulli prior for bounded rewards. As $N_t(\gamma,i)$ is no longer a positive integer in  discounted method, the relationship between binomial distribution and beta distribution cannot be used for analysis. According to the literature on  tail bounds of the beta distribution \cite{zhang2020non}, the method in this paper cannot be used to analyze DS-TS with Bernoulli priors. Addressing these shortcomings could be a future research direction.


	\bibliographystyle{theapa}
	\bibliography{sample}

	\newpage
	\appendix
	
	\section{Facts and Lemmas}
	The following inequality is the  anti-concentration and  concentration bound for  Gaussian distributed random variables.
	\begin{fact}[\cite{abramowitz1964handbook}]
			For a Gaussian distributed random variable $X$ with mean $\mu$ and variance $\sigma^2$, for any $a>0$
		\[
		\frac{1}{\sqrt{2\pi}}\frac{a}{1+a^2}e^{-a^2/2} \leq \mathbb{P}(X-\mu>a\sigma)\leq 	\frac{1}{a+\sqrt{a^2+4}}e^{-a^2/2}	
		\]\label{fact1}
	\end{fact}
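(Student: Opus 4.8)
The plan is to first standardize: with $Z=(X-\mu)/\sigma\sim\mathcal{N}(0,1)$ one has $\mathbb{P}(X-\mu>a\sigma)=\mathbb{P}(Z>a)=:\bar{\Phi}(a)=\int_a^\infty\tfrac{1}{\sqrt{2\pi}}e^{-x^2/2}\,dx$, so it suffices to prove $\tfrac{1}{\sqrt{2\pi}}\tfrac{a}{1+a^2}e^{-a^2/2}\le\bar{\Phi}(a)\le\tfrac{1}{a+\sqrt{a^2+4}}e^{-a^2/2}$ for $a>0$. Both are classical (they are exactly the estimates tabulated in \citeA{abramowitz1964handbook}); below I sketch a self-contained argument.

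For the lower bound I would put $g(x)=\tfrac{1}{\sqrt{2\pi}}\tfrac{x}{1+x^2}e^{-x^2/2}$ and differentiate, obtaining $-g'(x)=\tfrac{1}{\sqrt{2\pi}}\bigl(1-\tfrac{2}{(1+x^2)^2}\bigr)e^{-x^2/2}\le\tfrac{1}{\sqrt{2\pi}}e^{-x^2/2}$ for every $x$. Since $g(x)\to0$ as $x\to\infty$, integrating over $[a,\infty)$ gives $g(a)=\int_a^\infty(-g'(x))\,dx\le\int_a^\infty\tfrac{1}{\sqrt{2\pi}}e^{-x^2/2}\,dx=\bar{\Phi}(a)$, which is precisely the lower bound.

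For the upper bound I would work with $\rho(a):=e^{a^2/2}\bar{\Phi}(a)$, which obeys the linear ODE $\rho'(a)=a\rho(a)-\tfrac{1}{\sqrt{2\pi}}$ with $\rho(0)=\tfrac12$, and compare it with $h(a):=\tfrac{1}{a+\sqrt{a^2+4}}$, for which $h(0)=\tfrac12$ and $h'(a)=-h(a)/\sqrt{a^2+4}$. Setting $D:=h-\rho$, the integrating factor $e^{-a^2/2}$ yields $\tfrac{d}{da}\bigl(e^{-a^2/2}D(a)\bigr)=e^{-a^2/2}\psi(a)$ with $\psi(a)=\tfrac{1}{\sqrt{2\pi}}-h(a)\bigl(a+\tfrac{1}{\sqrt{a^2+4}}\bigr)$. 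The key step is that $\psi$ changes sign exactly once: with the substitution $p=a\sqrt{a^2+4}$ (strictly increasing in $a$, and satisfying $\sqrt{p^2+4}=a^2+2$), one computes $h(a)\bigl(a+\tfrac1{\sqrt{a^2+4}}\bigr)=\tfrac{p+1}{p+\sqrt{p^2+4}+2}$, and a one-line differentiation shows this is strictly increasing in $p$, from $\tfrac14$ at $a=0$ to $\tfrac12$ as $a\to\infty$. Since $\tfrac14<\tfrac{1}{\sqrt{2\pi}}<\tfrac12$, $\psi$ is positive on some $[0,x_0)$ and negative on $(x_0,\infty)$, so $e^{-a^2/2}D(a)$ first increases and then decreases; because it vanishes at $a=0$ and tends to $0$ as $a\to\infty$ (from $0\le\rho(a),h(a)\le\tfrac1a$, using $\bar{\Phi}(a)<\tfrac{1}{a\sqrt{2\pi}}e^{-a^2/2}$), it stays nonnegative, i.e. $D\ge0$ and $\bar{\Phi}(a)\le\tfrac{1}{a+\sqrt{a^2+4}}e^{-a^2/2}$.

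The step I expect to be the main obstacle is the upper bound, precisely because it is tight at $a=0$ (both sides equal $\tfrac12$) yet slack as $a\to\infty$: this tightness blocks any one-shot pointwise derivative comparison and forces the two-endpoint monotonicity argument above, whose only nonroutine ingredient is the single-sign-change of $\psi$ obtained through the $p=a\sqrt{a^2+4}$ substitution. The rest is elementary calculus.
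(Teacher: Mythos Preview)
Your argument is correct. The paper itself does not prove this statement: it is quoted verbatim as a classical fact from \cite{abramowitz1964handbook} with no accompanying derivation, so there is nothing to compare against. You have supplied a self-contained proof where the paper simply cites the literature; your lower-bound computation via $-g'(x)=\tfrac{1}{\sqrt{2\pi}}\bigl(1-\tfrac{2}{(1+x^2)^2}\bigr)e^{-x^2/2}\le\tfrac{1}{\sqrt{2\pi}}e^{-x^2/2}$ and integration on $[a,\infty)$ is clean, and for the upper bound your ODE comparison with the single-sign-change analysis of $\psi$ (via $p=a\sqrt{a^2+4}$, so that $h(a)\bigl(a+\tfrac{1}{\sqrt{a^2+4}}\bigr)=\tfrac{p+1}{p+\sqrt{p^2+4}+2}$ runs monotonically from $\tfrac14$ to $\tfrac12$) correctly handles the tightness at $a=0$. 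In short, you have proved more than the paper does.
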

	
		The following lemma is adapted from \cite{agrawal2013further} and is often used in the analysis of Thompson Sampling, can transform the probability of selecting the $i$th arm into the probability of selecting the optimal arm $i_t^*$.
	\begin{lemma}
		\label{pit}
		Let $p_{i,t}=\mathbb{P}(\theta_t(*)>y_t(i)|\mathcal{F}_{t-1})$. For any $A>0$, $i \neq i_t^*$, 
		\[ 
		\mathbb{P}(i_t=i, \theta_t(i) < y_t(i) |\mathcal{F}_{t-1}) \leq \frac{(1-p_{i,t})}{p_{i,t}} \mathbb{P}(i_t=i_t^*, \theta_t(i) < y_t(i) |\mathcal{F}_{t-1})
		\]
	\end{lemma}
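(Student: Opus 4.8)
The plan is to condition on the history $\mathcal{F}_{t-1}$ throughout and exploit the fact that, given $\mathcal{F}_{t-1}$, the samples $\{\theta_t(j)\}_{j=1}^{K}$ are drawn mutually independently, each from a distribution fixed by $\mathcal{F}_{t-1}$ (in particular the law of $\theta_t(*)$ is determined, and $\theta_t(*)$ is independent of the samples of all other arms). The whole argument will be a comparison between the target event $\{i_t=i,\ \theta_t(i)<y_t(i)\}$ and the event $\{i_t=i_t^*,\ \theta_t(i)<y_t(i)\}$, routed through an auxiliary event that involves only the non-optimal arms.

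Concretely, I would introduce the event
\[ \mathcal{E}_t := \{\theta_t(j)<y_t(i)\ \text{for all}\ j\neq i_t^*\}, \]
which is measurable with respect to the samples of the non-optimal arms alone and is therefore independent of $\theta_t(*)$ given $\mathcal{F}_{t-1}$. Two set containments then do all the work. First, if $i_t=i$ and $\theta_t(i)<y_t(i)$, then since arm $i$ attained the largest sample we have $\theta_t(j)\le\theta_t(i)<y_t(i)$ for every $j\neq i$; because $i\neq i_t^*$, the case $j=i$ is among those appearing in $\mathcal{E}_t$, so $\mathcal{E}_t$ holds, and moreover $\theta_t(*)\le\theta_t(i)<y_t(i)$. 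Hence, using independence in the last step,
\[ \mathbb{P}(i_t=i,\ \theta_t(i)<y_t(i)\mid\mathcal{F}_{t-1})\ \le\ \mathbb{P}(\mathcal{E}_t,\ \theta_t(*)<y_t(i)\mid\mathcal{F}_{t-1})\ =\ (1-p_{i,t})\,\mathbb{P}(\mathcal{E}_t\mid\mathcal{F}_{t-1}). \]
Second, if $\mathcal{E}_t$ holds and additionally $\theta_t(*)>y_t(i)$, then $\theta_t(*)>\theta_t(j)$ for all $j\neq i_t^*$, so $i_t=i_t^*$; and $\mathcal{E}_t$ already forces $\theta_t(i)<y_t(i)$ since $i\neq i_t^*$. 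Therefore
\[ \mathbb{P}(i_t=i_t^*,\ \theta_t(i)<y_t(i)\mid\mathcal{F}_{t-1})\ \ge\ \mathbb{P}(\mathcal{E}_t,\ \theta_t(*)>y_t(i)\mid\mathcal{F}_{t-1})\ =\ p_{i,t}\,\mathbb{P}(\mathcal{E}_t\mid\mathcal{F}_{t-1}). \]
Dividing the first display by $p_{i,t}$ and substituting the lower bound from the second yields exactly the claimed inequality; since $\theta_t(*)$ is Gaussian and hence has full support, $p_{i,t}>0$, so no degenerate case arises.

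I do not expect a genuine obstacle here: all steps are elementary. The only points that need care are (i) stating the conditional independence precisely — the samples are independent given $\mathcal{F}_{t-1}$, and $\mathcal{E}_t$ depends only on the arms $j\neq i_t^*$ while $p_{i,t}$ is defined through $\theta_t(*)$ — and (ii) handling ties in the argmax, which form a null event under continuous Gaussian sampling and can be absorbed without affecting any inequality. The hypothesis ``for any $A>0$'' is not used in this lemma and is merely an artifact of the statement's phrasing.
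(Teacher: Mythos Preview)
Your proof is correct and is precisely the standard argument from \cite{agrawal2013further}, which is exactly what the paper defers to (it states the lemma as ``adapted from \cite{agrawal2013further}'' without reproducing a proof). Your observation that the quantifier ``for any $A>0$'' is vestigial is also accurate.
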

	
	\begin{lemma}[\cite{garivier2011upper}]
		\label{N<A}
		For any $i \in \{1,...,K\}$,  $\gamma \in (0,1)$ and $A>0$,
		\[ \sum_{t=1}^{T}\mathbbm{1}\{ i_t=i,N_t(\gamma,i)<A \} \leq \lceil T(1-\gamma)  \rceil A\gamma^{-1/(1-\gamma)}. \]
	\end{lemma}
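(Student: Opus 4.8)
The plan is to prove this as a purely deterministic counting bound on the sequence $(i_t)_{t\le T}$, using only the definition $N_t(\gamma,i)=\sum_{j=1}^{t}\gamma^{t-j}\mathbbm{1}\{i_j=i\}$; no property of the sampling rule is needed. First I would fix the block length $w:=\lceil 1/(1-\gamma)\rceil$ and cut $\{1,\dots,T\}$ into consecutive blocks $\{1,\dots,w\},\{w+1,\dots,2w\},\dots$, the last block possibly shorter. Since $w\ge 1/(1-\gamma)$ we have $T/w\le T(1-\gamma)$, so the number of blocks is $\lceil T/w\rceil\le\lceil T(1-\gamma)\rceil$. It then suffices to show that every block contains at most $A\gamma^{-1/(1-\gamma)}$ rounds $t$ with $i_t=i$ and $N_t(\gamma,i)<A$, and then add up over blocks.

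For the per-block estimate, fix a block $I$ of length $w'\le w$ and let $t_1<\dots<t_m$ enumerate the rounds of $I$ satisfying $i_t=i$ and $N_t(\gamma,i)<A$. Evaluating $N$ at $t=t_m$ and keeping only the terms with $j\in\{t_1,\dots,t_m\}$ (each of which has $t_k\le t_m$ and $i_{t_k}=i$),
\[
A>N_{t_m}(\gamma,i)\ \ge\ \sum_{k=1}^{m}\gamma^{\,t_m-t_k}\ \ge\ m\,\gamma^{\,w'-1}\ \ge\ m\,\gamma^{\,w-1},
\]
where I used $t_m-t_k\le t_m-t_1\le w'-1\le w-1$ together with $0<\gamma<1$. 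Hence $m<A\,\gamma^{-(w-1)}$. The last point is that $w-1=\lceil 1/(1-\gamma)\rceil-1<1/(1-\gamma)$ and $x\mapsto\gamma^{-x}$ is increasing for $\gamma\in(0,1)$, so $\gamma^{-(w-1)}<\gamma^{-1/(1-\gamma)}$ and therefore $m<A\,\gamma^{-1/(1-\gamma)}$.

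Summing this per-block bound over the at most $\lceil T(1-\gamma)\rceil$ blocks gives $\sum_{t=1}^{T}\mathbbm{1}\{i_t=i,\,N_t(\gamma,i)<A\}\le\lceil T(1-\gamma)\rceil\,A\,\gamma^{-1/(1-\gamma)}$, which is the claim (and matches the statement attributed to \citeA{garivier2011upper}). I do not expect a genuine obstacle here: this is an elementary pigeonhole-type estimate. The only step that needs a bit of care is forcing the exponent to be exactly $1/(1-\gamma)$ rather than a weaker constant, which pins down the two compatible choices used above---the block length $\lceil 1/(1-\gamma)\rceil$, so that the block count stays $\le\lceil T(1-\gamma)\rceil$, and the strict inequality $\lceil x\rceil-1<x$, so that $\gamma^{-(w-1)}$ can be replaced by $\gamma^{-1/(1-\gamma)}$.
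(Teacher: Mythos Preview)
Your argument is correct and is precisely the standard proof of this lemma: the paper does not give its own proof here but merely cites \cite{garivier2011upper}, whose argument is exactly the block decomposition you wrote---partition $\{1,\dots,T\}$ into $\lceil T(1-\gamma)\rceil$ blocks of length $\lceil 1/(1-\gamma)\rceil$ and bound the count within each block via $N_{t_m}(\gamma,i)\ge m\,\gamma^{w-1}$. Nothing is missing.
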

	
	\begin{lemma}
		\label{hh}
		For abruptly changing settings,
		\[ \mathbb{P}(\hat{\mu}_t(\gamma,i)>\mu_t(i)+\frac{\Delta_t(i)}{3},N_t(\gamma,i)>A(\gamma))
		\leq (1-\gamma)^{48}\log\frac{1}{1-\gamma}
		\]
		For smoothly changing settings,
		\[ \mathbb{P}(\hat{\mu}_t(\gamma,i)>\mu_t(i)+\frac{\Delta_t(i)}{3}+\sigma D(\gamma),N_t(\gamma,i)>A(\gamma))
		\leq (1-\gamma)^{48}\log\frac{1}{1-\gamma}
		\]

	\end{lemma}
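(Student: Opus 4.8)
The plan is to separate the estimation error $\hat\mu_t(\gamma,i)-\mu_t(i)$ into a deterministic bias and a stochastic fluctuation, bound the bias using the distance lemmas already established, and bound the fluctuation by a self-normalized Hoeffding inequality for discounted averages. First I would write $\hat\mu_t(\gamma,i)-\mu_t(i)=\bigl(\hat\mu_t(\gamma,i)-\ddot\mu_t(\gamma,i)\bigr)+\bigl(\ddot\mu_t(\gamma,i)-\mu_t(i)\bigr)$. In the abruptly changing setting the lemma is invoked (as in Equation~\eqref{firstpart}) only for $t\in\mathcal T(\gamma)$, so Lemma~\ref{D} gives $\ddot\mu_t(\gamma,i)\le\mu_t(i)+U_t(\gamma,i)$; in the smoothly changing setting Lemma~\ref{D1} gives $\ddot\mu_t(\gamma,i)\le\mu_t(i)+U_t(\gamma,i)+\sigma D(\gamma)$. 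Consequently the event whose probability we must bound is contained in $\{\hat\mu_t(\gamma,i)-\ddot\mu_t(\gamma,i)>\frac{\Delta_t(i)}{3}-U_t(\gamma,i),\ N_t(\gamma,i)>A(\gamma)\}$ in both cases.

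Next I would convert the condition $N_t(\gamma,i)>A(\gamma)$ into an absolute lower bound on the margin. By the identity~\eqref{UA}, on $\{N_t(\gamma,i)>A(\gamma)\}$ one has $U_t(\gamma,i)=\frac{\Delta_T}{m}\sqrt{A(\gamma)/N_t(\gamma,i)}<\frac{\Delta_T}{m}$, hence (using $\Delta_t(i)\ge\Delta_T$ and the identity $m\sqrt{1-\gamma}=n$) $\frac{\Delta_t(i)}{3}-U_t(\gamma,i)>\frac{\Delta_T}{3}-\frac{\Delta_T}{m}=\frac{4\sqrt2\,\Delta_T}{n}=:\epsilon$ in the abrupt case; the smooth case is analogous with $\Delta_t(i)\ge\Delta$ (the rounds with a smaller gap being absorbed into the $F\Delta T^{\beta}$ term of Theorem~\ref{result2}), giving a margin $\ge\frac{\Delta}{3}\bigl(1-\frac{\sqrt{1-\gamma}}{n}\bigr)$. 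In either case the definition of $A(\gamma)$ makes $2\epsilon^2 A(\gamma)\ge 64\log\frac1{1-\gamma}$ (with extra room in the smooth case).

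It then remains to bound $\mathbb P\bigl(\hat\mu_t(\gamma,i)-\ddot\mu_t(\gamma,i)>\epsilon,\ N_t(\gamma,i)>A(\gamma)\bigr)$. Here $\hat\mu_t(\gamma,i)-\ddot\mu_t(\gamma,i)=\frac1{N_t(\gamma,i)}\sum_{j\le t}\gamma^{t-j}\mathbbm{1}\{i_j=i\}\bigl(X_j(i)-\mu_j(i)\bigr)$ is a self-normalized weighted sum of bounded, mean-zero increments whose squared weights sum to $N_t(\gamma^2,i)\le N_t(\gamma,i)$. Since the weights (via the indicators) and $N_t(\gamma,i)$ are themselves random, I would apply the self-normalized Hoeffding-type inequality for discounted estimators of \citeA{garivier2011upper}: peeling over the $O(\log\frac1{1-\gamma})$ dyadic scales of $N_t(\gamma,i)$ lying between $A(\gamma)$ and its deterministic ceiling $\frac1{1-\gamma}$, combined with Hoeffding's inequality at effective count $\ge A(\gamma)$ on each scale, gives a bound of order $\log\frac1{1-\gamma}\cdot\exp(-2\epsilon^2 A(\gamma))\le(1-\gamma)^{64}\log\frac1{1-\gamma}$. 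Since $\gamma>1-\frac1e$, the surplus factor $(1-\gamma)^{16}$ absorbs all constants (and the peeling factor even if it carries an extra $1/(1-\gamma)$), yielding $(1-\gamma)^{48}\log\frac1{1-\gamma}$.

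The main obstacle is precisely this last step: because $N_t(\gamma,i)$ is a random normalization rather than a fixed count, one cannot apply a plain Hoeffding bound and must go through the self-normalized/peeling argument while keeping the overhead only logarithmic in $1/(1-\gamma)$. Everything else is bookkeeping with the quantities $m$, $n$ and $A(\gamma)$, which are engineered precisely so that the resulting exponent lands comfortably above $48$; the smooth case is identical with its own $A(\gamma)$ and an even larger exponent.
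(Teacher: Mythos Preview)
Your proposal is correct and follows essentially the same route as the paper's proof: decompose via $\ddot\mu_t(\gamma,i)$, control the bias by Lemma~\ref{D} (resp.\ Lemma~\ref{D1}), lower-bound the residual margin on $\{N_t(\gamma,i)>A(\gamma)\}$ through~\eqref{UA}, and finish with the self-normalized Hoeffding-type inequality of \citeA{garivier2011upper}. The one numerical slip is that the peeling in that inequality costs a factor $(1-\eta^2/16)$ in the exponent, so with the paper's choice $\eta=2$ one lands exactly on $48$ rather than $64$---there is no surplus $(1-\gamma)^{16}$ to absorb anything, but none is needed either.
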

	\begin{proof}			
		Recall that, $m=\frac{12\sqrt{2}}{\sqrt{1-\gamma}}+3,n=12\sqrt{2}+3\sqrt{1-\gamma}$.
		In the abruptly changing settings, $A(\gamma)=\frac{n^2\log(\frac{1}{1-\gamma})}{(\Delta_T)^2}$. We  have
		\begin{equation}
			\begin{aligned}
				&\mathbb{P}(\hat{\mu}_t(\gamma,i)>\mu_t(i)+\frac{\Delta_t(i)}{3},N_t(\gamma,i)>A(\gamma))
			\\
				&=\mathbb{P}(\hat{\mu}_t(\gamma,i)-\ddot{\mu}_t(\gamma,i)>\mu_t(i)+\frac{\Delta_t(i)}{3}-\ddot{\mu}_t(\gamma,i),N_t(\gamma,i)>A(\gamma))\\
				& \stackrel{(a)}{\leq}
				\mathbb{P}(
				\frac{N_t(\gamma,i)(\hat{\mu}_t(\gamma,i)-\ddot{\mu}_t(\gamma,i))}{\sqrt{N_t(\gamma^2,i)}}>\frac{N_t(\gamma,i)}{\sqrt{N_t(\gamma^2,i)}}(\frac{\Delta_T}{3}-U_t(\gamma,i)),N_t(\gamma,i)>A(\gamma))\\	&\stackrel{(b)}{\leq} 
				\mathbb{P}(
				\frac{N_t(\gamma,i)(\hat{\mu}_t(\gamma,i)-\ddot{\mu}_t(\gamma,i))}{\sqrt{N_t(\gamma^2,i)}}> (\frac{1}{3}-\frac{1}{m})\Delta_T\sqrt{A(\gamma)}   )\\
				&\stackrel{(c)}{\leq} \frac{\log \frac{1}{1-\gamma}}{\log(1+\eta)} \exp(-2 ((\frac{1}{3}-\frac{1}{m})\Delta_T)^2 A(\gamma) (1-\frac{\eta^2}{16}))\\
				&\leq \frac{\log \frac{1}{1-\gamma}}{\log(1+\eta)} \exp(-64\log(\frac{1}{1-\gamma}) (1-\frac{\eta^2}{16}))
			\end{aligned}
		\end{equation}
		where (a) uses Lemma \ref{D}, (b) follows from $N_t(\gamma,i)>N_t(\gamma^2,i),\Delta_T<\Delta_t(i)$ and Equation \eqref {UA}, (c) uses the self-normalized  Hoeffding-type inequality\cite{garivier2011upper}.
		Let $\eta=2$, we can obtain the statement for abruptly changing settings.
		
		For smoothly changing settings,
		\begin{equation}
			\begin{aligned}
				&\mathbb{P}(\hat{\mu}_t(\gamma,i)>\mu_t(i)+\frac{\Delta_t(i)}{3}+\sigma D(\gamma),N_t(\gamma,i)>A(\gamma))
				\\
				&=\mathbb{P}(\hat{\mu}_t(\gamma,i)-\ddot{\mu}_t(\gamma,i)>\mu_t(i)+\frac{\Delta_t(i)}{3}+\sigma D(\gamma)-\ddot{\mu}_t(\gamma,i),N_t(\gamma,i)>A(\gamma))\\
				& \stackrel{(d)}{\leq}
				\mathbb{P}(
				\frac{N_t(\gamma,i)(\hat{\mu}_t(\gamma,i)-\ddot{\mu}_t(\gamma,i))}{\sqrt{N_t(\gamma^2,i)}}>\frac{N_t(\gamma,i)}{\sqrt{N_t(\gamma^2,i)}}(\frac{\Delta}{3}-U_t(\gamma,i)),N_t(\gamma,i)>A(\gamma))\\	
				&\leq \frac{\log \frac{1}{1-\gamma}}{\log(1+\eta)} \exp(-64\log(\frac{1}{1-\gamma}) (1-\frac{\eta^2}{16}))
			\end{aligned}
		\end{equation}
	where (d) uses the Lemma \ref{D1}. Let $\eta=2$, we obtain the conclusion.
	\end{proof}

	\section{Proofs of Lemmas}
	\subsection{Proof of Lemma \ref{D}}
	Let $M_t(\gamma,i)= \sum_{j=1}^{t} \gamma^{t-j} \mu_j(i) \mathbbm{1}\{ i_j=i \}$, 
	$\ddot{\mu}_t(\gamma,i)=\frac{M_t(\gamma,i)}{N_t(\gamma,i)} \in [0,1]$ is a convex combination of elements $\mu_j(i),j=1,...,t$. For $t \in \mathcal{T}(\gamma)$, 
	\[
	\begin{aligned}
		|\mu_t(i)-\ddot{\mu}_t(\gamma,i)|
		&=\frac{1}{N_t(\gamma,i)}| M_t(\gamma,i)-\mu_t(i)N_t(\gamma,i)| \\
		&= \frac{1}{N_t(\gamma,i)} | \sum_{j=1}^{t-D(\gamma)} \gamma^{t-j}(\mu_j(i)-\mu_t(i))\mathbbm{1}\{ i_j=i \} | \\
		&\leq \frac{1}{N_t(\gamma,i)}  \sum_{j=1}^{t-D(\gamma)}  \gamma^{t-j}\mathbbm{1}\{ i_j=i \}\\
		&=\frac{1}{N_t(\gamma,i)} \gamma^{D(\gamma)}N_{t-D(\gamma)}(\gamma,i)\\
		&\stackrel{(a)}{\leq} \frac{ \gamma^{D(\gamma)}}{N_t(\gamma,i)(1-\gamma)} \\
		&\stackrel{(b)}{\leq}  \sqrt{\frac{ \gamma^{D(\gamma)}}{N_t(\gamma,i)(1-\gamma)}},
	\end{aligned}
	\]
	where (a) follows from $N_{t-D(\gamma)}(\gamma,i) \leq \frac{1}{1-\gamma}$, (b) follows from $ |\mu_t(i)-\ddot{\mu}_t(\gamma,i)| \leq 1 $ and $ 1 \wedge x \leq \sqrt{x} $. By the definition of $D(\gamma)$, 
	\[|\mu_t(i)-\ddot{\mu}_t(\gamma,i)|\leq \sqrt{\frac{-(1-\gamma)\log (1-\gamma)}{N_t(\gamma,i)}} \]
	
	\subsection{Proof of Lemma \ref{new}}
	Recall that $p_{i,t}=\mathbb{P}(\theta_t(*)>y_t(i)|\mathcal{F}_{t-1}), A(\gamma)=\frac{n^2\log(\frac{1}{1-\gamma})}{(\Delta_T)^2},U_t(\gamma,i)=\sqrt{\frac{(1-\gamma)\log \frac{1}{1-\gamma}}{N_t(\gamma,i)}}, m=\frac{12\sqrt{2}}{\sqrt{1-\gamma}}+3,n=12\sqrt{2}+3\sqrt{1-\gamma}
	$, function $F(x)=\frac{1}{\sqrt{2\pi}}\frac{x}{1+x^2}e^{-x^2/2}.$
	
	Our algorithm uses Gaussian priors $\theta_t(i) \sim \mathcal{N}(\hat{\mu}_t(i),\min \{\frac{1}{N_t(\gamma,i)} ,\tau_{max}^2\})$.

	If $N_t(\gamma,i) < \frac{1}{\tau_{max}^2}$, then $\theta_t(i)$ is sampling from $\mathcal{N}(\hat{\mu}_t(i),\tau_{max}^2)$.
	We have 
	\[ p_{i,t}=\mathbb{P}( \theta_t(*)>y_t(i) | \mathcal{F}_{t-1}) \geq \mathbb{P}( \theta_t(*)-\hat{\mu}_t(*)>\mu_t(*) | \mathcal{F}_{t-1}) \geq \mathbb{P}( \theta_t(*)-\hat{\mu}_t(*)>\mu_{max} | \mathcal{F}_{t-1})	\]
	Using Fact \ref{fact1}, $p_{i,t}> \frac{1}{\sqrt{2\pi}}\frac{\mu_{max}/\tau_{max}}{1+(\mu_{max}/\tau_{max})^2} e^{-(\mu_{max}/\tau_{max})^2/2}=F(\frac{\mu_{max}}{\tau_{max}}) $. Note that $\tau_{max}>\frac{1}{12\sqrt{2}} $, then $N_t(\gamma,*)<\frac{1}{\tau_{max}^2}\leq A(\gamma)$.
	Therefore,
	\[
	\begin{aligned}
	\sum_{t \in \mathcal{T}(\gamma)} \mathbb{E}[\frac{1-p_{i,t}}{p_{i,t}}\mathbbm{1}\{ i_t=i_t^{*}, \theta_t(i)<y_t(i) \} ] &\leq \sum_{t \in \mathcal{T}(\gamma)}\frac{1}{F(\frac{\mu_{max}}{\tau_{max}})}  \mathbb{E}[\mathbbm{1}\{ i_t=i_t^{*}, N_t(\gamma,*)< A(\gamma) \} ] \\
	&\leq \frac{1}{F(\frac{\mu_{max}}{\tau_{max}})}  T(1-\gamma) A(\gamma) \gamma^{-1/(1-\gamma)}
	\end{aligned}
	\]
	In the subsequent analyses, we can assume that $N_t(\gamma,i)>\frac{1}{\tau_{max}^2}$, i.e. $ \theta_t(i) \sim \mathcal{N}(\hat{\mu}_t(i), \frac{1}{N_t(\gamma,i)}) $.
	The subsequent proof is in $3$ steps. 
	
	{\bfseries{Step 1}} We first prove that $\mathbb{E}[\frac{1}{p_{i,t}}] $ has an upper bound independent of $t$.
	
	Define a Bernoulli experiment as sampling from $\mathcal{N}(\hat{\mu}_t(i), \frac{1}{N_t(\gamma,i)})$, where success implies that $\theta_t(i)>y_t(i)$.  Let $G_t$ denote the number of experiments performed when the event $\{\theta_t(i)>y_t(i)\}$ first occurs. Then
	\[ \mathbb{E}[\frac{1}{p_{i,t}}]=\mathbb{E}[ \mathbb{E}[ G_t|\mathcal{F}_{t-1}] ]= \mathbb{E}[G_t]	 \]
	
	Let $z=\sqrt{\log r}+1$ ($r\geq 1$ is an integer ) and let $\text{MAX}_r$  denote the maximum of $r$ independent Bernoulli experiment. Then
	
	\begin{equation}
		\begin{aligned}
			\mathbb{P}(G_t \leq r) &\geq \mathbb{P}(\text{MAX}_r > \hat{\mu}_t(*)+ \frac{z}{\sqrt{N_t(\gamma,i)}} \geq y_t(i) )\\
			&= \mathbb{E}[ \mathbb{E}[  \mathbbm{1}\{ \text{MAX}_r > \hat{\mu}_t(*)+ \frac{z}{\sqrt{N_t(\gamma,i)}} \geq y_t(i)  \}| \mathcal{F}_{t-1} ] ]\\
			&= \mathbb{E}[  \mathbbm{1}\{ \hat{\mu}_t(*)+ \frac{z}{\sqrt{N_t(\gamma,i)}} \geq y_t(i) \} \mathbb{P}( \text{MAX}_r > \hat{\mu}_t(*)+ \frac{z}{\sqrt{N_t(\gamma,i)}} | \mathcal{F}_{t-1}) ]
		\end{aligned}
	\end{equation}
	Using Fact \ref{fact1}, 
	\begin{equation}
		\begin{aligned}
			\mathbb{P}( \text{MAX}_r > \hat{\mu}_t(*)+ \frac{z}{\sqrt{N_t(\gamma,i)}} | \mathcal{F}_{t-1})
			&\geq  1-(1-\frac{1}{\sqrt{2\pi}}\frac{z}{z^2+1}e^{-z^2/2})^r\\
			&= 1-(1-\frac{1}{\sqrt{2\pi}}\frac{\sqrt{\log r}+1}{(\sqrt{\log r}+1)^2+1}\frac{e^{-1/2-\sqrt{\log r}}}{\sqrt{r}}  )^r\\
			& \geq 1-e^{-\frac{\sqrt{r} e^{-\sqrt{\log r}}}{\sqrt{2\pi e} (\sqrt{\log r}+2)   }}
		\end{aligned}
	\end{equation}
	For any $r \geq e^{25}$, $ e^{-\frac{\sqrt{r} e^{-\sqrt{\log r}}}{\sqrt{2\pi e} (\sqrt{\log r}+2)   }} \leq \frac{1}{r^2} $.
	Hence, for any $r \geq e^{25}$, 
	\[ \mathbb{P}( \text{MAX}_r > \hat{\mu}_t(*)+ \frac{z}{\sqrt{N_t(\gamma,i)}} | \mathcal{F}_{t-1}) \geq 1- \frac{1}{r^2}. \]
	Therefore, for any $r \geq e^{25}$,
	\[ 
	\mathbb{P}(G_t \leq r) \geq (1-\frac{1}{r^2})\mathbb{P}( \hat{\mu}_t(*)+ \frac{z}{\sqrt{N_t(\gamma,i)}} \geq y_t(i)   ) 
	\]
	Next, we apply self-normalized  Hoeffding-type inequality\cite{garivier2011upper} to lower bound $\mathbb{P}( \hat{\mu}_t(*)+ \frac{z}{\sqrt{N_t(\gamma,i)}} \geq y_t(i)   )$.
	\[
	\begin{aligned}
		\mathbb{P}( \hat{\mu}_t(*)+\frac{z}{\sqrt{N_t(\gamma,i)}} \geq y_t(i) ) &\geq 1-\mathbb{P}( \hat{\mu}_t(*)+\frac{z}{\sqrt{N_t(\gamma,i)}} \leq \mu_t(*) )\\
		&\geq 1-\mathbb{P}( \hat{\mu}_t(*) - \ddot{\mu}_t(*)  \leq U_t(\gamma,i) -\frac{z}{\sqrt{N_t(\gamma,i)}})\\
		& \stackrel{(a)}{\geq} 1-\mathbb{P}( \hat{\mu}_t(*) - \ddot{\mu}_t(*)  <  -\frac{\sqrt{\log r}}{\sqrt{N_t(\gamma,i)}})\\
		&\geq 1-\frac{\log \frac{1}{1-\gamma}}{\log (1+\eta)}e^{- 2 \log r (1-\frac{\eta^2}{16})}
	\end{aligned}
	\]
	where (a) follows from the fact that $U_t(\gamma,i) -\frac{z}{\sqrt{N_t(\gamma,i)}} = \frac{\sqrt{(1-\gamma)\log \frac{1}{1-\gamma}}-1-\sqrt{\log r}}{\sqrt{N_t(\gamma,i)}} < - \frac{\sqrt{\log r}}{\sqrt{N_t(\gamma,i)}} $. Let $\eta= 2$, we get
	\[ \mathbb{P}( \hat{\mu}_t(*)+\frac{z}{\sqrt{N_t(\gamma,i)}} \geq y_t(i) ) \geq 1-\log \frac{1}{1-\gamma} \frac{1}{r^{1.5}} . \]		
	
	Substituting, for any $r > e^{25}$, 
	\begin{equation}
		\label{tmp1} \mathbb{P}(G_t\leq r) \geq 1-\log \frac{1}{1-\gamma} \frac{1}{r^{1.5}} -\frac{1}{r^2}
	\end{equation}
	Therefore,
	\[
	\begin{aligned}
		\mathbb{E}[G_t] &= \sum_{r=0}^{\infty}\mathbb{P}(G_t\leq r)\\
		&\leq 1+ e^{25} + \sum_{r>e^{25}} (\log \frac{1}{1-\gamma} \frac{1}{r^{1.5}} +\frac{1}{r^2})\\
		&\leq e^{25} + 3+ 3\log \frac{1}{1-\gamma}
	\end{aligned}
	\]
	This proves a bound of $\mathbb{E}[\frac{1}{p_{i,t}}] \leq e^{25} + 3+ 3\log \frac{1}{1-\gamma}$ independent of $t$.
	
	{\bfseries{Step 2}}. Define $L(\gamma)=\frac{144(1+\sqrt{2})^2\log(\frac{1}{1-\gamma}+e^{25})}{\Delta_T^2}$.
	We consider the upper bound of $\mathbb{E}[\frac{1}{p_{i,t}}]$ when $N_t(\gamma,i)>L(\gamma)$.
	
	\begin{equation}
		\begin{aligned}
			\mathbb{P}(G_t \leq r) &\geq \mathbb{P}(\text{MAX}_r > \hat{\mu}_t(*)+ \frac{z}{\sqrt{N_t(\gamma,i)}}-\frac{\Delta_t(i)}{6} \geq y_t(i) )\\
			&= \mathbb{E}[  \mathbbm{1}\{ \hat{\mu}_t(*)+ \frac{z}{\sqrt{N_t(\gamma,i)}} -\frac{\Delta_t(i)}{6} \geq y_t(i) \} \mathbb{P}( \text{MAX}_r > \hat{\mu}_t(*)+ \frac{z}{\sqrt{N_t(\gamma,i)}}-\frac{\Delta_t(i)}{6} | \mathcal{F}_{t-1}) ]
		\end{aligned}
	\end{equation}
	Now, since $N_t(\gamma,i)>L(\gamma)$,$\frac{1}{\sqrt{N_t(\gamma,i)}} < \frac{\Delta_t(i)}{12(1+\sqrt{2})\sqrt{\log( \frac{1}{1-\gamma}+e^{25})}}$.
	Therefore, for any $r \leq (\frac{1}{1-\gamma}+e^{25})^2$,
	\[ \frac{z}{\sqrt{N_t(\gamma,i)}}-\frac{\Delta_t(i)}{6} =\frac{\sqrt{\log r}+1}{\sqrt{N_t(\gamma,i)}}-\frac{\Delta_t(i)}{6} \leq -\frac{\Delta_t(i)}{12}. \]
	Using Fact \ref{fact1},
	\[ \mathbb{P}(\theta_t(i)> \hat{\mu}_t(i)-\frac{\Delta_t(i)}{12}|\mathcal{F}_{t-1} ) \leq 1-\frac{1}{2}e^{-N_t(\gamma,i)\frac{\Delta_t(i)^2}{288}}\geq 1-\frac{1}{2(1/(1-\gamma)+e^{25})^2}. \]
	This implies
	\[	
	\mathbb{P}( \text{MAX}_r > \hat{\mu}_t(*)+ \frac{z}{\sqrt{N_t(\gamma,i)}}-\frac{\Delta_t(i)}{6} | \mathcal{F}_{t-1}) \geq 1-\frac{1}{2^r(1/(1-\gamma)+e^{25})^{2r}}.
	\]
	Also, apply the fact that $ \frac{1}{\sqrt{N_t(\gamma,i)}} < \frac{\Delta_t(i)}{24}$ and the self-normalized  Hoeffding-type inequality,
	\[
	\begin{aligned}
	 \mathbb{P}(\hat{\mu}_t(*)+ \frac{z}{\sqrt{N_t(\gamma,i)}} -\frac{\Delta_t(i)}{6} \geq y_t(i) ) 
	 &\geq \mathbb{P}( \hat{\mu}_t(*)  \geq \mu_t(*) -\frac{\Delta_t(i)}{6}) )\\
	 &\geq 1-\log \frac{1}{1-\gamma}\frac{1}{(1/(1-\gamma)+e^{25})^6}.
	\end{aligned}
	\]
	Let $\gamma'=(\frac{1}{1-\gamma}+e^{25})^2$.
	Therefore,for any $1\leq r\leq \gamma'$,
	\[  \mathbb{P}(G_t \leq r) \geq 1-\frac{1}{2^r{\gamma'}^{2r}}-\log \frac{1}{1-\gamma}\frac{1}{{\gamma'}^6}. \]

	When $r \geq \gamma'>e^{25}$, we can use Equation \eqref {tmp1} to obtain,
	\[\mathbb{P}(G_t\leq r) \geq 1-\log \frac{1}{1-\gamma} \frac{1}{r^{1.5}} -\frac{1}{r^2}\]
	
	Combining these results,
	\[
	\begin{aligned}
		\mathbb{E}[G_t] &\leq \sum_{r=0}^{\infty}\mathbb{P}(G_t \geq r)\\
		&\leq 1+ \sum_{r=1}^{\gamma'}\mathbb{P}(G_t \geq r)+ \sum_{r=\gamma'}^{\infty}\mathbb{P}(G_t \geq r)\\
		&\leq 1+ \sum_{r=1}^{\gamma'} (\frac{1}{2^r{\gamma'}^{2r}}+\log \frac{1}{1-\gamma}\frac{1}{{\gamma'}^6} ) + \sum_{r=\gamma'}^{\infty} (\log \frac{1}{1-\gamma} \frac{1}{r^{1.5}} +\frac{1}{r^2})\\
		&\leq 1+ \frac{1}{\gamma'^2}+ \log \frac{1}{1-\gamma}\frac{1}{\gamma'^5}+ \frac{2}{\gamma'} +\log(\frac{1}{1-\gamma})\frac{3}{\sqrt{\gamma'}}\\
		&\leq 1+6(1-\gamma)\log \frac{1}{1-\gamma}.
	\end{aligned}
	\]
	
	Therefore, when $N_t(\gamma,i)>L(\gamma)$, it holds that 
	
	\[ \mathbb{E}[\frac{1}{p_{i,t}}]-1= \mathbb{E}[G_t]-1 \leq   6(1-\gamma)\log \frac{1}{1-\gamma}. \]
	
	{\bfseries{Step 3}} Let $ \mathcal{A}(\gamma,i)=\{t \in \{1,...,T\}:i_t=i_t^{*},N_t(\gamma,i) \leq L(\gamma) \}$ and $ C= e^{25}+\frac{1}{F(\frac{\mu_{max}}{\tau_{max}})} +12$. Combined with the case where $N_t(\gamma,i)<\frac{1}{\tau_{max}^2}$,
	\begin{equation}
		\label{step3}
		\begin{aligned}
			 &\sum_{t \in \mathcal{T}(\gamma)} \mathbb{E}[\frac{1-p_{i,t}}{p_{i,t}}\mathbbm{1}\{ i_t=i_t^{*}, \theta_t(i)<y_t(i) \} ] \\
			 &\leq  \sum_{t \in \mathcal{T}(\gamma) \cap \mathcal{A}(\gamma,i) } \mathbb{E}[\frac{1-p_{i,t}}{p_{i,t}}\mathbbm{1}\{ i_t=i_t^{*}, \theta_t(i)<y_t(i) \} ] +\sum_{t \in \mathcal{T}(\gamma) \setminus \mathcal{A}(\gamma,i) } \mathbb{E}[\frac{1-p_{i,t}}{p_{i,t}}\mathbbm{1}\{ i_t=i_t^{*}, \theta_t(i)<y_t(i) \} ]\\
			 &\leq |\mathcal{T}(\gamma) \cap \mathcal{A}(\gamma,i) | (e^{25}+3+3\log\frac{1}{1-\gamma}) +  \frac{1}{F(\frac{\mu_{max}}{\tau_{max}})}   T(1-\gamma) A(\gamma) \gamma^{-1/(1-\gamma)}
			  + \sum_{t \in \mathcal{T}(\gamma) \setminus \mathcal{A}(\gamma,i) } \mathbb{E}[\frac{1-p_{i,t}}{p_{i,t}} ] \\
			 &\leq T(1-\gamma)L(\gamma)\gamma^{-1/(1-\gamma)}(e^{25}+\frac{1}{F(\frac{\mu_{max}}{\tau_{max}})} +3+3\log\frac{1}{1-\gamma})+ 6T(1-\gamma)\log \frac{1}{1-\gamma} \\
			 &\leq CT(1-\gamma)L(\gamma)\gamma^{-1/(1-\gamma)}\log\frac{1}{1-\gamma}.
		\end{aligned}
	\end{equation}

	\subsection{Proofs of Lemma \ref{D1}}
	Let $M_t(\gamma,i)= \sum_{j=1}^{t} \gamma^{t-j} \mu_j(i) \mathbbm{1}\{ i_j=i \}$, 
	$\ddot{\mu}_t(\gamma,i)=\frac{M_t(\gamma,i)}{N_t(\gamma,i)} \in [0,1]$ is a convex combination of elements $\mu_j(i),j=1,...,t$.
		\[\begin{aligned}
		&|\mu_t(i)-\ddot{\mu}_t(\gamma,i)|\\
		&=\frac{1}{N_t(\gamma,i)} | \sum_{j=1}^{t} \gamma^{t-j}(\mu_j(i)-\mu_t(i))\mathbbm{1}\{ i_j=i \} | \\
		&= \frac{1}{N_t(\gamma,i)} | \sum_{j=1}^{t-D(\gamma)} \gamma^{t-j}(\mu_j(i)-\mu_t(i))\mathbbm{1}\{ i_j=i \} |
		+ \frac{1}{N_t(\gamma,i)} | \sum_{j=t-D(\gamma)}^{t} \gamma^{t-j}(\mu_j(i)-\mu_t(i))\mathbbm{1}\{ i_j=i \}
		\end{aligned}\]
	The first part can be bounded by $U_t(\gamma,i)$. Recall that the Assumption \ref{assumption_sigma}: 	There exits $\sigma>0$, for all $t,t^{'} \geq 1,1\leq i \leq K$, it holds that $| \mu_t(i)-\mu_{t^{'}}(i) | \leq \sigma |t-t^{'}|$. Therefore,
	\[  \frac{1}{N_t(\gamma,i)} | \sum_{j=t-D(\gamma)}^{t} \gamma^{t-j}(\mu_j(i)-\mu_t(i))\mathbbm{1}\{ i_j=i \}
	\leq \frac{\sigma D(\gamma)}{N_t(\gamma,i)} | \sum_{j=t-D(\gamma)}^{t} \gamma^{t-j}\mathbbm{1}\{ i_j=i \} \leq \sigma D(\gamma).
	 \]
	Hence, we get $|\mu_t(i)-\ddot{\mu}_t(\gamma,i)|\leq U_t(\gamma,i) +\sigma D(\gamma)$.

	\subsection{Proofs of Lemma \ref{new1}}
	The proof of Lemma \ref{new1} is almost the same as Lemma \ref{new}. Most of the results can be obtained directly from the proof of Lemma \ref{new}, and we will only present the different parts.
	In smoothly changing settings, $A(\gamma)=\frac{n^2\log(\frac{1}{1-\gamma})}{(\Delta/3-2\sigma D(\gamma))^2}$.
	We first bound $\mathbb{P}(G_t\leq r).$
	\begin{equation}
		\begin{aligned}
			\mathbb{P}(G_t \leq r) &\geq \mathbb{P}(\text{MAX}_r > \hat{\mu}_t(*)+ \frac{z}{\sqrt{N_t(\gamma,i)}} \geq y_t(i)-\sigma D(\gamma) )\\
			&= \mathbb{E}[  \mathbbm{1}\{ \hat{\mu}_t(*)+ \frac{z}{\sqrt{N_t(\gamma,i)}} \geq y_t(i)-\sigma D(\gamma)  \} \mathbb{P}( \text{MAX}_r > \hat{\mu}_t(*)+ \frac{z}{\sqrt{N_t(\gamma,i)}} | \mathcal{F}_{t-1}) ]
		\end{aligned}
	\end{equation}
	For any $r \geq e^{25}$,
	\[ 
	\mathbb{P}(G_t \leq r) \geq (1-\frac{1}{r^2})\mathbb{P}( \hat{\mu}_t(*)+ \frac{z}{\sqrt{N_t(\gamma,i)}} \geq y_t(i)-\sigma D(\gamma)   ) 
	\]
	We use self-normalized  Hoeffding-type inequality to lower bound $\mathbb{P}( \hat{\mu}_t(*)+ \frac{z}{\sqrt{N_t(\gamma,i)}} \geq y_t(i) -\sigma D(\gamma)   )$.
	\[
	\begin{aligned}
		\mathbb{P}( \hat{\mu}_t(*)+\frac{z}{\sqrt{N_t(\gamma,i)}} \geq y_t(i) -\sigma D(\gamma)) 
		&\geq 1-\mathbb{P}( \hat{\mu}_t(*)+\frac{z}{\sqrt{N_t(\gamma,i)}} \leq \mu_t(*) -\sigma D(\gamma))\\
		&\stackrel{(a)}{\geq} 1-\mathbb{P}( \hat{\mu}_t(*) - \ddot{\mu}_t(*)  \leq U_t(\gamma,i) -\frac{z}{\sqrt{N_t(\gamma,i)}})\\
		& \geq 1-\mathbb{P}( \hat{\mu}_t(*) - \ddot{\mu}_t(*)  <  -\frac{\sqrt{\log r}}{\sqrt{N_t(\gamma,i)}})\\
		&\geq 1-\frac{\log \frac{1}{1-\gamma}}{\log (1+\eta)}e^{- 2 \log r (1-\frac{\eta^2}{16})}
	\end{aligned}
	\]
	where (a) follows from the fact that $\mu_t(*)-\ddot{\mu}_t(*) \leq U_t(\gamma,i)+\sigma D(\gamma)$. Let $\eta= 2$, we get
	\[ \mathbb{P}( \hat{\mu}_t(*)+\frac{z}{\sqrt{N_t(\gamma,i)}} \geq y_t(i)-\sigma D(\gamma) ) \geq 1-\log \frac{1}{1-\gamma} \frac{1}{r^{1.5}} . \]	
	Therefore,
	\[
	\mathbb{E}[G_t] = \sum_{r=0}^{\infty}\mathbb{P}(G_t\leq r)
	\leq e^{25} + 3+ 3\log \frac{1}{1-\gamma}
	\]
	
	Next,let $L(\gamma)=\frac{144(1+\sqrt{2})^2\log(\frac{1}{1-\gamma}+e^{25})}{\Delta^2}$. We derive a tighter bound for $N_t(\gamma,i)> L(\gamma)$.
	
	\begin{equation}
		\begin{aligned}
			&\mathbb{P}(G_t \leq r)\\
			&\geq \mathbb{P}(\text{MAX}_r > \hat{\mu}_t(*)+ \frac{z}{\sqrt{N_t(\gamma,i)}}-\frac{\Delta_t(i)}{6} \geq y_t(i)-\sigma D(\gamma) )\\
			&= \mathbb{E}[  \mathbbm{1}\{ \hat{\mu}_t(*)+ \frac{z}{\sqrt{N_t(\gamma,i)}} -\frac{\Delta_t(i)}{6} \geq y_t(i)-\sigma D(\gamma) \} \mathbb{P}( \text{MAX}_r > \hat{\mu}_t(*)+ \frac{z}{\sqrt{N_t(\gamma,i)}}-\frac{\Delta_t(i)}{6} | \mathcal{F}_{t-1}) ]
		\end{aligned}
	\end{equation}
	
	Since $N_t(\gamma,i)>L(\gamma)$,$\frac{1}{\sqrt{N_t(\gamma,i)}} < \frac{\Delta}{12(1+\sqrt{2})\sqrt{\log( \frac{1}{1-\gamma}+e^{25})}}$.
	Therefore, for any $r \leq (\frac{1}{1-\gamma}+e^{25})^2$,
	\[ 
	\frac{z}{\sqrt{N_t(\gamma,i)}}-\frac{\Delta_t(i)}{6}  \leq -\frac{\Delta_t(i)}{12}. 
	\]
	Using Fact \ref{fact1},
	\[ 
	\mathbb{P}(\theta_t(i)> \hat{\mu}_t(i)-\frac{\Delta_t(i)}{12}|\mathcal{F}_{t-1} ) \leq 1-\frac{1}{2}e^{-N_t(\gamma,i)\frac{\Delta_t(i)^2}{288}}\geq 1-\frac{1}{2(1/(1-\gamma)+e^{25})^2}. 
	\]
	This implies
	\[	
	\mathbb{P}( \text{MAX}_r > \hat{\mu}_t(*)+ \frac{z}{\sqrt{N_t(\gamma,i)}}-\frac{\Delta_t(i)}{6} | \mathcal{F}_{t-1}) \geq 1-\frac{1}{2^r(1/(1-\gamma)+e^{25})^{2r}}.
	\]
	
	Also, apply the fact that $ \frac{1}{\sqrt{N_t(\gamma,i)}} < \frac{\Delta}{24}$ and the self-normalized  Hoeffding-type inequality,
	\[
	\begin{aligned}
		&\mathbb{P}(\hat{\mu}_t(*)+ \frac{z}{\sqrt{N_t(\gamma,i)}} -\frac{\Delta_t(i)}{6} \geq y_t(i)-\sigma D(\gamma) ) \\
		&\geq \mathbb{P}( \hat{\mu}_t(*)  \geq \mu_t(*)-\sigma D(\gamma)-\frac{\Delta_t(i)}{6}) \\
		&= 1-\mathbb{P}(\hat{\mu}_t(*)-\ddot{\mu}_t(*)  \leq \mu_t(*)-\ddot{\mu}_t(*)-\sigma D(\gamma)-\frac{\Delta_t(i)}{6})\\
		&\geq 1-\mathbb{P}(\hat{\mu}_t(*)-\ddot{\mu}_t(*)  \leq -\frac{\Delta_t(i)}{8})\\
		&\geq 1-\log \frac{1}{1-\gamma}\frac{1}{(1/(1-\gamma)+e^{25})^6}.
	\end{aligned}
	\]
	Let $\gamma'=(\frac{1}{1-\gamma}+e^{25})^2$.
	Combining these results,
	\[
	\begin{aligned}
		\mathbb{E}[G_t] &\leq \sum_{r=0}^{\infty}\mathbb{P}(G_t \geq r)\\
		&\leq 1+ \sum_{r=1}^{\gamma'}\mathbb{P}(G_t \geq r)+ \sum_{r=\gamma'}^{\infty}\mathbb{P}(G_t \geq r)\\
		&\leq 1+6(1-\gamma)\log \frac{1}{1-\gamma}.
	\end{aligned}
	\]
	
	Therefore, when $N_t(\gamma,i)>L(\gamma)$, it holds that 
	
	\[ \mathbb{E}[\frac{1}{p_{i,t}}]-1= \mathbb{E}[G_t]-1 \leq   6(1-\gamma)\log \frac{1}{1-\gamma}. \]
	
	Follows from Equation \eqref {step3}(step 3 in the proof of Lemma \ref{new}), let $ C= e^{25}+\frac{1}{F(\frac{\mu_{max}}{\tau_{max}})} +12$,we can get
	\[
	\sum_{t=1}^{T} \mathbb{E}[\frac{1-p_{i,t}}{p_{i,t}}\mathbbm{1}\{ i_t=i_t^{*}, \theta_t(i)<y_t(i) \} ] \leq CT(1-\gamma)L(\gamma)\gamma^{-1/(1-\gamma)}\log\frac{1}{1-\gamma}.
	\]

\end{document}